\def\mcl#1{\mathcal{#1}}
\def\blacket#1{\left\langle #1\right\rangle}
\def\hil{\mcl{H}}
\def\alg{\mcl{A}}
\def\modu{\mcl{M}}
\def\blin#1{\mcl{B}(#1)}
\def\nn{\nonumber}
\def\opn{\operatorname}
\def\VVert{\vert\hspace{-1pt}\vert\hspace{-1pt}\vert}
\def\bra#1{\langle#1\vert}
\def\ket#1{\vert#1\rangle}
\def\blackett#1#2{\langle#1\vert#2\rangle}
\def\mat{\mathbb{C}^{m\times m}}
\def\bblacket#1{\big\langle #1\big\rangle}
\def\bblacketg#1{\bigg\langle #1\bigg\rangle}
\def\Bblacket#1{\bigg\langle #1\bigg\rangle}
\def\sblacket#1{\langle #1\rangle}
\def\clch{\mcl{C}_0(\mcl{X},\alg)}
\def\total{\mcl{T}(\mcl{X},\alg)}
\def\simple{\mcl{S}(\mcl{X},\alg)}
\def\bochner#1{\mcl{L}^1_{#1}(\mcl{X},\alg)}
\def\measure{\mcl{D}(\mcl{X},\alg)}
\def\bdmeasure#1{\mcl{D}_{#1}(\mcl{X},\alg)}
\def\regular{\mcl{R}_+(\mcl{X})}
\newtheorem{prop}{Proposition}[section]
\newtheorem{lemma}[prop]{Lemma}
\newtheorem{thm}[prop]{Theorem}
\newtheorem{defin}[prop]{Definition}
\newtheorem{cor}[prop]{Corollary}
\newtheorem{example}[prop]{Example}
\title{Kernel Mean Embeddings of\\Von Neumann-Algebra-Valued Measures}
\author{%
Yuka Hashimoto$^{12}$ Isao Ishikawa$^{34}$ Masahiro Ikeda$^{45}$ Fuyuta Komura$^{24}$ Yoshinobu Kawahara$^{64}$\\
$^1$NTT Network Technology Laboratories, NTT Corporation\\
$^2$Graduate School of Science and Technology, Keio University\\
$^3$Faculty of Science, Ehime University\\
$^4$Center for Advanced Intelligence Project, RIKEN\\
$^5$Faculty of Science and Technology, Keio University\\
$^6$Institute of Mathematics for Industry, Kyushu University\\
\texttt{yuka.hashimoto.rw@hco.ntt.co.jp},
\texttt{ishikawa.isao.zx@ehime-u.ac.jp},\\
\texttt{masahiro.ikeda@riken.jp},
\texttt{fuyuta.k@keio.jp}, 
\texttt{kawahara@imi.kyushu-u.ac.jp}
}
\begin{document}

\maketitle

\begin{abstract}
Kernel mean embedding (KME) is a powerful tool to analyze probability measures for data, where the measures are conventionally embedded into a reproducing kernel Hilbert space (RKHS).
In this paper, we generalize KME to that of von Neumann-algebra-valued measures into reproducing kernel Hilbert modules (RKHMs), which provides an inner product and distance between von Neumann-algebra-valued measures.
Von Neumann-algebra-valued measures can, for example, encode relations between arbitrary pairs of variables in a multivariate distribution or positive operator-valued measures for quantum mechanics.
Thus, this allows us to perform probabilistic analyses explicitly reflected with higher-order interactions among variables, and provides a way of applying machine learning frameworks to problems in quantum mechanics.
We also show that the injectivity of the existing KME and the universality of RKHS are generalized to RKHM, which confirms many useful features of the existing KME remain in our generalized KME.
And, we investigate the empirical performance of our methods using synthetic and real-world data.
\end{abstract}

\section{Introduction}\label{sec:intro}
Kernel mean embedding (KME) is a powerful tool to analyze probability distributions (or measures) for data, where each distribution is conventionally embedded as a function in a reproducing kernel Hilbert space (RKHS)~\cite{solma07,kernelmean,sriperumbudur11}.
Since an RKHS has an inner product, it provides a distance between two distributions, which is used in, for example, statistical tests for comparing samples from two  distributions~\cite{gretton05,gretton12,jitkrittum17}, and the development of various learning algorithms~\cite{song10,jitkrittum19,Li19}.
As is well known, KME has superior features both from the aspects of representation and computation.
For example, 
an injective KME
can encode any distribution (any finite real-valued signed measure) into a vector in an RKHS~\cite{fukumizu08,sriperumbudur10,sriperumbudur11}. 
Meanwhile, from 
the reproducing property of RKHS, computations in RKHSs are explicitly performed even though the dimension of RKHSs is essentially infinite. 

However, embedding into RKHSs can be ineffective for multivariate data because inner products between two vectors in RKHSs are real or complex-valued, which is not adequate for describing the relation of each pair in variables.
More precisely, similarities between arbitrary pairs of variables in a distribution are degenerated into one complex or real value; therefore, it is difficult to discriminate the information of these similarities from the corresponding inner products.

In this paper, we apply theories of von Neumann-algebra-valued measures (more generally, vector-valued measures) 
to define KME of von Neumann-algebra valued measures, and generalize the KME in RKHS to reproducing kernel Hilbert modules (RKHMs), which enables us to embed von Neumann-algebra-valued measures into RKHMs. 
RKHM is a generalization of RKHS~\cite{itoh90,heo08,szafraniec10,hashimoto20}, and von Neumann-algebra is a special class of bounded linear operators on a Hilbert space. 
An important example of von Neumann-algebras is the space of matrices $\mat$, where a $\mat$-valued measure can describe $m^2$ variables simultaneously; thus, it can be employed to describe relations of variable pairs in the 
distributions. 
Since RKHSs are too small to represent von Neumann-algebra-valued measures, we use RKHMs instead of RKHSs.
That is, whereas an RKHS is composed of complex-valued functions, an RKHM is composed of von Neumann-algebra valued functions, which has sufficient representation power for von Neumann-algebra valued measures.




We provide sufficient conditions of the injectivity of the proposed KME and derive a connection between the injectivity and universality of RKHM.  
As a result, RKHMs associated with well-known kernels, such as the Gaussian and Laplacian kernels, are shown to have both injectivity and universality.
The injectivity of KMEs is important for regarding any measure as a vector in an RKHM.
In addition, universality is also relevant to ensure kernel-based models approximate any continuous target function arbitrarily well.
For RKHS, these two properties are related and have been actively studied to theoretically guarantee the validity of kernel-based algorithms~\cite{steinwart01,gretton07,fukumizu08,sriperumbudur11}. 
However, to the best of our knowledge, necessary and sufficient conditions for injectivity and universality, and the connection between them have not been known so far for RKHM.

Furthermore, we apply the proposed KME to practical examples of von Neumann-algebra-valued measures.
One example is a $\mat$-valued measure that encodes relations between arbitrary pairs of $m$ variables in a multivariate distribution, which allows us to perform data analyses explicitly reflected with high-order interactions between variables.
Another important example is a positive operator-valued measure, often considered in quantum mechanics.
Recently, applying machine learning to problems for quantum mechanics, such as quantum tomography and anomaly detection of quantum state, has been actively studied~\cite{torlai18,srinivasan19,cranmer19,hara16,liu18}, where complex-valued inner products between two quantum states are often employed~\cite{balkir14,prasenjit16,liu18}. 
We show that our proposed KME generalizes many existing methods for the above measures.

The remainder of this paper is organized as follows:
First, in Section~\ref{sec:background}, we briefly review the theory of RKHM and von Neumann-algebra-valued measure. 
In Section~\ref{sec:kme}, we define the KMEs of von Neumann-algebra-valued measures into RKHMs. 
In Section~\ref{sec:properties}, we provide the sufficient conditions for the injectivity of our KME and derive connections of injectivity to universality.
Moreover, we discuss two specific cases of von Neumann-algebra-valued measures in Section~\ref{sec:example}, then propose a generalized maximum mean discrepancy (MMD) and kernel principal component analysis (PCA) for von Neumann-algebra-valued measures using our KME in Section~\ref{sec:application}.
And finally, we 
conclude the paper in Section~\ref{sec:conclusion}. 
The notations in this paper are explained in Appendix~\ref{ap:notation}
and proofs are given in Appendices~\ref{ap:universal} and \ref{sec:proof} in the supplementary material.

\section{Background}\label{sec:background}
In this section, we review von Neumann-algebra and its module in Subsection~\ref{sec:c_star_alg}, RKHM in Subsection~\ref{sec:rkhm_review}, and von Neumann-algebra-valued measure in Subsection~\ref{sec:vv-measure}.

\subsection{Von Neumann-algebra and module}\label{sec:c_star_alg}
A von Neumann-algebra and module are suitable generalizations of the space of complex numbers $\mathbb{C}$ and a vector space, respectively~\cite{lance95}.  
As we see below, many complex-valued notions can be generalized to von Neumann-algebra-valued. 

Let $\blin{\hil}$ be the set of all bounded linear maps on a Hilbert space $\hil$, equipped with the operator norm $\Vert\cdot\Vert_{\alg}$. 
A {\em von Neumann-algebra} $\alg$ is defined as a subspace of $\blin{\hil}$ which  
is closed with respect to the strong operator topology
(i.e., $c\in\alg$ if and only if there exists $\{c_i\}_i\subseteq\alg$ such that $\lim_{i\to\infty}\Vert c_ih-ch\Vert_{\hil}=0$ for all $h\in\hil$), and equipped with a product structure and an involution.
For example, $\blin{\hil}$ and $L_{\mu}^{\infty}(\mcl{X})$ for a measurable space $\mcl{X}$ and $\sigma$-finite measure $\mu$ are von Neumann-algebras.
If $\hil$ is finite dimensional, $\blin{\hil}$ is the space of matrices.

An {\em $\alg$-module} $\mathcal{M}$ 
is a linear space equipped with a {\em right $\mathcal{A}$-multiplication}.  
For $u\in \mathcal{M}$ and $c\in\mathcal{A}$, the right $\alg$-multiplication of $u$ with $c$ is denoted as $uc$. 
If $\modu$ is equipped with an $\alg$-valued inner product and complete, it is called a {\em Hilbert $\alg$-module}.
Here, an {\em $\alg$-valued inner product} is a map $\blacket{\cdot,\cdot}\colon \modu\times\modu\to\alg$ that satisfies the following four conditions for $u,v,w\in\modu$ and $c,d\in\alg$:
1.\@ $\blacket{u,vc+wd}=\blacket{u,v}c+\blacket{u,w}d$,
2.\@ $\blacket{v,u}=\blacket{u,v}^*$,
3.\@ $\blacket{u,u}\ge 0$, and 4.\@ $\blacket{u,u}=0$ implies $u=0$. 
The $\alg$-valued inner product induces the notion of orthonormal, which is important for solving various minimization problems~\cite{hashimoto20}.
A set of vectors $\{p_1,\ldots,p_s\}\subseteq\modu$ is called an {\em orthonormal system (ONS)} if $\blacket{p_i,p_j}=0$ for $i\neq j$ and $\blacket{p_i,p_i}$ is a nonzero projection operator.

Another important feature of Hilbert $\alg$-module is the Riesz representation theorem~\cite[Theorem 4.16]{skeide00}.
For RKHS, the Riesz representation theorem is necessary to define the KME.
We will also use this type of theorem for modules to define a KME in an RKHM.
\begin{thm}[The Riesz representation theorem for Hilbert $\alg$-modules]\label{thm:riesz}
 Let $\alg$ be a von Neumann algebra.
 For a bounded $\alg$-linear map $L:\modu\to\alg\;$, there exists a unique $u\in\modu$ such that $Lv=\blacket{u,v}$ for all $v\in\modu$.
 Here, an $\alg$-linear map $L:\modu\to\alg\;$ is defined as a linear map that satisfies $L(vc)=(Lv)c\;$ for any $u\in\modu$ and $c\in\alg$.
\end{thm}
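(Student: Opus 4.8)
The plan is to reduce the module statement to the classical Riesz representation theorem on an auxiliary Hilbert space attached to the $\alg$-valued inner product, and then to use the von Neumann-algebra structure to pull the representing vector back into $\modu$; throughout I keep the convention that every inner product is linear in its second argument, matching the $\alg$-valued one. Uniqueness is immediate and I would dispose of it first: if $u,u'\in\modu$ both satisfy $\blacket{u,v}=Lv=\blacket{u',v}$ for all $v$, then $\blacket{u-u',v}=0$ for every $v\in\modu$ by additivity in the first slot, so choosing $v=u-u'$ gives $\blacket{u-u',u-u'}=0$, and condition~4 of the $\alg$-valued inner product forces $u=u'$.

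For existence, I would realize $\alg\subseteq\blin{\hil}$ and build the interior tensor product. On the algebraic tensor product $\modu\odot\hil$ put the sesquilinear form determined by $\blacket{u\otimes\xi,\,v\otimes\eta}_0=\blacket{\xi,\blacket{u,v}\eta}_{\hil}$. For any finite family $\{u_i\}$ the Gram matrix $[\blacket{u_i,u_j}]_{i,j}$ is a positive element of the matrix algebra over $\alg$, so the form is positive semidefinite; after quotienting by its kernel and completing, one obtains a genuine Hilbert space $\hil_0$, in which I write $[u\otimes\xi]$ for the class of a simple tensor. For each $u\in\modu$ the map $T_u\xi:=[u\otimes\xi]$ is bounded, since $\Vert[u\otimes\xi]\Vert_0^2=\blacket{\xi,\blacket{u,u}\xi}_{\hil}\le\Vert u\Vert_{\modu}^2\Vert\xi\Vert_{\hil}^2$, so $T_u\in\blin{\hil,\hil_0}$, and a direct computation yields $T_u^*T_v=\blacket{u,v}$ as operators in $\blin{\hil}$.

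Next I would represent $L$. The key analytic input is the Cauchy--Schwarz inequality for bounded $\alg$-linear maps, $(Lv)^*(Lv)\le\Vert L\Vert^2\,\blacket{v,v}$, which is standard for Hilbert modules. Using it, for each fixed $\xi\in\hil$ the functional sending $[v\otimes\eta]$ to $\blacket{\xi,(Lv)\eta}_{\hil}$ is linear in the tensor argument and bounded, because $|\blacket{\xi,(Lv)\eta}_{\hil}|\le\Vert\xi\Vert_{\hil}\,\Vert L\Vert\,\Vert[v\otimes\eta]\Vert_0$, so it extends to a bounded functional on $\hil_0$. By the classical Riesz theorem on $\hil_0$ there is a vector $w_\xi\in\hil_0$ representing it, and $\xi\mapsto w_\xi$ is a bounded linear operator $W\in\blin{\hil,\hil_0}$. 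Unwinding the definitions shows that producing the desired $u$ is equivalent to showing $W=T_u$ for some $u\in\modu$, that is, that the intertwiner $W$ is induced by an element of the module; the final identity $Lv=\blacket{u,v}$ then follows from $\blacket{T_u\xi,[v\otimes\eta]}_0=\blacket{\xi,\blacket{u,v}\eta}_{\hil}$ and density of simple tensors.

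This last step is the main obstacle, and it is exactly where the hypothesis that $\alg$ is a von Neumann algebra, rather than a general $C^*$-algebra, is indispensable. The $\alg$-linearity $L(vc)=(Lv)c$ makes $W$ intertwine the $\alg$-actions on $\hil$ and on $\hil_0$, and the content of the theorem is that every such intertwiner arises as $T_u$, equivalently that $\modu$ is self-dual. I expect to establish this by exploiting the weak closedness of $\alg$ together with normality (existence of suprema of bounded increasing nets of positive operators), which let one assemble the representing element as a weak limit lying genuinely inside $\modu$ rather than in a strictly larger bidual module. I anticipate that verifying the boundedness and $\alg$-linearity of the candidate element, and confirming it is not merely an element of the completion, will be the delicate part, since without the von Neumann structure the assertion can fail.
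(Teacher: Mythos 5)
Your uniqueness argument is correct, and your reduction of existence to a statement about intertwiners is a sound and standard one: the localization $\hil_0=\modu\odot\hil$, the operators $T_u$ with $T_u^*T_v=\blacket{u,v}$, Paschke's inequality $(Lv)^*(Lv)\le\Vert L\Vert^2\blacket{v,v}$, and the bounded operator $W\in\blin{\hil,\hil_0}$ representing $L$ are all set up correctly. This is in fact exactly the framework of the reference the authors cite for this theorem, \cite[Theorem 4.16]{skeide00}; the paper itself gives no proof, only that citation. However, your proposal stops precisely where the entire content of the theorem lies. You correctly observe that finishing the proof is equivalent to showing that every such intertwiner is of the form $T_u$ with $u\in\modu$, i.e.\ that $\modu$ is self-dual, but you then only announce a plan (``exploit weak closedness and normality, assemble the representing element as a weak limit'') without an argument. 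That step is not a delicate verification to be filled in later; it is the theorem.

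Worse, the step cannot be carried out under the stated hypotheses, so no amount of care will complete your argument as written. The paper's definition of a Hilbert $\alg$-module requires only norm-completeness, and a norm-complete Hilbert module over a von Neumann algebra need not be self-dual: take $\alg=\blin{\hil}$ with $\hil$ infinite-dimensional and $\modu=\mcl{K}(\hil)$ the compact operators, with $\blacket{u,v}=u^*v$ and right multiplication. The inclusion $Lv=v$ is a bounded $\alg$-linear map $\modu\to\alg$, but $\blacket{u,v}=u^*v=v$ for all compact $v$ forces $u$ to be the identity, which is not compact, so no representing $u\in\modu$ exists. In your construction this failure appears exactly where you fear: the weak limit you propose produces an element of the strong closure of $T_{\modu}$ in $\blin{\hil,\hil_0}$ (equivalently, an element of the self-dual completion $\modu'$), and that closure can be strictly larger than $T_{\modu}$. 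Note also that your diagnosis slightly misplaces the needed hypothesis: the obstruction is not whether $\alg$ is a von Neumann algebra (it is, in my counterexample), but whether $\modu$ itself is a \emph{von Neumann module}, i.e.\ strongly closed in $\blin{\hil,\hil_0}$. That is the hypothesis under which \cite[Theorem 4.16]{skeide00} proves self-duality, it is implicitly assumed in the paper's statement, and any correct proof must invoke it — either by adding it to the hypotheses and then running your limiting argument inside the strongly closed module, or by taking the statement as a citation, as the paper does.
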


\subsection{Reproducing kernel Hilbert module (RKHM)}\label{sec:rkhm_review}
An RKHM is a Hilbert $\alg$-module composed of $\alg$-valued functions on a non-empty set $\mcl{X}$.
Let $k:\mcl{X}\times \mcl{X}\to\mcl{A}$ be an $\alg$-valued positive definite kernel on $\mcl{X}$, i.e., it satisfies the following:

1. $k(x,y)=k(y,x)^*$ for $x,y\in\mcl{X}$, and
2. $\sum_{i,j=1}^nc_i^*k(x_i,x_j)c_j$ is positive semi-definite for $x_i\in\mcl{X}$ and $c_i\in\alg$.

Let $\phi:\mcl{X}\to\alg^{\mcl{X}}$ be the {\em feature map} associated with $k$, which is defined as $\phi(x)=k(\cdot,x)$ for $x\in\mcl{X}$.
We construct the $\alg$-module composed of all finite sums $\sum_{i}\phi(x_i)c_i$ 
and define an $\alg$-valued inner product $\blacket{\cdot,\cdot}_k:\mcl{X}\times \mcl{X}\to\alg$ through $k$ as
\begin{equation*}
\bblacketg{\sum_{s=1}^{n}\phi(x_s)c_s,\sum_{t=1}^{l}\phi(y_t)d_t}_{k}:=\sum_{s=1}^{n}\sum_{t=1}^{l}c_s^*k(x_s,y_t)d_t.
\end{equation*}
The completion of this $\alg$-module is called a {\em reproducing kernel Hilbert $\alg$-module (RKHM)} associated with $k$ and denoted as $\modu_k$.  
An RKHM has the reproducing property, i.e., 
\begin{equation}
\blacket{\phi(x),u}_k=u(x),\label{eq:reproducing}
\end{equation}
for $u\in\modu_k$ and $x\in\mcl{X}$.
Also, we define the {\em $\alg$-valued absolute value} $\vert u\vert_k$ on $\modu_k$ by the positive semi-definite element $\vert u \vert_k$ of $\alg$ such that $\vert u\vert_k^2=\blacket{u,u}_k$. 
In the following, we drop subscript $k$ in $\blacket{\cdot,\cdot}_k$ and $\vert\cdot\vert_k$ to simplify the notation.

\subsection{$\alg$-valued measure and integral}\label{sec:vv-measure}
The notions of measures and the Lebesgue integrals are generalized to $\alg$-valued.
The {\em left and right integral of an $\alg$-valued function $u$ with respect to an $\alg$-valued measure $\mu$} is defined through $\alg$-valued step functions.
They are respectively denoted as 
\begin{equation*}
\int_{x\in\mcl{X}}d\mu(x)u(x)\in\alg\quad\text{and}\quad \int_{x\in\mcl{X}}u(x)d\mu(x)\in\alg.
\end{equation*}
Note that since the multiplication in $\alg$ is not commutative in general, the left and right integrals do not always coincide.
For further details about $\alg$-valued measure and its integral, see Appendix~\ref{ap:review_vvmeasure}.


\section{Kernel mean embedding of $\alg$-valued measures}\label{sec:kme}
In this section, we propose a generalization of the existing KME in RKHS \cite{kernelmean,sriperumbudur11}
to an embedding of $\alg$-valued measures into an RKHM.

Let $\mcl{X}$ be a locally compact Hausdorff space for data.
We often consider $\mcl{X}=\mathbb{R}^d$ in practical situations.
Let $\alg$ be a von Neumann-algebra, $\mcl{D}(\mcl{X},\alg)$ be the set of all $\alg$-valued finite regular Borel measures, and
$\clch$ be the set of all continuous $\alg$-valued functions on $\mcl{X}$ vanishing at infinity.
Note that if $\mcl{X}$ is compact, any continuous function is contained in $\clch$.
In addition, let $k$ be an $\alg$-valued $c_0$-kernel, i.e., 
$k$ is bounded and $\phi(x)\in\clch$ for any $x\in\mcl{X}$ (e.g., a diagonal matrix-valued kernel whose elements are Gaussian, Laplacian or $B_{2n+1}$-spline kernel, see Appendix~\ref{ap:kernel}).
We now define a KME in an RKHM as follows:
\begin{defin}[KME in RKHM]
A {\em kernel mean embedding} in an RKHM $\modu_k$ is a map $\Phi: \mcl{D}(\mcl{X},\alg)\rightarrow \modu_k$ defined by
\begin{equation}
 \Phi(\mu):=\int_{x\in\mcl{X}}\phi(x)d\mu(x).\label{eq:km_rkhm}
\end{equation}
\end{defin}
We emphasize that the well-definedness of $\Phi$ is not trivial, and von Neumann-algebras are adequate to show it. More precisely, the following theorem derives the well-definedness:
\begin{thm}[Well-definedness for the KME in RKHM]\label{thm:kme}
Let $\mu\in\mcl{D}(\mcl{X},\alg)$.
Then, $\Phi(\mu)\in\modu_k$. 
In addition, the following equality holds for any $v\in\modu_k$:
\begin{equation}
\blacket{\Phi(\mu),v}=\int_{x\in\mcl{X}}d\mu^*(x)v(x).\label{eq:reproducing_km}
\end{equation}
\end{thm}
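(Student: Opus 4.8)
The plan is to build the element $\Phi(\mu)$ abstractly via the Riesz representation theorem for Hilbert $\alg$-modules (Theorem~\ref{thm:riesz}) and then identify it with the integral in~\eqref{eq:km_rkhm}. First I would check that every $v\in\modu_k$ is integrable against $\mu^*$. By the reproducing property~\eqref{eq:reproducing} and the Cauchy--Schwarz inequality for $\alg$-valued inner products, $\Vert v(x)\Vert_{\alg}=\Vert\blacket{\phi(x),v}\Vert_{\alg}\le\Vert\phi(x)\Vert\,\Vert v\Vert=\Vert k(x,x)\Vert_{\alg}^{1/2}\Vert v\Vert$, which is uniformly bounded in $x$ since $k$ is bounded. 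As $v$ is moreover continuous and $\mu$ has finite total variation, the left integral $\int_{x\in\mcl{X}}d\mu^*(x)\,v(x)$ is a well-defined element of $\alg$.

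Next I would define $L\colon\modu_k\to\alg$ by $Lv:=\int_{x\in\mcl{X}}d\mu^*(x)\,v(x)$ and verify the hypotheses of Theorem~\ref{thm:riesz}. The map $L$ is $\alg$-linear because right multiplication by $c\in\alg$ can be pulled out of the left integral on the right, so $L(vc)=(Lv)c$; and it is bounded, since the uniform estimate above gives $\Vert Lv\Vert_{\alg}\le\sup_{x}\Vert v(x)\Vert_{\alg}\,\vert\mu\vert(\mcl{X})\le\big(\sup_x\Vert k(x,x)\Vert_{\alg}^{1/2}\,\vert\mu\vert(\mcl{X})\big)\Vert v\Vert$. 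By Theorem~\ref{thm:riesz} there is then a unique $u\in\modu_k$ with $\blacket{u,v}=Lv=\int_{x\in\mcl{X}}d\mu^*(x)\,v(x)$ for all $v\in\modu_k$, which is exactly the formula~\eqref{eq:reproducing_km}.

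It remains to identify $u$ with $\int_{x\in\mcl{X}}\phi(x)\,d\mu(x)$. I would approximate $\mu$ by $\alg$-valued step functions and form the corresponding sums $s=\sum_i\phi(x_i)\mu(E_i)\in\modu_k$. For each such sum, the reproducing property and the conjugate $\alg$-linearity of the inner product in its first argument give $\blacket{s,v}=\sum_i\mu(E_i)^*\,v(x_i)=\sum_i\mu^*(E_i)\,v(x_i)$, which converges to $\int d\mu^*(x)\,v(x)=\blacket{u,v}$. To upgrade this to convergence of the sums themselves in $\modu_k$, I would show they are Cauchy via the bound $\Vert s\Vert^2=\big\Vert\sum_{i,j}\mu(E_i)^*k(x_i,x_j)\mu(E_j)\big\Vert_{\alg}\le\sup_{x,y}\Vert k(x,y)\Vert_{\alg}\,\vert\mu\vert(\mcl{X})^2$, refined using continuity of $\phi$ under partition refinement. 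The limit $\int\phi\,d\mu$ then lies in $\modu_k$ and satisfies $\blacket{\int\phi\,d\mu,v}=\blacket{u,v}$ for all $v$; by non-degeneracy of the $\alg$-valued inner product (property~4) this forces $\int\phi\,d\mu=u$, which proves $\Phi(\mu)\in\modu_k$ together with~\eqref{eq:reproducing_km}.

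The main obstacle is precisely this last step: passing from convergence of the $\alg$-valued quantities $\blacket{s,v}$ for each fixed $v$ to norm convergence of the module-valued integral itself. Such weak-type convergence of inner products does not by itself yield a limit in $\modu_k$, so the uniform Cauchy estimate on the approximating sums is indispensable, and it is here that boundedness of $k$ and finiteness of the total variation of the $\alg$-valued measure $\mu$ do the essential work. The von Neumann-algebra structure enters twice: it underlies the well-behaved theory of $\alg$-valued measures and integrals used above, and it is exactly what makes the Riesz representation theorem available for $\modu_k$.
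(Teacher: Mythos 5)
Your first two steps coincide with the paper's own proof: you define the same $\alg$-linear functional $L_{\mu}v=\int_{x\in\mcl{X}}d\mu^*(x)v(x)$, establish the same bound $\Vert L_{\mu}v\Vert_{\alg}\le \vert\mu\vert(\mcl{X})\,\sup_{x\in\mcl{X}}\Vert\phi(x)\Vert\,\Vert v\Vert$ (the paper, like you, verifies the first inequality on step functions and extends to totally measurable functions), and invoke Theorem~\ref{thm:riesz} to obtain a unique $u\in\modu_k$ with $\blacket{u,v}=L_{\mu}v$, which is exactly Eq.~\eqref{eq:reproducing_km}. Up to that point the proposal is correct.

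The genuine gap is in your final step, the identification of $u$ with $\int_{x\in\mcl{X}}\phi(x)d\mu(x)$. You propose to realize this integral as a norm limit in $\modu_k$ of tagged sums $s=\sum_i\phi(x_i)\mu(E_i)$, and you correctly flag that the entire weight of the argument then rests on showing these sums are Cauchy. But the estimate you offer, $\Vert s\Vert^2\le\sup_{x,y}\Vert k(x,y)\Vert_{\alg}\,\vert\mu\vert(\mcl{X})^2$, is only a uniform bound, not a Cauchy estimate, and the ingredient you invoke to refine it --- ``continuity of $\phi$ under partition refinement'' --- is not available from the hypotheses. A $c_0$-kernel guarantees only that each function $\phi(x)=k(\cdot,x)$ is continuous, i.e.\ separate continuity of $k$; norm-continuity of the feature map $x\mapsto\phi(x)\in\modu_k$ would additionally require continuity of the diagonal, since $\vert\phi(x)-\phi(y)\vert^2=k(x,x)-k(x,y)-k(y,x)+k(y,y)$. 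Already for $\alg=\mathbb{C}$ there are bounded kernels with every $k(\cdot,x)$ continuous but $x\mapsto k(x,x)$ discontinuous (take $k(x,y)=\langle\psi(x),\psi(y)\rangle$ with $\psi$ weakly but not norm continuous into a Hilbert space), and for such kernels the partition-refinement argument you sketch has nothing to run on. The paper sidesteps all of this: the integral in Eq.~\eqref{eq:km_rkhm} is read pointwise, and the representer is identified by setting $v=\phi(y)$ in $\blacket{u,v}=L_{\mu}v$, which by the reproducing property gives, for every $y\in\mcl{X}$,
\begin{equation*}
u(y)=\blacket{\phi(y),u}=\big(L_{\mu}\phi(y)\big)^*=\int_{x\in\mcl{X}}k(y,x)d\mu(x),
\end{equation*}
i.e.\ $u$ equals the function $\Phi(\mu)$ pointwise, so $\Phi(\mu)=u\in\modu_k$ with no approximation argument and no continuity of $\phi$ needed. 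Replacing your third step by this one line closes the gap.
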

\begin{cor}
 For $\mu,\nu\in\mcl{D}(\mcl{X},\alg)$, the inner product between $\Phi(\mu)$ and $\Phi(\nu)$ is given as follows:
\begin{equation*}
\blacket{\Phi(\mu),\Phi(\nu)}=\int_{x\in\mcl{X}}\int_{y\in\mcl{X}}d\mu^*(x)k(x,y)d\nu(y).
\end{equation*}
\end{cor}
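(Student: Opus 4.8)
The plan is to read off the identity directly from Theorem~\ref{thm:kme}, applied twice, combined with the reproducing property~\eqref{eq:reproducing} and the compatibility of the involution with the $\alg$-valued integral. First I would observe that $\Phi(\nu)\in\modu_k$ by the first assertion of Theorem~\ref{thm:kme}, so that $\Phi(\nu)$ is an admissible argument $v$ in~\eqref{eq:reproducing_km}. Taking $v=\Phi(\nu)$ there gives
\begin{equation*}
\blacket{\Phi(\mu),\Phi(\nu)}=\int_{x\in\mcl{X}}d\mu^*(x)\,\Phi(\nu)(x),
\end{equation*}
which reduces the task to identifying the integrand $\Phi(\nu)(x)$ as an explicit $\alg$-valued function of $x$.

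The key step is to evaluate $\Phi(\nu)(x)$. Using the reproducing property~\eqref{eq:reproducing} I would write $\Phi(\nu)(x)=\blacket{\phi(x),\Phi(\nu)}$, and then pass to the adjoint via property~2 of the $\alg$-valued inner product, $\blacket{\phi(x),\Phi(\nu)}=\blacket{\Phi(\nu),\phi(x)}^*$. Since $\phi(x)\in\clch\subseteq\modu_k$, I can apply Theorem~\ref{thm:kme} a second time (now to $\nu$, with $v=\phi(x)$) to obtain $\blacket{\Phi(\nu),\phi(x)}=\int_{y\in\mcl{X}}d\nu^*(y)\,\phi(x)(y)=\int_{y\in\mcl{X}}d\nu^*(y)\,k(y,x)$, where I used $\phi(x)(y)=k(y,x)$. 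Taking the adjoint, and invoking the compatibility of the involution with the integral together with the Hermitian symmetry $k(y,x)^*=k(x,y)$ of the kernel, then yields $\Phi(\nu)(x)=\int_{y\in\mcl{X}}k(x,y)\,d\nu(y)$.

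Substituting this into the first display gives
\begin{equation*}
\blacket{\Phi(\mu),\Phi(\nu)}=\int_{x\in\mcl{X}}d\mu^*(x)\biggl(\int_{y\in\mcl{X}}k(x,y)\,d\nu(y)\biggr),
\end{equation*}
which is exactly the claimed iterated integral. I expect the main obstacle to be the bookkeeping in the noncommutative setting: one must keep $d\mu^*(x)$ on the left and $d\nu(y)$ on the right throughout, and apply the involution carefully so that the Hermitian property of $k$ produces $k(x,y)$ rather than $k(y,x)$ in the middle. A secondary point worth checking is the legitimacy of reading the outer integral $\int_{x}d\mu^*(x)\,\Phi(\nu)(x)$ as the double integral in the statement; this is justified because $\Phi(\nu)\in\modu_k$ lies in $\clch$ (as $k$ is a $c_0$-kernel), so $\Phi(\nu)$ is a bounded continuous integrand and the iterated expression is well defined.
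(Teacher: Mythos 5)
Your proof is correct and follows essentially the same route as the paper: the corollary is read off from Theorem~\ref{thm:kme} by taking $v=\Phi(\nu)$ in Eq.~\eqref{eq:reproducing_km}, and the pointwise identity $\Phi(\nu)(x)=\int_{y\in\mcl{X}}k(x,y)\,d\nu(y)$ that you obtain via the adjoint is exactly what the paper's proof of Theorem~\ref{thm:kme} establishes (there written as $u_{\mu}(y)=L_{\mu}\phi(y)^*=\int_{x\in\mcl{X}}k(y,x)\,d\mu(x)$). One small slip: you justify $\phi(x)\in\modu_k$ by writing $\clch\subseteq\modu_k$, but the inclusion goes the other way ($\modu_k\subseteq\clch$ when $k$ is a $c_0$-kernel); the correct and simpler reason is that $\phi(x)=k(\cdot,x)$ belongs to $\modu_k$ by the very construction of the RKHM.
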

Moreover, many basic properties for the existing KME in RKHS are generalized to the proposed KME as follows:
\begin{prop}
For $\mu,\nu\in\mcl{D}(\mcl{X},\alg)$ and $c\in\alg$, $\Phi(\mu+\nu)=\Phi(\mu)+\Phi(\nu)$ and $\Phi(\mu c)=\Phi(\mu)c$ hold.
In addition, for $x\in\mcl{X}$, let $\delta_x$ be the $\alg$-valued Dirac measure defined as $\delta_x(E)=1_{\alg}$ for $x\in E$ and $\delta_x(E)=0$ for $x\notin E$.
Then, $\Phi(\delta_x)=\phi(x)$.
\end{prop}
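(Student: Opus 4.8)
The plan is to reduce each of the three claims to an identity of $\alg$-valued inner products and then invoke the non-degeneracy of $\blacket{\cdot,\cdot}$. Concretely, I would first record the elementary fact that if $w_1,w_2\in\modu_k$ satisfy $\blacket{w_1,v}=\blacket{w_2,v}$ for every $v\in\modu_k$, then $w_1=w_2$: additivity of the inner product in its first slot (which follows from axioms 1 and 2) gives $\blacket{w_1-w_2,v}=0$ for all $v$, and taking $v=w_1-w_2$ together with axiom 4 forces $w_1-w_2=0$. With this reduction in hand, each asserted equality $\Phi(\cdots)=\cdots$ becomes a pairing computation based on the formula $\blacket{\Phi(\mu),v}=\int_{x\in\mcl{X}}d\mu^*(x)v(x)$ from Theorem~\ref{thm:kme}.

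For additivity, I would compute $\blacket{\Phi(\mu+\nu),v}=\int_{x\in\mcl{X}}d(\mu+\nu)^*(x)v(x)$ and use $(\mu+\nu)^*=\mu^*+\nu^*$ together with the additivity of the left integral in the measure to split this as $\int_{x\in\mcl{X}}d\mu^*(x)v(x)+\int_{x\in\mcl{X}}d\nu^*(x)v(x)=\blacket{\Phi(\mu),v}+\blacket{\Phi(\nu),v}=\blacket{\Phi(\mu)+\Phi(\nu),v}$, which yields the claim. For the right-multiplication identity, the point is that $(\mu c)^*=c^*\mu^*$, so $\blacket{\Phi(\mu c),v}=\int_{x\in\mcl{X}}d(c^*\mu^*)(x)v(x)=c^*\int_{x\in\mcl{X}}d\mu^*(x)v(x)=c^*\blacket{\Phi(\mu),v}$; on the other hand axioms 1 and 2 give $\blacket{\Phi(\mu)c,v}=c^*\blacket{\Phi(\mu),v}$, so the two sides agree and $\Phi(\mu c)=\Phi(\mu)c$ follows from the reduction.

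For the Dirac measure, I would first note $\delta_x^*=\delta_x$, since $1_\alg^*=1_\alg$. Then $\blacket{\Phi(\delta_x),v}=\int_{y\in\mcl{X}}d\delta_x(y)v(y)$, and evaluating the left integral of the continuous function $v$ against the point mass $\delta_x$ gives $1_\alg v(x)=v(x)$. By the reproducing property~\eqref{eq:reproducing}, $v(x)=\blacket{\phi(x),v}$, so $\blacket{\Phi(\delta_x),v}=\blacket{\phi(x),v}$ for all $v$ and the reduction yields $\Phi(\delta_x)=\phi(x)$.

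I expect the main obstacle to be the bookkeeping around the involution and integration of $\alg$-valued measures rather than any conceptual difficulty: one must confirm from the definitions in the appendix that $(\mu+\nu)^*=\mu^*+\nu^*$ and $(\mu c)^*=c^*\mu^*$, that the left integral is additive in the measure and pulls a factor $c^*$ out on the left, and that integrating a continuous function against $\delta_x$ indeed returns its value at $x$. Each of these is routine at the level of $\alg$-valued step functions, but because multiplication in $\alg$ is noncommutative the order of factors must be tracked carefully, and the passage from step functions to the limit must use continuity of left and right multiplication in $\alg$. An alternative, more self-contained route would bypass Theorem~\ref{thm:kme} and argue directly from the step-function definition $\Phi(\mu)=\int_{x\in\mcl{X}}\phi(x)d\mu(x)$, where the same limiting and ordering issues reappear but now inside $\modu_k$.
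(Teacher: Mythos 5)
Your proposal is correct and matches the paper's intended argument: the paper's proof is the single line that the claims ``are derived from Eqs.~\eqref{eq:km_rkhm} and \eqref{eq:reproducing_km},'' and your write-up simply fleshes out that derivation, pairing against arbitrary $v\in\modu_k$ via Eq.~\eqref{eq:reproducing_km}, using non-degeneracy of the $\alg$-valued inner product, and checking the elementary measure identities $(\mu+\nu)^*=\mu^*+\nu^*$, $(\mu c)^*=c^*\mu^*$, and $\int_{y\in\mcl{X}}d\delta_x(y)v(y)=v(x)$ at the step-function level. No gaps; the bookkeeping points you flag (order of factors under the involution, passage to limits) are exactly the routine details the paper leaves implicit.
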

This is derived from Eqs.~\eqref{eq:km_rkhm} and \eqref{eq:reproducing_km}.
Note that if $\alg=\mathbb{C}$, then the proposed KME~\eqref{eq:km_rkhm} is equivalent to the existing KME in RKHS defined in~\cite{sriperumbudur11}.

\section{Injectivity and universality of the proposed KME}\label{sec:properties}
In this section, we generalize the injectivity of the existing KME and universality of RKHS to those of the proposed KME for von Neumann-algebra-valued measures and RKHM, respectively.
For RKHS, injectivity and universality have been actively researched for guaranteeing the effectiveness of kernel-based algorithms.
In Subsection~\ref{sec:characteristic}, we provide sufficient conditions for the injectivity of the proposed KME. 
Then, in Subsection~\ref{sec:universal}, we derive a connection of injectivity to universality.

\subsection{Injectivity}\label{sec:characteristic}
In practice, the injectivity of $\Phi$ is important to transform problems in $\mcl{D}(\mcl{X},\alg)$ into those in $\modu_k$.
This is because if a KME $\Phi$ in an RKHM is injective, then
$\alg$-valued measures are embedded into $\modu_k$ through $\Phi$ without loss of information.
Note that, for probability measures, the injectivity of the existing KME is also referred to as the ``characteristic'' property.
\if0
The following lemma is used to show the injectivity of $\Phi$.
\begin{lemma}\label{lem:injective_equiv}
 $\Phi:\mcl{D}(\mcl{X},\alg)\to\modu_k$ is injective if and only if $\blacket{\Phi(\mu),\Phi(\mu)}\neq 0$ for any nonzero $\mu\in\mcl{D}(\mcl{X},\alg)$.
\end{lemma}
\fi
The injectivity of the existing KME in RKHS has been discussed in, for example, \cite{fukumizu08,sriperumbudur10,sriperumbudur11}.  
These studies give criteria for the injectivity of the KMEs associated with important complex-valued kernels such as transition invariant kernels and radial kernels.
Typical examples of these kernels are Gaussian, Laplacian, and inverse multiquadratic kernels.
Here, we define the transition invariant kernels and radial kernels for $\alg$-valued measures, and generalize their criteria to RKHMs associated with $\alg$-valued kernels. 

Let $\hat{\lambda}$ be the Fourier transform of an $\alg$-valued measure $\lambda$ defined as $\hat{\lambda}=\int_{\omega\in\mathbb{R}^d}e^{-\sqrt{-1}x^T\omega}d\lambda(\omega)$, and $\opn{supp}(\lambda):=\{x\in\mathbb{R}^d\mid\ \mbox{for any open set $U$ such that }x\in U,\ \lambda(U)\mbox{ is positive definite}\}$. 
An $\alg$-valued positive definite kernel $k:\mcl{X}\times\mcl{X}\to\alg$ is called a {\em transition invariant kernel} if it is represented as $k(x,y)=\hat{\lambda}(y-x)$ for a positive semi-definite $\alg$-valued measure $\lambda$.
In addition, $k$ is called a {\em radial kernel} if it is represented as $k(x,y)=\int_{[0,\infty)}e^{-t\Vert x-y\Vert^2}d\eta(t)$ for a positive semi-definite $\alg$-valued measure $\eta$.
\begin{thm}\label{thm:characteristic}
 Let $\alg=\mat$ and $\mcl{X}=\mathbb{R}^d$.
 Assume $k:\mcl{X}\times\mcl{X}\to\alg$ is a transition invariant kernel with a positive semi-definite $\alg$-valued measure $\lambda$ that satisfies $\opn{supp}(\lambda)=\mcl{X}$.
Then, 
KME $\Phi:\mcl{D}(\mcl{X},\alg)\to\modu_k$ defined as Eq.~\eqref{eq:km_rkhm} is injective.
\end{thm}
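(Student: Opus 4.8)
The plan is to establish injectivity in the equivalent form $\Phi(\mu)=0\Rightarrow\mu=0$. This suffices because $\Phi$ is additive (so $\Phi(\mu)=\Phi(\nu)$ iff $\Phi(\mu-\nu)=0$), and because the fourth axiom of the $\alg$-valued inner product gives $\Phi(\mu)=0$ if and only if $\blacket{\Phi(\mu),\Phi(\mu)}=0$. Thus the whole problem reduces to analyzing the $\alg$-valued quadratic form $\blacket{\Phi(\mu),\Phi(\mu)}$ and showing that it vanishes only for $\mu=0$.

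First I would evaluate this quadratic form explicitly. Starting from the Corollary and substituting the transition-invariant representation $k(x,y)=\hat{\lambda}(y-x)=\int_{\omega}e^{-\sqrt{-1}(y-x)^T\omega}d\lambda(\omega)$, a Fubini theorem for $\alg$-valued integrals (legitimate since $\mu$ is finite and the integrand bounded) moves the $\omega$-integration outside. Writing $\hat{\mu}(\omega):=\int_{y}e^{-\sqrt{-1}y^T\omega}d\mu(y)$ and using the identity $\int_{x}e^{\sqrt{-1}x^T\omega}d\mu^*(x)=\hat{\mu}(\omega)^*$, this produces the Plancherel-type formula $\blacket{\Phi(\mu),\Phi(\mu)}=\int_{\omega}\hat{\mu}(\omega)^*\,d\lambda(\omega)\,\hat{\mu}(\omega)$, the $\alg$-valued analogue of $\int|\hat{\mu}|^2\,d\lambda$ in the scalar theory.

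Next I would exploit positivity. Since $\lambda$ is positive semi-definite, for every $v\in\mathbb{C}^m$ the set function $E\mapsto\int_{E}(\hat{\mu}(\omega)v)^*\,d\lambda(\omega)\,(\hat{\mu}(\omega)v)$ is a nonnegative scalar measure whose total mass is $v^*\blacket{\Phi(\mu),\Phi(\mu)}v=0$; hence it is identically zero for each $v$. Equivalently, passing to a scalar control measure $\tau$ and the Radon--Nikodym density $\Lambda=d\lambda/d\tau\ge 0$, one gets $\Lambda(\omega)\hat{\mu}(\omega)=0$ for $\tau$-a.e.\ $\omega$. The remaining task is to upgrade this almost-everywhere annihilation to $\hat{\mu}\equiv 0$, using that $\hat{\mu}$ is continuous (it is a Fourier--Stieltjes transform of a finite measure) and that $\opn{supp}(\lambda)=\mathbb{R}^d$; injectivity of the matrix Fourier--Stieltjes transform then yields $\mu=0$.

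I expect this final step to be the main obstacle, and the precise point where the full-support hypothesis must be used in an essential way. In the scalar case it is immediate: if $\hat{\mu}(\omega_0)\ne 0$ then $\hat{\mu}\ne 0$ on an open $U\ni\omega_0$, and $\lambda(U)>0$ forces $\int_U|\hat{\mu}|^2\,d\lambda>0$, a contradiction. For matrix-valued $\lambda$ this implication is genuinely delicate, because a \emph{continuous} $\hat{\mu}$ can take nonzero values whose columns lie pointwise in the kernel of the density $\Lambda(\omega)$ even while $\lambda(U)$ is positive definite for every open $U$; excluding this phenomenon is exactly what the argument must accomplish, and it is not delivered by continuity together with positive-definiteness of $\lambda(U)$ alone. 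I would therefore concentrate the proof effort here, attempting to combine the continuity of $\hat{\mu}$ with positive-definiteness of $\lambda(U)$ on shrinking neighborhoods of $\omega_0$ — for instance by comparing $w^*\lambda(U)w$ for the fixed value $w=\hat{\mu}(\omega_0)v$ against the vanishing quadratic form, exploiting that the cross terms drop out because $\Lambda\hat{\mu}=0$ — while checking carefully that the stated support condition is strong enough to rule out kernel-valued directions of $\hat{\mu}$.
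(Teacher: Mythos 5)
Your first two steps are exactly the paper's proof: the reduction of injectivity to ``$\blacket{\Phi(\mu),\Phi(\mu)}\neq 0$ for every nonzero $\mu$'' is the paper's Lemma~\ref{lem:injective_equiv}, and the Fubini/Fourier computation yielding $\blacket{\Phi(\mu),\Phi(\mu)}=\int_{\omega\in\mathbb{R}^d}\hat{\mu}(\omega)^*d\lambda(\omega)\hat{\mu}(\omega)$ is identical. Your polarization step, producing a density $\Lambda$ with $\Lambda(\omega)\hat{\mu}(\omega)=0$ for $\tau$-a.e.\ $\omega$, already goes beyond what the paper writes. But as a proof your proposal is incomplete, and you say so yourself: the implication ``$\hat{\mu}$ continuous, $\Lambda\hat{\mu}=0$ a.e., $\opn{supp}(\lambda)=\mathbb{R}^d$ $\Rightarrow$ $\mu=0$'' is never established.

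Your suspicion about that last step is not only legitimate; the step is false, and consequently the theorem as stated fails for $m\ge 2$. The paper's own proof glosses over precisely this point: after the displayed computation it simply asserts that $\hat{\mu}\neq 0$ together with $\opn{supp}(\lambda)=\mathbb{R}^d$ forces $\int\hat{\mu}^*d\lambda\,\hat{\mu}\neq 0$, which is the scalar ($m=1$) argument and does not survive matrix-valued $\lambda$. Concretely, let $d=1$, $m=2$, $u(\omega)=(\cos\omega,\sin\omega)^T$ and $d\lambda(\omega)=e^{-\omega^2}u(\omega)u(\omega)^*d\omega$. For every nonempty open $U$ and every $v\neq 0$, $v^*\lambda(U)v=\int_U e^{-\omega^2}\vert v^*u(\omega)\vert^2d\omega>0$ since $\omega\mapsto v^*u(\omega)$ has isolated zeros; hence $\lambda$ is a finite positive semi-definite regular measure with $\opn{supp}(\lambda)=\mathbb{R}$, and $k(x,y)=\hat{\lambda}(y-x)$ is a bounded transition-invariant $c_0$-kernel. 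Now take the nonzero measure $\mu$ with entries $\mu_{11}=\tfrac{1}{2\sqrt{-1}}(\delta_1-\delta_{-1})$, $\mu_{21}=\tfrac{1}{2}(\delta_1+\delta_{-1})$, $\mu_{12}=\mu_{22}=0$ (where $\delta_{\pm 1}$ are the unit point masses at $\pm 1$), so that $\hat{\mu}(\omega)$ has first column $(-\sin\omega,\cos\omega)^T$ and second column $0$. Then $u(\omega)^*\hat{\mu}(\omega)\equiv 0$, hence
\begin{equation*}
\blacket{\Phi(\mu),\Phi(\mu)}=\int_{\omega\in\mathbb{R}}e^{-\omega^2}\bigl(u(\omega)^*\hat{\mu}(\omega)\bigr)^*\bigl(u(\omega)^*\hat{\mu}(\omega)\bigr)d\omega=0,
\end{equation*}
so $\Phi(\mu)=0$ while $\mu\neq 0$ --- exactly the ``kernel-valued directions'' phenomenon you flagged. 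The statement (and your argument) can be rescued under stronger hypotheses, e.g.\ when $\lambda$ has an a.e.\ positive definite density with respect to a scalar control measure --- in particular for the diagonal kernels in the paper's examples, where your a.e.\ step plus continuity of each entry of $\hat{\mu}$ finishes the proof --- but no argument can close the gap from the stated assumptions alone.
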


\begin{thm}\label{thm:characteristic2}
 Let $\alg=\mat$ and $\mcl{X}=\mathbb{R}^d$.
 Assume $k:\mcl{X}\times\mcl{X}\to\alg$ is a radial kernel with a positive definite $\alg$-valued measure $\eta$ that satisfies $\opn{supp}(\eta)\neq \{0\}$.
 Then, KME $\Phi:\mcl{D}(\mcl{X},\alg)\to\modu_k$ defined as Eq.~\eqref{eq:km_rkhm} is injective.
\end{thm}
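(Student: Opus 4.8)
The plan is to reduce Theorem~\ref{thm:characteristic2} to Theorem~\ref{thm:characteristic} by realizing every radial kernel as a transition invariant kernel whose associated measure has full support. For each $t>0$ the scalar Gaussian is the Fourier transform of the heat-kernel density $g_t(\omega):=(4\pi t)^{-d/2}e^{-\Vert\omega\Vert^2/(4t)}$, i.e.\ $e^{-t\Vert\xi\Vert^2}=\int_{\mathbb{R}^d}e^{-\sqrt{-1}\,\xi^T\omega}g_t(\omega)\,d\omega$ (the normalization is checked directly from the characteristic function of a Gaussian of variance $2t$). Setting $\rho(\omega):=\int_{(0,\infty)}g_t(\omega)\,d\eta(t)$ and letting $\lambda$ be the $\alg$-valued Borel measure on $\mathbb{R}^d$ with absolutely continuous part of density $\rho$ together with the atom $\eta(\{0\})\,\delta_0$ at the origin, I would substitute $\xi=y-x$ into the definition $k(x,y)=\int_{[0,\infty)}e^{-t\Vert x-y\Vert^2}d\eta(t)$ and interchange the $t$- and $\omega$-integrals to obtain
\begin{equation*}
k(x,y)=\int_{\mathbb{R}^d}e^{-\sqrt{-1}\,(y-x)^T\omega}\,d\lambda(\omega)=\hat{\lambda}(y-x).
\end{equation*}
Thus $k$ is transition invariant with the measure $\lambda$; the atom at $t=0$ merely reproduces the constant kernel $1=\hat{\delta_0}$, which is exactly the non-characteristic piece that the hypothesis $\opn{supp}(\eta)\neq\{0\}$ will push out of the relevant part of $\lambda$.

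Next I would verify the two hypotheses of Theorem~\ref{thm:characteristic} for $\lambda$. Since $g_t(\omega)>0$ for all $t>0$ and all $\omega$, and $\eta$ is positive (semi-)definite, the density $\rho(\omega)$ is positive semi-definite for every $\omega$, and the atom $\eta(\{0\})$ is positive semi-definite as well, so $\lambda$ is a positive semi-definite $\alg$-valued measure. For the support, fix a nonempty open $U\subseteq\mathbb{R}^d$ and a nonzero $v\in\mathbb{C}^m$. Because $\opn{supp}(\eta)\neq\{0\}$ there is $t_0>0$ in $\opn{supp}(\eta)$; choosing a small open $W\ni t_0$ with $W\subseteq(0,\infty)$ and $\eta(W)$ positive definite, the scalar positive measure $v^*\eta v$ satisfies $(v^*\eta v)(W)=v^*\eta(W)v>0$. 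Dropping the nonnegative atomic contribution,
\begin{equation*}
v^*\lambda(U)v\ge\int_U v^*\rho(\omega)v\,d\omega\ge\int_U\int_W g_t(\omega)\,d(v^*\eta v)(t)\,d\omega>0,
\end{equation*}
since $g_t(\omega)>0$ on the product $U\times W$, which has positive measure. Hence $\lambda(U)$ is positive definite for every nonempty open $U$, i.e.\ $\opn{supp}(\lambda)=\mathbb{R}^d=\mcl{X}$.

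With $\lambda$ positive semi-definite and $\opn{supp}(\lambda)=\mcl{X}$, Theorem~\ref{thm:characteristic} applies directly and yields injectivity of $\Phi:\mcl{D}(\mcl{X},\alg)\to\modu_k$. I expect the main obstacle to be the rigorous justification of the interchange of the $t$- and $\omega$-integrals in the first step: this is a Fubini-type statement for the matrix-valued measure $\eta$ against Lebesgue measure, which I would argue either entrywise or through the dominated-convergence machinery for $\alg$-valued integrals developed in the appendix, using that $\eta$ is finite and that $\int_{(0,\infty)} g_t(\omega)\,d\Vert\eta\Vert(t)$ is integrable in $\omega$. A secondary, purely bookkeeping, point is the clean separation of the $t=0$ atom from the absolutely continuous part, which the argument above has already isolated.
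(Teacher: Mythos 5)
Your proof is correct, but it is organized differently from the paper's: the paper never reduces to Theorem~\ref{thm:characteristic}. Instead it repeats that theorem's computation inside the radial setting---it expands $\blacket{\Phi(\mu),\Phi(\mu)}$ using $k(x,y)=\int_{[0,\infty)}e^{-t\Vert x-y\Vert^2}d\eta(t)$, inserts the Gaussian identity $e^{-t\Vert\xi\Vert^2}=(2t)^{-d/2}\int_{\omega\in\mathbb{R}^d}e^{-\sqrt{-1}\xi^T\omega-\Vert\omega\Vert^2/(4t)}d\omega$ (the same identity you use, up to the value of the positive normalizing constant), interchanges the integrals to arrive at $\int_{\omega\in\mathbb{R}^d}\hat{\mu}(\omega)^*\big(\int_{t\in[0,\infty)}(2t)^{-d/2}e^{-\Vert\omega\Vert^2/(4t)}d\eta(t)\big)\hat{\mu}(\omega)d\omega$, and concludes via Lemma~\ref{lem:injective_equiv} that this is nonzero because $\hat{\mu}\neq 0$ and the inner density is positive definite when $\opn{supp}(\eta)\neq\{0\}$. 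The mathematical core---the heat-kernel Fourier identity plus positive definiteness of the spectral density $\int_{(0,\infty)}g_t(\omega)\,d\eta(t)$, which both arguments extract from a support point $t_0>0$---is thus identical; what your reduction buys is modularity (Theorem~\ref{thm:characteristic} used as a black box) and, more substantively, an honest treatment of the atom of $\eta$ at $t=0$, which the paper glosses over: the Gaussian identity is invalid at $t=0$ (the constant kernel has no $L^1$ spectral density), so the paper's final displayed equality silently drops the positive semi-definite term $\hat{\mu}(0)^*\eta(\{0\})\hat{\mu}(0)$, and its conclusion survives only because discarding a positive semi-definite summand can only decrease the quadratic form, whereas your explicit splitting $\lambda=\rho\,d\omega+\eta(\{0\})\delta_0$ accounts for it. The cost of your route is the bookkeeping you flag yourself (verifying that $\lambda$ is a finite regular Borel positive semi-definite measure and the Fubini interchange), which the paper's in-line computation also needs but does not spell out. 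One caveat is shared by both proofs: extracting $t_0>0$ from $\opn{supp}(\eta)\neq\{0\}$ tacitly reads that hypothesis as ``$\opn{supp}(\eta)$ contains a nonzero point,'' since under the paper's definition of support the set could in principle be empty (e.g., for a diagonal measure whose diagonal entries have disjoint supports).
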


\begin{example}
If $k$ is a diagonal matrix-valued kernel whose diagonal elements are Gaussian, Laplacian, or $B_{2n+1}$-spline
, then $k$ is a $c_0$-kernel 
(Example~\ref{ex:pdk1}).
There exists a diagonal matrix-valued measure $\lambda$ that satisfies $k(x,y)=\hat{\lambda}(y-x)$ and whose diagonal elements are nonnegative and supported by $\mathbb{R}^d$ (c.f. Table 2 in~\cite{sriperumbudur10}).
Thus, by Theorem~\ref{thm:characteristic}, $\Phi$ is injective.
\end{example}

\begin{example}
If $k$ is a diagonal matrix-valued kernel whose diagonal elements are inverse multiquadratic
, then $k$ is a $c_0$-kernel 
(Example~\ref{ex:pdk1}).
There exists a diagonal matrix-valued measure $\eta$ that satisfies $k(x,y)=\int_{[0,\infty)}e^{-t\Vert x-y\Vert^2}d\eta(t)$, and whose diagonal elements are nonnegative and $\opn{supp}(\eta)\neq\{0\}$  
(c.f. Theorem 7.15 in~\cite{wendland04}).
Thus, by Theorem~\ref{thm:characteristic2}, $\Phi$ is injective.
\end{example}


%
\subsection{Connection of injectivity with universality}\label{sec:universal}
Another important property for kernel methods is universality, which ensures that
kernel-based algorithms approximate 
each continuous target function arbitrarily well.
For RKHS, Sriperumbudur~\cite{sriperumbudur11} showed the equivalence of the injectivity of the existing KME and universality.
Mathematically, an RKHS (or RKHM in our case) is said to be universal if it is dense in a space of bounded and continuous functions.
We show the above equivalence holds also for RKHM.
\begin{thm}\label{thm:universal_finitedim}
Let $\alg=\mat$. Then, $\Phi:\measure\to\modu_k$ is injective if and only if $\modu_k$ is dense in $\clch$.
\end{thm}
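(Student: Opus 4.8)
The plan is to route the claim through the Banach-space duality between $\clch$ and $\measure$, and then to match the annihilator of $\modu_k$ with the kernel of $\Phi$ by combining the right $\alg$-module structure of $\modu_k$ with the non-degeneracy of the trace pairing on $\mat$.

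First I would isolate two preliminary facts. (i) Since $k$ is a bounded $c_0$-kernel, the reproducing property~\eqref{eq:reproducing} together with the Cauchy--Schwarz inequality for Hilbert modules gives $\Vert u(x)\Vert_\alg = \Vert\blacket{\phi(x),u}\Vert_\alg \le \VVert\phi(x)\VVert_{\modu_k}\,\VVert u\VVert_{\modu_k}$ with $\VVert\phi(x)\VVert_{\modu_k}^2 = \Vert k(x,x)\Vert_\alg$ uniformly bounded; hence convergence in $\modu_k$ implies uniform convergence, and $\modu_k$ embeds continuously into $\clch$, so the phrase ``dense in $\clch$'' is meaningful. (ii) By Theorem~\ref{thm:kme}, $\blacket{\Phi(\mu),v}=\int_{x\in\mcl{X}}d\mu^*(x)v(x)$ for every $v\in\modu_k$; taking $v=\Phi(\mu)$ and using axiom~4 of the $\alg$-valued inner product shows that $\Phi(\mu)=0$ if and only if $\int_{x\in\mcl{X}}d\mu^*(x)v(x)=0$ for all $v\in\modu_k$. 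As $\Phi$ is additive and $\mu\mapsto\mu^*$ is a bijection of $\measure$, I conclude that $\Phi$ is injective if and only if the only $\nu\in\measure$ satisfying $\int_{x\in\mcl{X}}d\nu(x)v(x)=0$ for all $v\in\modu_k$ is $\nu=0$.

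Next I would set up the duality. Because $\alg=\mat$ is finite-dimensional, $\clch$ is the algebraic tensor product of $\mcl{C}_0(\mcl{X})$ with $\mat$, and applying the scalar Riesz representation theorem entrywise identifies the dual of $\clch$ with $\measure$ through the faithful pairing $\langle\nu,f\rangle:=\opn{tr}\int_{x\in\mcl{X}}d\nu(x)f(x)$. By the Hahn--Banach theorem, $\modu_k$ is dense in $\clch$ if and only if no nonzero bounded linear functional annihilates it, that is, if and only if the only $\nu\in\measure$ with $\opn{tr}\int_{x\in\mcl{X}}d\nu(x)v(x)=0$ for all $v\in\modu_k$ is $\nu=0$. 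The final step is the bridge between this scalar annihilation condition and the matrix-valued vanishing condition from fact~(ii): here I use that $\modu_k$ is a right $\alg$-module, so that for $v\in\modu_k$ and $c\in\mat$ one has $vc\in\modu_k$ and $\int d\nu(x)(v(x)c)=(\int d\nu(x)v(x))c$. Thus $\opn{tr}\int d\nu(x)v(x)=0$ for all $v\in\modu_k$ forces $\opn{tr}((\int d\nu(x)v(x))c)=0$ for all $v$ and all $c\in\mat$, and since $(A,c)\mapsto\opn{tr}(Ac)$ is non-degenerate on $\mat$, this is equivalent to $\int d\nu(x)v(x)=0$ for all $v\in\modu_k$. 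Chaining the three steps yields ``$\modu_k$ dense in $\clch$'' $\iff$ ``$\Phi$ injective''.

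I expect the main obstacle to be the precise vector-valued Riesz representation in the middle step: verifying that every bounded functional on $\clch$ arises from a genuinely \emph{regular} $\mat$-valued Borel measure via the trace pairing, and that the correspondence is faithful. The entrywise reduction to the scalar theorem makes this routine precisely because $\alg$ is finite-dimensional---this is why the result is stated only for $\alg=\mat$---but one must still check that the entrywise measures assemble into an element of $\measure$ with the required regularity. By comparison, the module-to-trace upgrade in the last step is purely algebraic and should present no difficulty.
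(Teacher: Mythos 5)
Your proof is correct, but it takes a genuinely different route from the paper's. The paper works in the category of Banach $\alg$-modules throughout: it identifies $\measure$ with the $\alg$-dual $\clch'$ consisting of bounded $\alg$-linear maps into $\alg$ (Proposition~\ref{prop:representation_finitedim}), and then needs a module-theoretic Hahn--Banach argument to pass from ``trivial annihilator'' to ``dense'' (Proposition~\ref{prop:orthcompequiv}); that separation step rests on a nontrivial extension theorem for $\alg$-linear maps (Proposition~\ref{prop:hahn_banach}), whose proof invokes Helemskii's theorem on the Connes injectivity of $\mat$, together with an orthogonal-complementation lemma for closed submodules (Lemma~\ref{lem:hahn_banach2}). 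You instead stay in the category of Banach spaces: you identify the ordinary scalar dual of $\clch$ with $\measure$ via the trace pairing $\opn{tr}\int_{x\in\mcl{X}} d\nu(x)f(x)$ (again by entrywise scalar Riesz--Markov), apply the classical Hahn--Banach theorem, and then reconnect with the matrix-valued condition $\int_{x\in\mcl{X}} d\nu(x)v(x)=0$ through the observation that $\modu_k$ is closed under right multiplication by $\mat$ and that $(A,c)\mapsto\opn{tr}(Ac)$ is non-degenerate. This trace bridge is exactly what lets you bypass any module-category Hahn--Banach: a scalar functional annihilating a right submodule automatically annihilates it in the matrix-valued sense. The payoff of your route is that it is more elementary and self-contained --- no Connes injectivity, no module separation lemma --- while the paper's route builds duality machinery ($\alg$-duals, orthogonal complements, module Hahn--Banach) that is reused elsewhere (e.g., the general-$\alg$ direction in Theorem~\ref{thm:universal}) and is phrased so as to suggest extensions beyond $\mat$. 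Both arguments use finite-dimensionality of $\alg$ in the Riesz--Markov step and in assembling the entrywise measures into a regular element of $\measure$; the paper uses finite-dimensionality a second time (through Connes injectivity) precisely where you use only the classical Hahn--Banach theorem.
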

By Theorem~\ref{thm:universal_finitedim}, if $k$ satisfies the condition in Theorem~\ref{thm:characteristic} 
or \ref{thm:characteristic2}, 
then $\modu_k$ is universal.

For the case where $\alg$ is infinite dimensional, the universality of $\modu_k$ in $\clch$ is a sufficient condition for the injectivity of the proposed KME,
although the equivalence of the injectivity of the KME and universality is an open problem.
\if0
Instead, we investigate the density property with respect to $L^1$ norm.
As for RKHS, the connection between the injectivity of the KME and the density of an RKHS with respect to $L^p$ norm is discussed in~\cite{fukumizu08}.
In the following, let $\regular$ be the set of all real positive-valued regular measures and $\bdmeasure{\nu}$ be the set of all finite regular Borel $\alg$-valued measures $\mu$ whose total variation is dominated by $\nu\in\regular$ (i.e., $\vert\mu\vert\le\nu$).
In addition, let $\bochner{\nu}$ be the space of $\alg$-valued functions $u$ that satisfy $\int_{x\in\mcl{X}}\Vert u(x)\Vert_{\alg}d\nu(x)<\infty$.
\begin{thm}\label{thm:universal}
Let $\alg=\blin{\hil}$.
Then, $\Phi:\bdmeasure{\nu}\to\modu_k$ is injective iff $\modu_k$ is dense in $\bochner{\nu}$.
\end{thm}
\fi
\begin{thm}\label{thm:universal}
If $\modu_k$ is dense in $\clch$, $\Phi:\measure\to\modu_k$ is injective.
\end{thm}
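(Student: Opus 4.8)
The plan is to show directly that the hypothesis forces $\Phi$ to have trivial kernel, which already yields injectivity: by the Proposition preceding this theorem $\Phi$ is additive, and $\measure$ is closed under differences, so $\Phi(\mu)=\Phi(\nu)$ gives $\Phi(\mu-\nu)=\Phi(\mu)-\Phi(\nu)=0$. Thus it suffices to prove that $\Phi(\mu)=0$ implies $\mu=0$. So suppose $\Phi(\mu)=0$. By Theorem~\ref{thm:kme} (Eq.~\eqref{eq:reproducing_km}), for every $v\in\modu_k$ we have $\blacket{\Phi(\mu),v}=\int_{x\in\mcl{X}}d\mu^*(x)v(x)$, and the left-hand side vanishes since $\Phi(\mu)=0$. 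Hence the left integral of $\mu^*$ against every element of $\modu_k$ is zero.

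The next step is to promote this vanishing from $\modu_k$ to all of $\clch$. Consider the map $L\colon\clch\to\alg$ defined by $Lv=\int_{x\in\mcl{X}}d\mu^*(x)v(x)$. This $L$ is linear and, from the construction of the left integral through $\alg$-valued step functions, I expect the uniform-norm estimate $\Vert Lv\Vert_{\alg}\le \vert\mu\vert(\mcl{X})\,\sup_{x\in\mcl{X}}\Vert v(x)\Vert_{\alg}$, where $\vert\mu\vert$ is the total variation measure, finite because $\mu\in\measure$. Since $L$ vanishes on $\modu_k$, which is dense in $\clch$ by hypothesis, continuity of $L$ in the uniform norm forces $Lg=0$ for every $g\in\clch$; that is, $\int_{x\in\mcl{X}}d\mu^*(x)g(x)=0$ for all $g\in\clch$.

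Finally I would deduce $\mu^*=0$ (equivalently $\mu=0$) by scalarizing and invoking the uniqueness part of the classical Riesz representation theorem on $\mcl{C}_0(\mcl{X})$. For a fixed $c\in\alg$ and scalar $f\in\mcl{C}_0(\mcl{X})$, the function $x\mapsto f(x)c$ lies in $\clch$, so right $\alg$-linearity of the left integral gives $\big(\int_{x\in\mcl{X}}f(x)\,d\mu^*(x)\big)c=0$; taking $c=1_{\alg}$ yields $c_f:=\int_{x\in\mcl{X}}f(x)\,d\mu^*(x)=0$ in $\alg$ for every $f\in\mcl{C}_0(\mcl{X})$. For each pair $h,h'\in\hil$, the set function $\nu_{h,h'}(E):=\blackett{h}{\mu^*(E)h'}$ is a complex regular Borel measure, and $\int_{x\in\mcl{X}}f(x)\,d\nu_{h,h'}(x)=\blackett{h}{c_fh'}=0$ for all $f$; scalar Riesz uniqueness then gives $\nu_{h,h'}=0$. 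As this holds for all $h,h'\in\hil$ and $\mu^*(E)\in\alg\subseteq\blin{\hil}$, we conclude $\mu^*(E)=0$ for every Borel $E$, hence $\mu^*=0$ and $\mu=0$. The main obstacle is the middle paragraph: I must confirm the uniform-norm bound on the left integral (so that vanishing genuinely transfers across the density, even though the reproducing property only controls $L$ a priori on $\modu_k$), together with the interchange of the operator-valued integral and the scalar pairing $\blackett{h}{\cdot\,h'}$ used to reduce the last step to the scalar Riesz theorem.
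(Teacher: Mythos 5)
Your proof is correct, but it takes a genuinely different route from the paper for the crucial step. Both arguments start the same way: reduce injectivity to the implication $\Phi(\mu)=0\Rightarrow\mu=0$, and observe via Theorem~\ref{thm:kme} that $\Phi(\mu)=0$ forces the functional $L_\mu v=\int_{x\in\mcl{X}}d\mu^*(x)v(x)$ to vanish on $\modu_k$. From there the paper works in the Bochner space $\bochner{\nu}$ with $\nu=\vert\mu\vert$: it transfers density of $\modu_k$ from $\clch$ into $\bochner{\nu}$, invokes Lemma~\ref{lem:orthocompequiv2} to get $\modu_k^{\perp}=\{0\}$ in the $\alg$-dual, and then uses the representation theorem $\bdmeasure{\nu}\cong\bochner{\nu}'$ (Proposition~\ref{prop:represention}) to conclude $\mu=0$. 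You instead stay entirely in $\clch$: you use the sup-norm bound $\Vert L_\mu v\Vert_{\alg}\le\vert\mu\vert(\mcl{X})\Vert v\Vert_\infty$ (which the paper indeed establishes, via step functions, inside the proof of Theorem~\ref{thm:kme}, so the estimate you flag as the main obstacle is available) to push the vanishing of $L_\mu$ across the sup-norm-dense submodule $\modu_k$, and then scalarize: testing against $f\cdot 1_{\alg}$ for scalar $f\in\mcl{C}_0(\mcl{X})$ and pairing with vector states $E\mapsto\blackett{h}{\mu^*(E)h'}$ reduces everything to the uniqueness part of the classical Riesz--Markov theorem, since each $\nu_{h,h'}$ inherits countable additivity, finiteness, and regularity from $\mu$, and the pairing interchanges with the integral by step-function approximation. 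Your route is more elementary and self-contained --- it needs no $\alg$-module duality theory at all and works verbatim for any von Neumann algebra $\alg\subseteq\blin{\hil}$. What the paper's heavier machinery buys is reusability: Proposition~\ref{prop:represention} and the orthogonal-complement lemmas are the same tools used to prove the converse implication (injectivity $\Rightarrow$ density) in Theorem~\ref{thm:universal_finitedim} for $\alg=\mat$, which your scalarization argument does not address.
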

The details of the derivations of Theorems~\ref{thm:universal_finitedim} and \ref{thm:universal} are given in  Appendix~\ref{ap:universal}.

\section{Specific examples of $\alg$-valued measures and their KME}\label{sec:example}
Here, we give two important examples of $\alg$-valued measures and show the proposed KME of these measures generalizes existing notions.
In Subsection~\ref{sec:covariance}, we propose a cross-covariance measure, which encodes the relation between arbitrary pairs of variables in distributions. 
In Subsection~\ref{sec:povm}, we consider the positive operator-valued measure, which plays an important role in quantum mechanics.

\subsection{Cross-covariance measure}\label{sec:covariance}
We propose a matrix-valued measure that encodes the relation between arbitrary pairs of $m$ random variables into an $m\times m$ symmetric matrix.
Let $(\Omega,\mcl{F})$ be a measurable space, $P$ be a real-valued probability measure on $\Omega$, and $X_1,\ldots,X_m,Y_1,\ldots,Y_m:\Omega\to\mcl{X}$ be random variables.
In addition, let $\alg=\mat$ and $k:\mcl{X}^2\times \mcl{X}^2\to\alg$ be an $\alg$-valued positive definite kernel.
\begin{defin}[Cross-covariance measure]\label{def:covariance}
For a Borel set $E$, we define a (uncentered) {\em cross-covariance measure} $\mu_X\in\mcl{D}(\mcl{X},\mat)$ of $X=[X_1,\ldots,X_m]$ as
\begin{equation*}
[\mu_X(E)]_{i,j}=(X_i,X_j)_*P(E),
\end{equation*}
where $X_*P$ means the push forward measure of $P$ with respect to a random variable $X$.
We also define the centered version of $\mu_X$ as $[\tilde{\mu}_X]_{i,j}=[\mu_X]_{i,j}-{X_i}_*P\otimes{X_j}_*P$. 
\end{defin}

We show that a discrepancy between $\Phi(\mu_X)$ and $\Phi(\mu_Y)$ is equal to that between operators composed of cross-covariance operators for a specific $\alg$-valued positive definite kernel.
Thus, our KME of the cross-covariance measure generalizes the notion of cross-covariance operator.
The cross-covariance operator is a generalization of the cross-covariance matrix~\cite{baker73,fukumizu04,kernelmean}. 
It is a linear operator $\Sigma_{X_i,X_j}:\hil_{\tilde{k}_1}\to\hil_{\tilde{k}_2}$ defined as $\Sigma_{X_i,X_j}:=\int_{\omega\in\Omega}\tilde{\phi}_1(X_i(\omega))\otimes \tilde{\phi}_2(X_j(\omega))dP(\omega)$,
where $\tilde{k}_1$ and $\tilde{k}_2$ are complex-valued positive definite kernels, $\tilde{\phi}_1$ and $\tilde{\phi}_2$ are their feature maps, and $\hil_{\tilde{k}_1}$ and $\hil_{\tilde{k}_2}$ are the RKHSs associated with $\tilde{k}_1$ and $\tilde{k}_2$, respectively.
\begin{thm}\label{prop:cross-covariance}
Assume $k(x,y)=\tilde{k}(x,y)I$, where $\tilde{k}((x_1,x_2),(y_1,y_2))=\tilde{k}_1(x_1,y_1)\tilde{k}_2(x_2,y_2)$ for $x=(x_1,x_2),y=(y_1,y_2)\in\mcl{X}^2$ and $I$ is the identity matrix.
Then, $\opn{tr}(\vert \Phi(\mu_X)-\Phi(\mu_Y)\vert^2)=\Vert \Sigma_X-\Sigma_Y\Vert_{\opn{HS}}^2$ holds, where $\Sigma_X=[\Sigma_{X_i,X_j}]_{i,j}$, and $\Vert \cdot\Vert_{\opn{HS}}$ is the Hilbert-Schmidt norm.
\end{thm}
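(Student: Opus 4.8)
The plan is to reduce the trace of the squared $\alg$-valued absolute value to a sum of scalar double integrals, and then identify each of these with a Hilbert--Schmidt inner product of cross-covariance operators. First I would set $\nu:=\mu_X-\mu_Y$, so that $\Phi(\mu_X)-\Phi(\mu_Y)=\Phi(\nu)$ by linearity of $\Phi$. By the corollary to Theorem~\ref{thm:kme}, $\vert\Phi(\nu)\vert^2=\blacket{\Phi(\nu),\Phi(\nu)}=\int_{z}\int_{w}d\nu^*(z)k(z,w)d\nu(w)$. Since $k(z,w)=\tilde{k}(z,w)I$ is a scalar multiple of the identity, the integrand is $\tilde{k}(z,w)\,d\nu^*(z)\,d\nu(w)$; taking the matrix trace and moving it inside the integral yields $\opn{tr}(\vert\Phi(\nu)\vert^2)=\sum_{i,l}\int_{z}\int_{w}\tilde{k}(z,w)\,\overline{d\nu_{li}(z)}\,d\nu_{li}(w)$, where $\nu_{li}=(X_l,X_i)_*P-(Y_l,Y_i)_*P$ is the $(l,i)$ entry of $\nu$. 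Because each $\nu_{li}$ is real, the conjugate is harmless.

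Next, for each fixed pair $(l,i)$ I would apply the push-forward change-of-variables formula to rewrite the double integral over $\mcl{X}^2\times\mcl{X}^2$ as a double integral over $\Omega\times\Omega$. Using the factorization $\tilde{k}((z_1,z_2),(w_1,w_2))=\tilde{k}_1(z_1,w_1)\tilde{k}_2(z_2,w_2)$ together with the reproducing property of $\hil_{\tilde{k}_1}$ and $\hil_{\tilde{k}_2}$, the term in which both measures equal $(X_l,X_i)_*P$ becomes $\int_\Omega\int_\Omega\langle\tilde{\phi}_1(X_l(\omega)),\tilde{\phi}_1(X_l(\omega'))\rangle\,\langle\tilde{\phi}_2(X_i(\omega)),\tilde{\phi}_2(X_i(\omega'))\rangle\,dP(\omega)\,dP(\omega')$. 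By the rank-one identity $\langle a\otimes b,c\otimes d\rangle_{\opn{HS}}=\langle a,c\rangle\langle b,d\rangle$ applied under the integral sign, this is precisely $\Vert\Sigma_{X_l,X_i}\Vert_{\opn{HS}}^2$. Treating the three remaining cross terms the same way, the double integral against $\nu_{li}$ collapses to $\Vert\Sigma_{X_l,X_i}-\Sigma_{Y_l,Y_i}\Vert_{\opn{HS}}^2$.

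Finally, summing over $(l,i)$ and using that the block operator $\Sigma_X-\Sigma_Y=[\Sigma_{X_i,X_j}-\Sigma_{Y_i,Y_j}]_{i,j}$ has squared Hilbert--Schmidt norm equal to the sum of the squared Hilbert--Schmidt norms of its blocks gives $\opn{tr}(\vert\Phi(\mu_X)-\Phi(\mu_Y)\vert^2)=\sum_{i,j}\Vert\Sigma_{X_i,X_j}-\Sigma_{Y_i,Y_j}\Vert_{\opn{HS}}^2=\Vert\Sigma_X-\Sigma_Y\Vert_{\opn{HS}}^2$, which is the claim.

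I expect the main obstacle to be the analytic bookkeeping rather than the algebra. Specifically, I must justify (i) interchanging the matrix trace with the $\alg$-valued double integral, (ii) the push-forward change of variables and the Fubini-type interchange that pulls the two $\Omega$-integrals out of the Hilbert--Schmidt inner product, and (iii) the absolute convergence underlying all of these, which should follow from boundedness of $\tilde{k}_1,\tilde{k}_2$ (hence of $\tilde{k}$) and finiteness of the regular measures $\mu_X,\mu_Y$. I would also keep careful track of the adjoint/conjugation conventions in $d\nu^*$ and in the Hilbert-space inner products so that the four pieces genuinely assemble into the squared difference $\Vert\Sigma_{X_l,X_i}-\Sigma_{Y_l,Y_i}\Vert_{\opn{HS}}^2$ rather than a sign-ambiguous expression.
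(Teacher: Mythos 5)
Your proposal is correct and follows essentially the same route as the paper: the paper computes the mixed inner product $\blacket{\Phi(\mu_X),\Phi(\mu_Y)}$ entrywise via the push-forward change of variables, identifies each entry as $\sum_l\blacket{\Sigma_{X_l,X_i},\Sigma_{Y_l,Y_j}}_{\opn{HS}}$, and then takes traces and expands, which is exactly your computation organized through the difference measure $\nu=\mu_X-\mu_Y$ instead of by sesquilinear expansion at the end. The key ingredients — the inner-product formula for embedded measures, the scalar structure $k=\tilde{k}I$, the factorization of $\tilde{k}$ with the rank-one Hilbert--Schmidt identity, and the blockwise additivity of the Hilbert--Schmidt norm — coincide in both arguments.
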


\subsection{Positive operator-valued measure}\label{sec:povm}
A positive operator-valued measure 
is defined as an $\alg$-valued measure $\mu$ such that $\mu(\mcl{X})=I$ and $\mu(E)$ is positive semi-definite for any Borel set $E$.
It enables us to extract information of the probabilities of outcomes from a state~\cite{peres04,holevo11}.
We show that the existing inner product considered for quantum states~\cite{balkir14,prasenjit16} is generalized with our KME of positive operator-valued measures. 

Let $\mcl{X}=\mathbb{C}^m$ and $\alg=\mat$.
Let $\rho\in\alg$ be a positive semi-definite matrix with unit trace, called a density matrix.  
A density matrix describes the states of a quantum system,  
and information about outcomes is described as measure $\mu\rho\in\mcl{D}(\mcl{X},\alg)$.
We have the following theorem.
Here, we use the bra-ket notation, i.e., 
$\ket{\alpha}\in\mcl{X}$ represents a (column) vector in $\mcl{X}$, and $\bra{\alpha}$ is defined as $\bra{\alpha}:=\ket{\alpha}^{*}$:
\begin{thm}\label{prop:inprod_equiv}
Assume $\mcl{X}=\mathbb{C}^m$, $\alg=\mat$, and $k:\mcl{X}\times\mcl{X}\to\alg$ is a positive definite kernel defined as $k(\ket{\alpha},\ket{\beta})=\ket{\alpha}\blackett{\alpha}{\beta}\bra{\beta}$.
If $\mu$ is represented as $\mu=\sum_{i=1}^m\delta_{\ket{\psi_i}}\ket{\psi_i}\bra{\psi_i}$ for an orthonormal basis $\{\ket{\psi_1},\ldots,\ket{\psi_m}\}$ of $\mcl{X}$, then  $\opn{tr}(\blacket{\Phi(\mu\rho_1),\Phi(\mu\rho_2)})=\blacket{\rho_1,\rho_2}_{\opn{HS}}$ holds. 
Here, $\blacket{\cdot,\cdot}_{\opn{HS}}$ is the Hilbert--Schmidt inner product.
\end{thm}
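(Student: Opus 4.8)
The plan is to reduce the claim to the closed-form inner-product expression supplied by the Corollary and then let the orthonormality and completeness of $\{\ket{\psi_i}\}$ do the work.

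First I would pull the density matrices outside the embedding. By the Proposition we have $\Phi(\mu\rho_\ell)=\Phi(\mu)\rho_\ell$ for $\ell=1,2$, and from inner-product conditions~1 and~2 one checks the identity $\blacket{uc,vd}=c^*\blacket{u,v}d$. Since a density matrix is self-adjoint, $\rho_1^*=\rho_1$, so this gives $\blacket{\Phi(\mu\rho_1),\Phi(\mu\rho_2)}=\rho_1\blacket{\Phi(\mu),\Phi(\mu)}\rho_2$. The task therefore collapses to evaluating the single matrix $\blacket{\Phi(\mu),\Phi(\mu)}$.

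Next I would compute $\blacket{\Phi(\mu),\Phi(\mu)}$ via the Corollary. Each coefficient $\ket{\psi_i}\bra{\psi_i}$ is a self-adjoint projection, so $\mu^*=\mu$, and because $\mu$ is supported on the finite set $\{\ket{\psi_i}\}$ the double integral becomes the finite double sum $\sum_{i,j}\ket{\psi_i}\bra{\psi_i}\,k(\ket{\psi_i},\ket{\psi_j})\,\ket{\psi_j}\bra{\psi_j}$. Substituting $k(\ket{\psi_i},\ket{\psi_j})=\ket{\psi_i}\blackett{\psi_i}{\psi_j}\bra{\psi_j}$ and invoking orthonormality $\blackett{\psi_i}{\psi_j}=\delta_{ij}$, the off-diagonal terms vanish and each surviving term reduces (using $\blackett{\psi_i}{\psi_i}=1$) to $\ket{\psi_i}\bra{\psi_i}$. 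The completeness relation $\sum_i\ket{\psi_i}\bra{\psi_i}=I$ then yields $\blacket{\Phi(\mu),\Phi(\mu)}=I$.

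Combining the two steps gives $\blacket{\Phi(\mu\rho_1),\Phi(\mu\rho_2)}=\rho_1\rho_2$, and taking the trace together with $\rho_1^*=\rho_1$ produces $\opn{tr}(\rho_1\rho_2)=\opn{tr}(\rho_1^*\rho_2)=\blacket{\rho_1,\rho_2}_{\opn{HS}}$, as claimed. The one place to be careful is the noncommutative bookkeeping---keeping the left versus right multiplications, the adjoint of the measure, and the order of the $\rho_\ell$ straight---but once the Corollary's formula is in hand there are no analytic subtleties, since all the relevant sums are finite.
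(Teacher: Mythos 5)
Your proof is correct and follows essentially the same route as the paper's: both reduce the inner product via the Corollary to the finite double sum $\sum_{i,j}\rho_1^*M_i\,k(\ket{\psi_i},\ket{\psi_j})\,M_j\rho_2$ with $M_i=\ket{\psi_i}\bra{\psi_i}$, collapse it using orthonormality, and finish with the completeness relation $\sum_{i=1}^m M_i=I$ and cyclicity of the trace. The only cosmetic difference is that you factor out the densities first via $\Phi(\mu\rho_\ell)=\Phi(\mu)\rho_\ell$ and $\blacket{uc,vd}=c^*\blacket{u,v}d$, isolating the clean intermediate fact $\blacket{\Phi(\mu),\Phi(\mu)}=I$, whereas the paper carries $\rho_1,\rho_2$ through the computation.
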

In previous studies~\cite{balkir14,prasenjit16}, the Hilbert--Schmidt inner product between density matrices was considered to represent similarities between two quantum states. 
Liu et al.~\cite{liu18} considered the Hilbert--Schmidt inner product between square roots of density matrices.
Theorem~\ref{prop:inprod_equiv} shows that these inner products are represented via our KME in RKHM.

\section{Applications}\label{sec:application}
In this section, we provide several applications of the proposed KME.
We introduce an MMD for $\alg$-valued measures in Subsection~\ref{sec:mmd} and a kernel PCA for $\alg$-valued measures in Subsection~\ref{sec:pca}.
We then mention other applications in Section~\ref{sec:other}.

\subsection{Maximum mean discrepancy with kernel mean embedding}\label{sec:mmd}
MMD is a metric of measures according to the largest difference in means over a certain subset of a function space. 
It is also known as integral probability metric (IPM). 
For a set $\mcl{U}$ of real-valued functions on $\mcl{X}$ and two real-valued probability measures $\mu$ and $\nu$,
MMD $\gamma(\mu,\nu,\mcl{U})$ is defined as $\sup_{u\in\mcl{U}}\big\vert\int_{x\in\mcl{X}}u(x)d\mu(x)-\int_{x\in\mcl{X}}u(x)d\nu(x)\big\vert$~\cite{muller97,gretton12}.
For example, if $\mcl{U}$ is the unit ball of an RKHS, denoted as $\mcl{U}_{\opn{RKHS}}$, the MMD can be represented using the KME $\tilde{\Phi}$ in the RKHS as $\gamma(\mu,\nu,\mcl{U}_{\opn{RKHS}})=\Vert\tilde{\Phi}(\mu)-\tilde{\Phi}(\nu)\Vert$.
Let $\mcl{U}_{\alg}$ be a set of $\alg$-valued bounded and measurable functions and $\mu,\nu\in\mcl{D}(X,\alg)$.
We generalize the MMD to that for $\alg$-valued measures as follows:
\begin{equation*}
 \gamma_{\alg}(\mu,\nu,\mcl{U}_{\alg}):=\sup_{u\in\mcl{U}}\bigg\vert\int_{x\in\mcl{X}}u(x)d\mu(x)-\int_{x\in\mcl{X}}u(x)d\nu(x)\bigg\vert_{\alg},
\end{equation*}
where $\vert c\vert_{\alg}:=(c^*c)^{1/2}$ for $c\in\alg$ 
and supremum is taken with respect to a (pre) order in $\alg$ (see Appendix~\ref{ap:notation} for further details). 
The following theorem shows that similar to the case of RKHS, if $\mcl{U}_{\alg}$ is the unit ball of an RKHM, the generalized MMD $\gamma_{\alg}(\mu,\nu,\mcl{U}_{\alg})$ can also be represented using the proposed KME in the RKHM.

\begin{prop}\label{prop:mmd}
Let $\mcl{U}_{\opn{RKHM}}:=\{u\in\modu_k\mid\ \Vert u\Vert\le 1\}$.
Then, for $\mu,\nu\in\mcl{D}(\mcl{X},\alg)$, we have
 \begin{equation*}
\gamma_{\alg}(\mu,\nu,\mcl{U}_{\opn{RKHM}})=\vert \Phi(\mu)-\Phi(\nu)\vert.
\end{equation*}
\end{prop}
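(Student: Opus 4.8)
The plan is to reduce the $\alg$-valued supremum defining $\gamma_\alg$ to a Cauchy--Schwarz estimate inside the RKHM, exactly as in the scalar RKHS case, but carried out in the order structure of $\alg$. Write $w := \Phi(\mu) - \Phi(\nu)$, which by the additivity in the preceding proposition equals $\Phi(\mu-\nu) \in \modu_k$, so that the target reads $\gamma_\alg(\mu,\nu,\mcl{U}_{\opn{RKHM}}) = \vert w\vert$. \textbf{Step 1 (reduction to an inner product).} First I would rewrite the integrand of the MMD as an $\alg$-valued pairing with $w$. Using the reproducing property $\blacket{\phi(x),u}=u(x)$ together with the right $\alg$-linearity of $\blacket{\cdot,\cdot}$ in its second argument, and approximating the $\alg$-valued integral by step functions, one identifies $\int_{x\in\mcl{X}} u(x)\,d\mu(x)$ with the inner product of $u$ and $\Phi(\mu)$; this is precisely the content of Theorem~\ref{thm:kme}, whose identity $\blacket{\Phi(\mu),v}=\int d\mu^*(x)v(x)$ encodes how the integral passes through the inner product. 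By linearity this gives $\int u\,d\mu - \int u\,d\nu = \blacket{u,w}$, hence $\gamma_\alg(\mu,\nu,\mcl{U}_{\opn{RKHM}}) = \sup_{\Vert u\Vert\le 1}\vert\blacket{u,w}\vert_\alg$, the supremum taken in the order of $\alg$. The care needed here is entirely with the noncommutativity of $\alg$: the placement of adjoints and the distinction between left and right integrals must be tracked, which is where Theorem~\ref{thm:kme} enters.

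\textbf{Step 2 (upper bound).} Next I would apply the Cauchy--Schwarz inequality for Hilbert $\alg$-modules, $\blacket{u,w}^*\blacket{u,w} \le \Vert u\Vert^2\,\blacket{w,w}$. Since $\vert\blacket{u,w}\vert_\alg^2 = \blacket{u,w}^*\blacket{u,w}$ and $\blacket{w,w}=\vert w\vert^2$, for every $u$ with $\Vert u\Vert\le 1$ this yields $\vert\blacket{u,w}\vert_\alg^2 \le \vert w\vert^2$, and by operator monotonicity of the square root, $\vert\blacket{u,w}\vert_\alg \le \vert w\vert$. Thus $\vert w\vert$ is an upper bound, in the order of $\alg$, for the set $\{\vert\blacket{u,w}\vert_\alg : \Vert u\Vert\le 1\}$.

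\textbf{Step 3 (attainment / least upper bound).} Finally I would show $\vert w\vert$ is the least upper bound by exhibiting maximizers. When $\vert w\vert$ is invertible in $\alg$, the element $u=w\vert w\vert^{-1}\in\modu_k$ satisfies $\blacket{u,u}=\vert w\vert^{-1}\vert w\vert^2\vert w\vert^{-1}=1_\alg$, so $\Vert u\Vert=1$, and $\blacket{u,w}=\vert w\vert^{-1}\vert w\vert^2=\vert w\vert$, whence $\vert\blacket{u,w}\vert_\alg=\vert w\vert$ and the bound is attained. In general $\vert w\vert$ need not be invertible; there I would regularize by $u_\epsilon = w(\vert w\vert+\epsilon 1_\alg)^{-1}$, using the functional calculus available because $\alg$ is a von Neumann algebra. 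One checks $\blacket{u_\epsilon,u_\epsilon}=\vert w\vert^2(\vert w\vert+\epsilon 1_\alg)^{-2}\le 1_\alg$, so $u_\epsilon$ lies in the unit ball, while $\vert\blacket{u_\epsilon,w}\vert_\alg=\vert w\vert^2(\vert w\vert+\epsilon 1_\alg)^{-1}\to\vert w\vert$ in norm as $\epsilon\to 0$. Since the positive cone of $\alg$ is norm-closed, every upper bound dominates $\vert w\vert$, so the supremum equals $\vert w\vert=\vert\Phi(\mu)-\Phi(\nu)\vert$.

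I expect the delicate point to be Step~3 in the non-invertible case: the naive maximizer $w\vert w\vert^{-1}$ does not exist, so the argument must instead produce an approximating family inside the unit ball whose pairings converge, in norm, to $\vert w\vert$, and then use norm-closedness of the positive cone to conclude that $\vert w\vert$ is genuinely the supremum and not merely an upper bound. A secondary subtlety, already flagged in Step~1, is that the whole statement lives in the preorder of $\alg$, so ``$\sup$'' must be read as a least upper bound there, and every inequality must be interpreted order-theoretically rather than numerically.
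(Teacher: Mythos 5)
Your proposal is correct and follows essentially the same route as the paper's proof: Cauchy--Schwarz in the Hilbert $\alg$-module for the upper bound, the regularized element $u_{\epsilon}=w(\vert w\vert+\epsilon 1_{\alg})^{-1}$ in the unit ball with $\blacket{u_{\epsilon},w}\to\vert w\vert$ in norm, and the order-theoretic conclusion that every upper bound of $\{\vert\blacket{u,w}\vert_{\alg}\mid \Vert u\Vert\le 1\}$ dominates $\vert w\vert$. The only cosmetic differences are your separate treatment of the invertible case and your explicit mention of operator monotonicity of the square root, which the paper leaves implicit.
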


Various methods with the existing MMD of real-valued probability measures are generalized to $\alg$-valued measures by applying our MMD. 
An example of $\alg$-valued measures is the cross-covariance measure defined in Definition~\ref{def:covariance}.
Using our MMD of the cross-covariance measures instead of the existing MMD allows us to encode higher-order interactions among variables in the methods.
For example, the following existing methods can be generalized:

{\bf Two-sample test:}\quad In two-sample test, samples from two distributions (measures) are compared by computing the MMD of these measures~\cite{gretton12}. 

{\bf Kernel mean matching for generative models:}\quad In generative models, MMD is used in finding points whose distribution is as close as to that of input points~\cite{jitkrittum19}. 

{\bf Domain adaptation:}\quad In Domain adaptation, MMD is used in describing distributions of target domain data as close to those of source domain data~\cite{Li19}. 

\subsubsection*{Numerical results}
We applied our MMD of cross-covariance measures defined in Subsection~\ref{sec:covariance} to two-sample test by using real-world climate data in Canada on January 2020\footnote{Available at \url{https://climate.weather.gc.ca/prods_servs/cdn_climate_summary_e.html}}.
We compared the two types of samples, each of which is composed of three variables representing
(a) longitude, latitude, and temperature or (b) longitude, latitude, and precipitation. 
We prepared two sample sets by randomly selecting $N$ samples from (a) or (b) as follows:
Case 1: (a) and (a), 
Case 2: (b) and (b), and
Case 3: (a) and (b). 
Both (a) and (b) contain 277 samples.
The experiments were implemented with Python 3.7.

Let $(\Omega,\mcl{F},P)$ be a probability space.
Assume samples in (a) are generated by a random variable $X=[X_1,X_2,X_3]:\Omega\to\mcl{X}^3$ and samples in (b) are generated by a random variable $Y=[Y_1,Y_2,Y_3]:\Omega\to\mcl{X}^3$.
Let $\mu_X$ and $\mu_Y$ be the cross-covariance measures with respect to $X$ and $Y$ defined in Definition~\ref{def:covariance}.
We computed the norm of our MMD $\gamma_{\alg}(\mu_X,\mu_Y,\mcl{U}_{\opn{RKHM}})$.
For comparison, MMDs $\gamma(X_*P,Y_*P,\mcl{U})$ with $\mcl{U}_{\opn{RKHS}}$, $\mcl{U}_{\opn{K}}$, and $\mcl{U}_{\opn{D}}$ were also computed.
Here, $\mcl{U}_{\opn{K}}:=\{u\mid\ \Vert u\Vert_L\le 1\}$, and $\mcl{U}_{\opn{D}}:=\{u\mid\ \Vert u\Vert_{\infty}+\Vert u\Vert_L\le 1\}$, where, $\Vert u\Vert_L:=\sup_{x\neq y}\vert u(x)-u(y)\vert/\vert x-y\vert$, and  $\Vert u\Vert_{\infty}$ is the sup norm of $u$. 
The MMDs with $\mcl{U}_{\opn{K}}$ and $\mcl{U}_{\opn{D}}$ are discussed in~\cite{ratchev85,dudley02,sriperumbudur12}.
We used Bootstrap to estimate the $1-\alpha$ quantiles of the distributions of the MMDs under a null hypothesis $\mu_X=\mu_Y$ or $X_*P=Y_*P$. 
Figure~\ref{fig:sampletest} illustrates the acceptance rate  
of the null hypothesis in 100 repetitions with $\alpha=0.05$ in the case of $N=10,20,30,50,100$.
We used $\mat$-valued kernel $k(x,y)=e^{-\Vert x-y\Vert^2}I$, where $m=3$ for $\mcl{U}=\mcl{U}_{\opn{RKHM}}$, and complex-valued kernel $\tilde{k}(x,y)=e^{-\Vert x-y\Vert^2}$ for $\mcl{U}=\mcl{U}_{\opn{RKHS}}$.
Note that as we mentioned in Section~\ref{sec:characteristic}, both KMEs associated with the above $k$ and $\tilde{k}$ are injective.
We can see our MMD of cross-covariance measures with respect to RKHM attains 
a higher acceptance rate for Cases 1 and 2 (both samples are from the same type of data),
and a lower acceptance rate for Case 3 (two samples are from different types of data).

\begin{figure}[t]\label{fig:sampletest}
 \begin{center}
  \includegraphics[scale=0.28]{./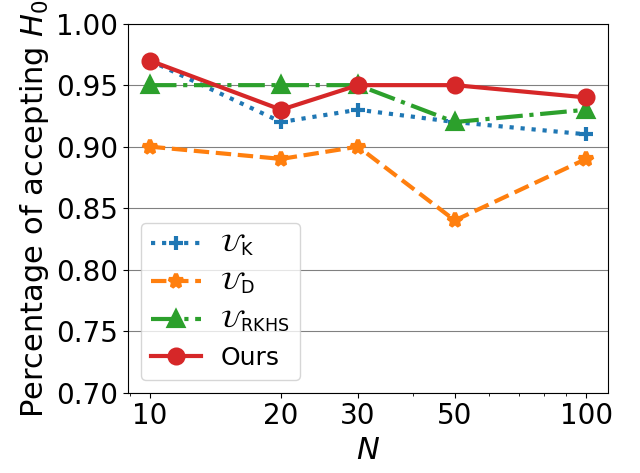}
  \includegraphics[scale=0.28]{./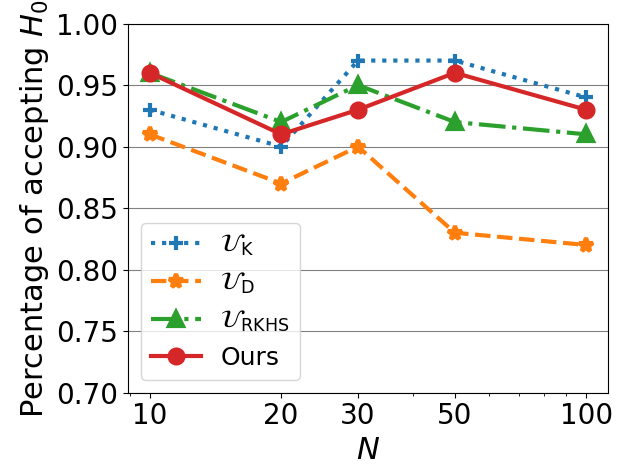}
  \includegraphics[scale=0.28]{./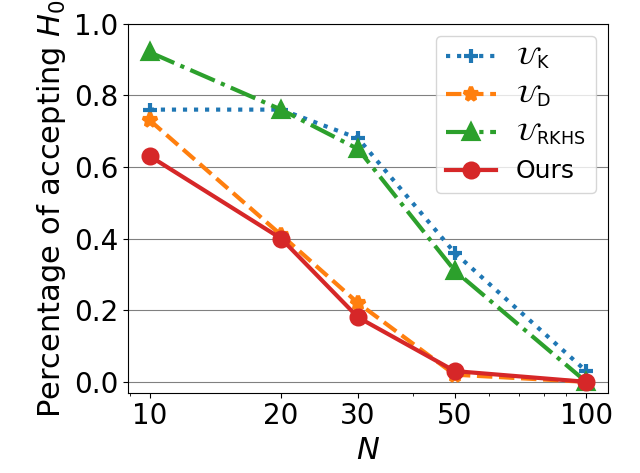}
  \caption{\small 
   Acceptance rate  
of null hypothesis $\mu_X=\mu_Y$ or $X_*P=Y_*P$, denoted as $H_0$, in 100 repetitions with $\alpha=0.05$ (Left: Case~1, Middle: Case~2, and Right: Case~3). Note that $H_0$ holds for Case~1 and Case~2, but not for Case~3.}
 \end{center}
\end{figure}

\subsection{Kernel PCA for matrix-valued measures}\label{sec:pca}
PCA is a statistical procedure to find a low dimensional subspace that preserves information of samples, which has been applied to, for example, visualization and anomaly detection~\cite{jolliffe02,liu18,hashimoto20}.
We introduce a PCA for $\alg$-valued measures in terms of the proposed KME in RKHM.
Let $\alg=\mat$ and $\mu_1,\ldots,\mu_n\in\mcl{D}(\mcl{X},\alg)$ be $\alg$-valued measures.
We find an orthnormal system 
$\{p_1,\ldots,p_s\}$ 
in an RKHM (see Section~\ref{sec:c_star_alg}) that minimizes the reconstruction error as follows:
\begin{align}
&\min_{\substack{p_j:\mbox{\scriptsize ONS},\\ \blacket{p_j,p_j}: \mbox{\small rank1}}}\sum_{i=1}^n\opn{tr}\bigg(\bigg\vert\Phi(\mu_i)-\sum_{j=1}^sp_j\blacket{p_j,\Phi(\mu_i})\bigg\vert\bigg).\label{eq:reconstruction}
\end{align}
Vector $p_j$ is called the $j$-th principal axis, and $p_j\blacket{p_j,\Phi(\mu_i)}$ is called the $j$-th principal component of $\Phi(\mu_i)$.
The following proposition provides the procedure for explicitly computing $p_j$.
\begin{prop}\label{prop:pca}
Let $G\in\mathbb{C}^{mn\times mn}$ be a Hermitian matrix whose $(i,j)$-block is $\blacket{\Phi(\mu_i),\Phi(\mu_j)}\in\mat$.
Let $\sigma_1\ge\ldots\ge\sigma_{n_0}>0$ be nonzero eigenvalues of $G$ and $v_1,\ldots,v_{n_0}\in\mathbb{C}^{mn}$ be the corresponding eigenvectors.
Then, $p_j$ is represented as $p_j=\sigma_j^{-1/2}W[v_1,0,\ldots,0]$, where $W=[\Phi(\mu_1),\ldots,\Phi(\mu_n)]$.
\end{prop}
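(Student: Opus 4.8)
The plan is to reduce the module-valued minimization to a finite-dimensional Hermitian eigenvalue problem driven by $G$, following the blueprint of ordinary kernel PCA while carefully tracking the noncommutativity of $\alg$ and the rank-one projection constraint. Throughout I read the intended conclusion with the $j$-th eigenvector, i.e. $p_j=\sigma_j^{-1/2}W[v_j,0,\ldots,0]$, where $[v_j,0,\ldots,0]\in\mathbb{C}^{mn\times m}$ has $v_j$ as its first column.

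First I would prove a representer-type reduction: any feasible family $\{p_j\}$ can be replaced, without increasing the objective, by one lying in the submodule $\modu_0$ generated by $\Phi(\mu_1),\ldots,\Phi(\mu_n)$. The reconstruction $\sum_j p_j\blacket{p_j,\Phi(\mu_i)}$ sees each $p_j$ only through the coefficient $\blacket{p_j,\Phi(\mu_i)}$, which is unaffected by discarding the part of $p_j$ orthogonal to $\modu_0$, while any such orthogonal part only contributes mass orthogonal to $\Phi(\mu_i)$ in the residual. The delicate point is to check that the ONS and rank-one conditions survive the projection onto $\modu_0$; this constraint-preservation is the first place care is needed. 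Having done so I write $p_j=Wc_{\cdot j}$ with $W=[\Phi(\mu_1),\ldots,\Phi(\mu_n)]$ and $c_{\cdot j}\in\mathbb{C}^{mn\times m}$, so that $\blacket{p_j,\Phi(\mu_i)}=c_{\cdot j}^*G_{\cdot i}$ and $\blacket{p_j,p_{j'}}=c_{\cdot j}^*Gc_{\cdot j'}$, where $G_{\cdot i}$ is the $i$-th block column of $G$.

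Next I would simplify the objective. Using the ONS relation $\blacket{p_j,p_{j'}}=\delta_{jj'}\blacket{p_j,p_j}$ together with the identity $p_j\blacket{p_j,p_j}=p_j$ (valid since $\blacket{p_j,p_j}$ is a projection, provable by expanding $\blacket{p_j\blacket{p_j,p_j}-p_j,\,p_j\blacket{p_j,p_j}-p_j}=0$), a Pythagorean computation gives $\blacket{r_i,r_i}=\blacket{\Phi(\mu_i),\Phi(\mu_i)}-\sum_j\blacket{p_j,\Phi(\mu_i)}^*\blacket{p_j,\Phi(\mu_i)}$ for the residual $r_i$. Hence minimizing the (squared) reconstruction error $\sum_i\opn{tr}\blacket{r_i,r_i}$ is equivalent to maximizing $\sum_{i,j}\opn{tr}\big(\blacket{p_j,\Phi(\mu_i)}^*\blacket{p_j,\Phi(\mu_i)}\big)$, which in coefficients becomes $\sum_j\opn{tr}(c_{\cdot j}^*G^2c_{\cdot j})$, subject to $c_{\cdot j}^*Gc_{\cdot j'}=\delta_{jj'}P_j$ with each $P_j$ a rank-one projection.

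The main obstacle is the passage from this matrix-coefficient constrained problem to the scalar eigenvalue problem. I would substitute $d_{\cdot j}=G^{1/2}c_{\cdot j}$, turning the objective into $\sum_j\opn{tr}(d_{\cdot j}^*Gd_{\cdot j})$ and the constraints into $d_{\cdot j}^*d_{\cdot j'}=\delta_{jj'}P_j$. The rank-one condition $d_{\cdot j}^*d_{\cdot j}=P_j$ forces a factorization $d_{\cdot j}=u_jz_j^*$ with unit vectors $u_j\in\mathbb{C}^{mn}$, $z_j\in\mathbb{C}^m$, and orthogonality forces $\{u_j\}$ to be orthonormal; the objective then collapses to $\sum_j u_j^*Gu_j$. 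By the Ky Fan (Courant--Fischer) theorem this is maximized over orthonormal systems by the top eigenvectors $u_j=v_j$, with optimal value $\sigma_1+\cdots+\sigma_s$. Unwinding with $z_j=e_1$ gives $d_{\cdot j}=[v_j,0,\ldots,0]$, hence $c_{\cdot j}=G^{-1/2}[v_j,0,\ldots,0]=\sigma_j^{-1/2}[v_j,0,\ldots,0]$ (using $Gv_j=\sigma_j v_j$ and working on the range of $G$), so that $p_j=\sigma_j^{-1/2}W[v_j,0,\ldots,0]$ as claimed. Finally I would confirm feasibility directly: $\blacket{p_j,p_{j'}}=\sigma_j^{-1/2}\sigma_{j'}^{-1/2}[v_j,0,\ldots]^*G[v_{j'},0,\ldots]$ vanishes for $j\neq j'$ by orthogonality of the eigenvectors and equals the rank-one projection $e_1e_1^*$ for $j=j'$, so the ONS and rank-one constraints hold.
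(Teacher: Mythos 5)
Your proposal follows the same route as the paper's proof: convert the minimization into maximizing $\sum_{i,j}\opn{tr}(\vert\blacket{p_j,\Phi(\mu_i)}\vert^2)$, decompose each $p_j$ against the orthogonally complemented submodule spanned by $\Phi(\mu_1),\ldots,\Phi(\mu_n)$, pass to matrix coefficients, and substitute $d_j=\sqrt{G}c_{\cdot j}$. In the final step you are in fact more rigorous than the paper: where the paper simply asserts that $\sqrt{G}c_{\cdot j}=[v_j,0,\ldots,0]$ attains the maximum, you derive the factorization $d_j=u_jz_j^*$ from the rank-one-projection Gram blocks, collapse the objective to $\sum_j u_j^*Gu_j$ over orthonormal $\{u_j\}$, and invoke Ky Fan; you also verify feasibility of the claimed solution and correctly read the statement's $v_1$ as a typo for $v_j$.

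However, the point you flag as delicate --- that the ONS and rank-one conditions ``survive the projection onto $\modu_0$'' --- is a genuine gap, and your proposal does not close it (neither does the paper, which silently asserts $c_{\cdot i}^*Gc_{\cdot j}=0$ for $i\neq j$ and $c_{\cdot i}^*Gc_{\cdot i}$ a rank-one projection). In fact the constraints do \emph{not} survive projection: if $p_j=Wc_{\cdot j}+p_j^{\perp}$ with $p_j^{\perp}\neq 0$, then $\blacket{p_i,p_j}=c_{\cdot i}^*Gc_{\cdot j}+\blacket{p_i^{\perp},p_j^{\perp}}$, so the coefficients only satisfy $c_{\cdot j}^*Gc_{\cdot j}\le P_j$ and $c_{\cdot i}^*Gc_{\cdot j}=-\blacket{p_i^{\perp},p_j^{\perp}}$, not the exact equalities $d_i^*d_j=\delta_{ij}P_j$ your Ky Fan step relies on; as written you have bounded only the restricted problem in which all $p_j^{\perp}=0$. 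The repair stays within your framework: the Gram matrix $[\blacket{p_i^{\perp},p_j^{\perp}}]_{i,j}$ is positive semi-definite and equals $\opn{diag}(P_1,\ldots,P_s)-D^*D$ for $D=[d_1,\ldots,d_s]$, so feasibility of the original family forces only $D^*D\le\opn{diag}(P_1,\ldots,P_s)$. Then $M:=DD^*$ satisfies $0\le M\le I$ and $\opn{tr}(M)=\opn{tr}(D^*D)\le s$, and the Ky Fan-type bound $\opn{tr}(GM)\le\sigma_1+\cdots+\sigma_s$ over this relaxed set gives the upper bound for the full problem; it is attained by $d_j=[v_j,0,\ldots,0]$ with $p_j^{\perp}=0$, recovering $p_j=\sigma_j^{-1/2}W[v_j,0,\ldots,0]$. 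With this one substitution your argument is complete, and it also fixes the corresponding step in the paper's proof.
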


For example, if $\mu_i=\mu\rho_i$ defined in Subsection~\ref{sec:povm}, then the space spanned by $\{p_1,\ldots,p_s\}$ is interpreted as the best possible space to describe an average state of $\rho_1,\ldots,\rho_n$, 
which can be used to detect ``unusual'' states.
\subsubsection*{Numerical results}
We applied our kernel PCA in RKHM to anomaly detection for quantum states.
We generated simulation data about quantum states in the same manner as~\cite[Section III.A]{hara16}.
We generated 2500 different density matrices for a normal state,
and those for 8 erroneous states, each of which is composed of 500 matrices.
Error $1\sim 4$ corresponds to the change of phase of the density matrices, Error $5\sim 7$ corresponds to the change of amplitude, and Error $8$ corresponds to the change of both phase and amplitude.
For each erroneous state, we randomly sampled 40 matrices from the normal
states and 15 matrices from the erroneous states (in the same manner as \cite{hara16}) and computed the matrix-valued reconstruction errors with respect to the first principal components. 
We set $[k(x,y)]_{i,j}=e^{-\vert x_i-y_j\vert^2}$, which is a $c_0$-kernel (see Example~\ref{ex:pdk2}),
and set $\mu=\sum_{i=1}^{16}\delta_{\ket{\psi_i}}\ket{\psi_i}\bra{\psi_i}$, where $\ket{\psi_i}$ are constructed with products of $\ket{h}=[1,0]$, $\ket{v}=[0,1]$, $\ket{h}+e^{2/3\pi\sqrt{-1}}\ket{v}/\sqrt{2}$, and $\ket{h}+e^{4/3\pi\sqrt{-1}}\ket{v}/\sqrt{2}$.
To detect changes of both phase and amplitude, we computed those of each element of matrix-valued reconstruction error, multiplied them, then, reduced these values to a real value by the operator norm.
We compared our results with those from a previous study~\cite{hara16} (naive, ED, and GED) and those with a kernel PCA with the Hilbert--Schmidt inner product considered in~\cite{balkir14,prasenjit16}.
All the results are illustrated in Figure~\ref{fig:anomaly}.
The AUC (area under the curve) score of our method is always higher than the other methods.
This result reflects the fact each element of our matrix-valued reconstruction error corresponds to the error of each element of the density matrices, which provides sufficient information to detect the error of each element of density matrices.%
\begin{figure}[t]
 \begin{center}
  \includegraphics[scale=0.36]{./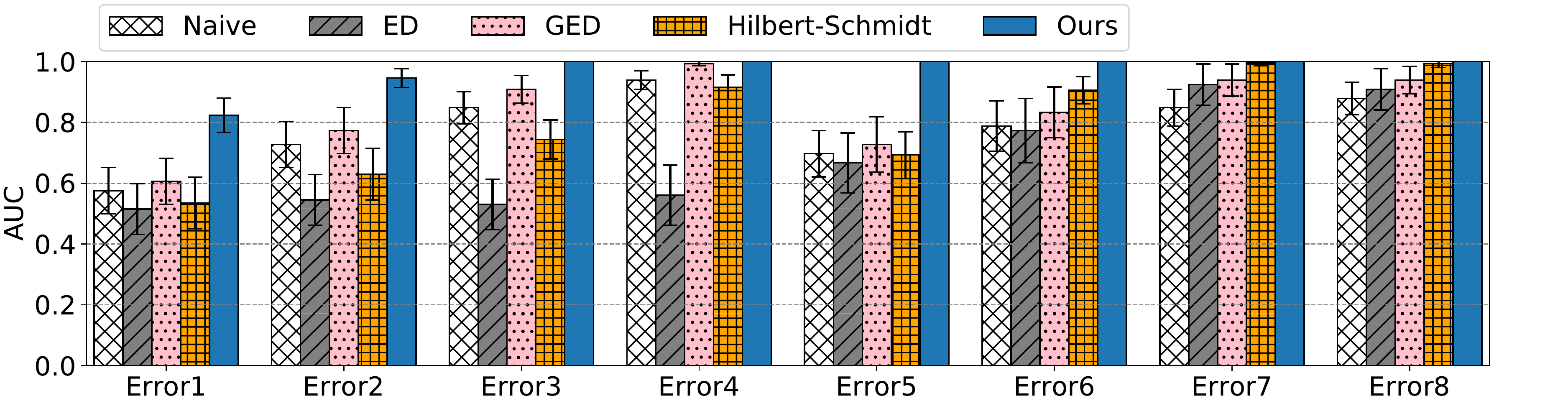}
  \caption{\small Averaged AUC scores for anomaly detection of errors $1\sim 8$ in~\cite{hara16}.}\label{fig:anomaly}
 \end{center}
\end{figure}
\subsection{Other applications}\label{sec:other}
The inner products with the proposed KME for positive operator-valued measures in Subsection~\ref{sec:povm} provide a tool for applying machine learning algorithms to inference with quantum states.
In addition, recently, random dynamical systems, which are (nonlinear) dynamical systems with random effects, have been extensively researched. 
Analyses of them by using the existing KME in RKHS have been proposed~\cite{klus17,hashimoto19}.
Our framework can generalize these results by replacing the existing KME with our KME of $\alg$-valued measures. 
For example, if we use the cross-covariance measures as $\alg$-valued measures, this enables us to analyze time-series data generated from a random dynamical system on the basis of higher-order interactions among variables.
 

\if0
\begin{table}[t]
    \centering
    \caption{The percentage of times that the null hypothesis (a) = (b) was accepted with $\alpha= 0.05$}
    \begin{tabular}{c|c|c|c|c}
     $N=10$    &  Ours & RKHS & Kantr. & Dudley\\
    \hline
      Case 1   & 0.97 & 0.95 & 0.97 & 0.90\\
      Case 2   & 0.96 & 0.96 & 0.93 & 0.91\\
      Case 3   & 0.63 & 0.92 & 0.76 & 0.73
    \end{tabular}\quad
    \begin{tabular}{c|c|c|c|c}
     $N=20$    &  Ours & RKHS & Kantr. & Dudley\\
    \hline
      Case 1   & 0.93 & 0.95 & 0.92 & 0.89\\
      Case 2   & 0.91 & 0.92 & 0.90 & 0.87\\
      Case 3   & 0.40 & 0.76 & 0.76 & 0.41
    \end{tabular}\vspace{.2cm}
    \begin{tabular}{c|c|c|c|c}
    $N=30$     &  Ours & RKHS & Kantr. & Dudley\\
    \hline
      Case 1   & 0.95 & 0.95 & 0.93 & 0.90\\
      Case 2   & 0.93 & 0.95 & 0.97 & 0.90\\
21      Case 3   & 0.18 & 0.65 & 0.68 & 0.22
    \end{tabular}
    \label{tab:sample_test}
\end{table}
\fi

\section{Conclusions}\label{sec:conclusion}
In this paper, we generalized the existing KME in RKHS to an embedding of a von Neumann-algebra-valued measure into an RKHM.
We derived sufficient conditions for the injectivity of the proposed KME and its connection with the universality of RKHM.
The proposed KME of von Neumann-algebra-valued measures enables us to perform probabilistic analyses reflected with higher-order interactions among variables.
Also, it generalizes the existing metric for quantum states, which can be used in applying machine learning frameworks to problems in quantum mechanics. 
Numerical results validated the advantage of the proposed methods.

\if0
\section*{Broader Impact}
This work provides a general framework of data analysis.
Therefore, Broader Impact discussion is not applicable.
\fi


\newpage
\appendix
\section*{Appendix}
We explain notations and terminologies in Section~\ref{ap:notation}.
We briefly review $\alg$-valued measures in Section~\ref{ap:review_vvmeasure} and provide detailed explanations about $c_0$-kernel in Section~\ref{ap:kernel}.
Then, we give proofs of theorems and propositions in the main body in Sections~\ref{ap:universal} and \ref{sec:proof}.

\section{Notations and terminologies}\label{ap:notation}
In this section, we describe notations and terminologies used in this paper.
Small letters denote $\alg$-valued coefficients (often by $c,d$) or vectors in $\modu$ (often by $p,q,u,v,w$).
Small Greek letters denote measures (often by $\eta,\lambda,\mu,\nu$).
Calligraphic capital letters denote sets.
The typical notations in this paper are listed in Table~\ref{tab1}.

We introduce an order in $\alg$ as follows:
For $c,d\in\alg$, let $c\le d$ mean $d-c$ is positive semi-definite.
$\le$ is a pre order in $\alg$.
And, for a subset $\mcl{S}$ of $\alg$, $a\in\alg$ is said to be an {\em upper bound} with respect to the order $\le$, if $d\le a$ for any $d\in\mcl{S}$.
Then, $c\in\alg$ is said to be a supremum of $\mcl{S}$, if $c\le a$ for any upper bound $a$ of $\mcl{S}$.

\if0
\section{RKHS}\label{ap:review_rkhs}
In this section, we review the theory of RKHS.
RKHSs are Hilbert spaces to extract nonlinearity or higher-order moments of data~\cite{scholkopf01,saitoh16}.

We begin with a positive difinite kernel.
Let $\mcl{X}$ be a non-empty set for data, and $\tilde{k}$ be a positive definite kernel, which is defined as follows:
\begin{defin}\label{def:pdk_rkhs}
A map $\tilde{k}:\mcl{X}\times \mcl{X}\to\mathbb{C}$ is called a {\em positive definite kernel} if it satisfies the following conditions:
\setlength{\leftmargini}{15pt}
\begin{enumerate}
    \item $\tilde{k}(x,y)=\overline{\tilde{k}(y,x)}$\; for $x,y\in\mcl{X}$
    \item $\sum_{s,t=1}^{n}\overline{c_s}c_t\tilde{k}(x_s,x_t)\ge 0$\; for $n\in\mathbb{N}$, $c_1,\ldots,c_{n}\in\mathbb{C}$, $x_1,\ldots,x_{n}\in\mcl{X}$.
\end{enumerate}
\end{defin}
Let $\tilde{\phi}:\mcl{X}\to\mathbb{C}^{\mcl{X}}$ be a map defined as $\tilde{\phi}(x)=\tilde{k}(\cdot,x)$.
With $\tilde{\phi}$, the following space as the subset of $\mathbb{C}^{\mcl{X}}$ is constructed: 
\begin{equation*}
\hil_{\tilde{k},0}:=\bigg\{\sum_{t=1}^{n}c_t\tilde{\phi}(x_t)\bigg|\ n\in\mathbb{N},\ c_t\in\mathbb{C},\ x_t\in\mcl{X}\bigg\}.
\end{equation*}
Then, a map $\blacket{\cdot,\cdot}_{\tilde{k}}:\hil_{\tilde{k},0}\times \hil_{\tilde{k},0}\to\mathbb{C}$ is defined as follows:
\begin{equation*}
\Bblacket{\sum_{s=1}^{n}c_s\tilde{\phi}(x_s),\sum_{t=1}^{l}d_t\tilde{\phi}(y_t)}_{\tilde{k}}:=\sum_{s=1}^{n}\sum_{t=1}^{l}\overline{c_s}d_t\tilde{k}(x_s,y_t).
\end{equation*}
By the properties in Definition~\ref{def:pdk_rkhs} of $\tilde{k}$, $\blacket{\cdot,\cdot}_{\tilde{k}}$ satisfies the axiom of inner products. 

The completion of $\hil_{\tilde{k},0}$ 
is called {\em RKHS} associated with $\tilde{k}$ and denoted as $\hil_{\tilde{k}}$.
It can be shown that $\blacket{\cdot,\cdot}_{\tilde{k}}$ is extended continuously to $\hil_{\tilde{k}}$ and has a reproducing property, that is,
\begin{equation*}
\sblacket{\tilde{\phi}(x),v}_{\tilde{k}}=v(x),
\end{equation*}
for $v\in\hil_{\tilde{k},0}$ and $x\in\mcl{X}$.

$\tilde{\phi}$ maps data into $\hil_{\tilde{k}}$, whose dimension is generally higher (often infinite dimensional) than that of $\mcl{X}$, and is called the {\em feature map}.
Since the dimension of $\hil_{\tilde{k}}$ is higher than that of $\mcl{X}$, complicated behaviors of data in $\mcl{X}$ are often transformed into simple ones in $\hil_{\tilde{k}}$~\cite{scholkopf01}.

\section{Kernel mean embedding in RKHS}\label{sec:review_kme}
In this section, we review the theory of KME in RKHS.
Instead of mapping data, we consider mapping measures into RKHSs.
Let $(\mcl{X},\mcl{B})$ be a measurable space for data and $\tilde{\mcl{D}}(\mcl{X})$ be the set of all finite signed measures.
A {\em kernel mean embedding (KME)} $\Phi$ on $\mcl{D}(\mcl{X})$ is defined as follows: 
\begin{equation*}
\tilde{\Phi}(\mu)=\int_{x\in\mcl{X}}\tilde{\phi}(x)d\mu(x),
\end{equation*}
for $\mu\in\tilde{\mcl{D}}(\mcl{X})$.

The following proposition shows $\tilde{\Phi}(\mu)\in\hil_{\tilde{k}}$, and thus, $\tilde{\Phi}$ is a map from $\tilde{\mcl{D}}(\mcl{X})$ into $\hil_{\tilde{k}}$~\cite{solma07}.
\begin{prop}
Let $\mu\in\mcl{D}(\mcl{X})$.
If $\Vert\tilde{\phi}(x)\Vert_{\tilde{k}}<\infty$, then $\tilde{\Phi}(\mu)\in\hil_{\tilde{k}}$ and for $v\in\hil_{\tilde{k}}$, the following equality holds:
\begin{equation}
\bblacket{\tilde{\Phi}(\mu),v}_{\tilde{k}}=\int_{x\in\mcl{X}}v(x)d\mu(x).\label{eq:reproducing_km_hil}
\end{equation}
\end{prop}
By Eq.~\eqref{eq:reproducing_km_hil}, for $\mu,\nu\in\tilde{\mcl{D}}(\mcl{X})$, the inner product between $\tilde{\Phi}(\mu)$ and $\tilde{\Phi}(\nu)$ is calculated as follows:
\begin{equation*}
\bblacket{\tilde{\Phi}(\mu),\tilde{\Phi}(\nu)}_{\tilde{k}}=\int_{x\in\mcl{X}}\int_{y\in\mcl{X}}\tilde{k}(x,y)d\mu(x)d\nu(y).
\end{equation*}
In addition, for $c\in\mathbb{R}$ and $x\in\mcl{X}$, the following equalities are derived directly from the definition of $\tilde{\Phi}$,:
\begin{align*}
\tilde{\Phi}(\mu+\nu)=\tilde{\Phi}(\mu)+\tilde{\Phi}(\nu),\qquad\tilde{\Phi}(\mu c)=\tilde{\Phi}(\mu)c,\qquad \tilde{\Phi}(\tilde{\delta}_x)=\phi(x),
\end{align*}
where $\tilde{\delta}_x$ is the Dirc measure defined as $\tilde{\delta}_x(E)=1$ for $x\in E$ and $\tilde{\delta}_x(E)=0$ for $x\notin E$.

\section{RKHM}\label{ap:review_rkhm}
RKHM is a generalization of RKHS~\cite{itoh90,heo08,szafraniec10,hashimoto20}.
Similar to the case of RKHSs, we begin by introducing an $\alg$-valued generalization of a positive definite kernel on a non-empty set $\mcl{X}$.
\begin{defin}[$\alg$-valued positive definite kernel]\label{def:pdk_rkhm}
 An $\mcl{A}$-valued map $k:\ \mcl{X}\times \mcl{X}\to\mcl{A}$ is called a {\em positive definite kernel} if it satisfies the following conditions:
\setlength{\leftmargini}{15pt}
\begin{enumerate}
     \item $k(x,y)=k(y,x)^*$ \;for $x,y\in\mcl{X}$,
     \item $\sum_{t,s=1}^nc_t^*k(x_t,x_s)c_s\ge 0$ \;for $n\in\mathbb{N}$, $c_1,\ldots,c_{n}\in\alg$, $x_1,\ldots,x_{n}\in\mcl{X}$.
\end{enumerate}
\end{defin}
Let $\phi:\mcl{X}\to\alg^{\mcl{X}}$ be the {\em feature map} associated with $k$, which is defined as $\phi(x)=k(\cdot,x)$ for $x\in\mcl{X}$.
Similar to the case of RKHSs, we construct the $\alg$-module composed of $\alg$-valued functions by means of $\phi$: 
\begin{equation*}
\modu_{k,0}:=\bigg\{\sum_{t=1}^{n}\phi(x_t)c_t\bigg|\ n\in\mathbb{N},\ c_t\in\alg,\ x_t\in\mcl{Y}\bigg\}.
\end{equation*}
An $\alg$-valued map $\blacket{\cdot,\cdot}_{k}:\modu_{k,0}\times \modu_{k,0}\to\alg$ is defined as follows:
\begin{equation*}
\bblacketg{\sum_{s=1}^{n}\phi(x_s)c_s,\sum_{t=1}^{l}\phi(y_t)d_t}_{k}:=\sum_{s=1}^{n}\sum_{t=1}^{l}c_s^*k(x_s,y_t)d_t.
\end{equation*}
By the properties in Definition~\ref{def:pdk_rkhm} of $k$, $\blacket{\cdot,\cdot}_{k}$ is shown to be an $\alg$-valued inner product.
The completion of $\modu_{k,0}$ is called a {\em reproducing kernel Hilbert $\alg$-module (RKHM)} associated with $k$, and denoted as $\modu_k$.  
RKHM has a reproducing property, i.e., 
\begin{equation*}
\blacket{\phi(x),u}_k=u(x),
\end{equation*}
for $u\in\modu_k$ and $x\in\mcl{X}$.
For $u\in\modu_k$, we define the {\em $\alg$-valued absolute value} $\vert u\vert$ on $\modu_k$ by the positive element  $\vert u \vert_k$ of $\alg$ such that $\vert u\vert_k^2=\blacket{u,u}_k$. 
\fi
\section{$\alg$-valued measure and integral}\label{ap:review_vvmeasure}
In this section, we briefly review $\alg$-valued measure and integral 
(for further details, refer to~\cite{dinculeanu67,dinculeanu00}).
The notions of measures and Lebesgue integrals are generalized to $\alg$-valued.
\begin{defin}[$\alg$-valued measure]
Let $\mcl{X}$ be a locally compact space and $\varSigma$ be a $\sigma$-algebra on $\mcl{X}$.
\setlength{\leftmargini}{15pt}
\begin{enumerate}
\item An $\alg$-valued map $\mu:\varSigma\to\alg$ is called a {\em (countably additive) $\alg$-vaued measure} if $\mu(\bigcup_{i=1}^{\infty}E_i)=\sum_{i=1}^{\infty}\mu(E_i)$ for all countable collections $\{E_{i}\}_{i=1}^{\infty}$ of pairwise disjoint sets in $\varSigma$.
\item An $\alg$-valued measure $\mu$ is said to be finite if $\vert\mu\vert (E):=\sup\{\sum_{i=1}^n\Vert\mu(E_i)\Vert_{\alg}\mid\ n\in\mathbb{N},\ \{E_{i}\}_{i=1}^{n}\mbox{ is a finite partition of }E\in\varSigma\}<\infty$.
We call $\vert\mu\vert$ the total variation of $\mu$.
\item  An $\alg$-valued measure $\mu$ is said to be regular if for all $E\in\varSigma$ and $\epsilon>0$, there exist a compact set $K\subseteq E$ and open set $G\supseteq E$ such that $\Vert\mu(F)\Vert_{\alg}\le\epsilon$ for any $F\subseteq G\setminus K$.
The regularity corresponds to the continuity of $\alg$-valued measures.
\item An $\alg$-valued measure $\mu$ is called a {\em Borel measure} if $\varSigma=\mcl{B}$, where $\mcl{B}$ is the Borel $\sigma$-algebra on $\mcl{X}$ ($\sigma$-algebra generated by all compact subsets of $\mcl{X}$).
\end{enumerate}
The set of all $\alg$-valued finite regular Borel measures is denoted as $\measure$.
\end{defin}
\begin{table}[t]
\caption{Notation table}\label{tab1}\vspace{.3cm}
\renewcommand{\arraystretch}{1.25}
 \begin{tabularx}{\linewidth}{|c|X|}
\hline
$\mat$  & A set of all complex-valued $m\times m$ matrix\\
$\alg$ & A von Neumann-algebra\\
$\Vert\cdot\Vert_{\alg}$ & The norm in $\alg$ (For $\alg=\mat$, $\Vert c\Vert_{\mat}:=\sup_{\Vert\mathrm{d}\Vert_2=1}\Vert c\mathrm{d}\Vert_2$)\\
$\modu$ & A (right) $\alg$-module\\
$\mcl{X}$ & A locally compact Hausdorff space\\
$\measure$ & The set of all $\alg$-valued finite regular Borel measures\\
$\clch$ & The space of all continuous $\alg$-valued functions on $\mcl{X}$ vanishing at infinity\\
$k$ & An $\alg$-valued positive definite kernel\\
$\phi$ & The feature map endowed with $k$\\
$\modu_k$ & The RKHM associated with $k$\\
$\Phi$ & The proposed KME in an RKHM\\
$\vert\cdot\vert$ & The $\alg$-valued absolute value in $\modu_k$ \\
$\Vert\cdot\Vert$ & The norm in $\modu_k$ \\
$\tilde{k}$ & A complex-valued positive definite kernel\\
$\hat{\lambda}$ & The Fourier transform of an $\alg$-valued measure $\lambda$ defined as $\hat{\lambda}=\int_{\omega\in\mathbb{R}^d}e^{-\sqrt{-1}x^T\omega}d\lambda(\omega)$\\
$\opn{supp}(\lambda)$ & The support of an $\alg$-valued measure $\lambda$ defined as $\opn{supp}(\lambda):=\{x\in\mathbb{R}^d\mid\ \mbox{for any open set $U$ such that }x\in U,\ \lambda(U)\mbox{ is positive definite}\}$\\
$(\Omega,\mcl{F})$ & A measurable space \\
$P$ & A real-valued probability measure on $\Omega$\\
$X_i,Y_i$ & Real-valued random variables on $\Omega$\\
$\mu_X$ & The cross-covariance measure of $X=[X_1,\ldots,X_m]$\\
$\rho$ & A density matrix\\
$\blacket{\cdot,\cdot}_{\opn{HS}}$, $\Vert \cdot\Vert_{\opn{HS}}$& The Hilbert--Schmidt inner product and norm\\
$\gamma(\mu,\nu,\mcl{U})$ & The MMD of real-valued probability measure $\mu$ and $\nu$ with respect to a real-valued function set $\mcl{U}$\\
$\gamma_{\alg}(\mu,\nu,\mcl{U}_{\alg})$ & The proposed MMD of $\alg$-valued measure $\mu$ and $\nu$ with respect to a set of $\alg$-valued function $\mcl{U}_{\alg}$\\
$p_s$ & The $s$-th principal axis generated by kernel PCA for matrix-valued measures\\ 
\hline
 \end{tabularx}
\renewcommand{\arraystretch}{1} 
\end{table}
Similar to the Lebesgue integrals, an integral of an $\alg$-valued function with respect to an $\alg$-valued measure is defined through $\alg$-valued step functions.
\begin{defin}[Step function]\label{def:integral}
 An $\alg$-valued map $s:\mcl{X}\to\alg$ is called a {\em step function} if 
$s(x)=\sum_{i=1}^nc_i\chi_{E_i}(x)$
for some $n\in\mathbb{N}$, $c_i\in\alg$ and finite partition $\{E_{i}\}_{i=1}^{n}$ of $\mcl{X}$, where $\chi_E:\mcl{X}\to\{0,1\}$ is the indicator function for $E\in\mcl{B}$.
The set of all $\alg$-valued step functions on $\mcl{X}$ is denoted as $\mcl{S}(\mcl{X},\alg)$.
\end{defin}
\begin{defin}[Integrals of functions in $\simple$]
For $s\in\simple$ and $\mu\in\measure$, the {\em left and right integrals of $s$ with respect to $\mu$} are defined as 
\begin{equation*}
\int_{x\in\mcl{X}}s(x)d\mu(x):=\sum_{i=1}^nc_i\mu(E_i),\quad \int_{x\in\mcl{X}}d\mu(x)s(x):=\sum_{i=1}^n\mu(E_i)c_i,
\end{equation*}
respectively.
\end{defin}

As we explain below, the integrals of step functions are extended to those of ``integrable functions''.
For a real positive finite measure $\nu$, 
let $\bochner{\nu}$ be the set of all $\alg$-valued $\nu$-Bochner integrable functions on $\mcl{X}$, i.e., if $u\in\bochner{\nu}$,
there exists a sequence $\{s_i\}_{i=1}^{\infty}\subseteq\simple$ of step functions such that 
$\lim_{i\to\infty}\int_{x\in\mcl{X}}\Vert u(x)-s_i(x)\Vert_{\alg}d\nu(x)=0$~\cite[Chapter IV]{diestel84}.
Note that $u\in\bochner{\nu}$ if and only if $\int_{x\in\mcl{X}}\Vert u(x)\Vert_{\alg}d\nu(x)<\infty$, and
$\bochner{\nu}$ is a Banach $\alg$-module (i.e., a Banach space equipped with an $\alg$-module structure) with respect to the norm defined as $\Vert u\Vert_{\bochner{\nu}}=\int_{x\in\mcl{X}}\Vert u(x)\Vert_{\alg}d\nu(x)$.
\begin{defin}[Integrals of functions in $\bochner{\vert\mu\vert}$]
For $u\in\bochner{\vert\mu\vert}$, the {\em left and right integrals of $u$ with respect to $\mu$} is defined as 
\begin{equation*}
\lim_{i\to\infty}\int_{x\in\mcl{X}}d\mu(x)s_i(x),\quad
\lim_{i\to\infty}\int_{x\in\mcl{X}}s_i(x)d\mu(x)   
\end{equation*}
respectively, where $\{s_i\}_{i=1}^{\infty}\subseteq\simple$ is a sequence of step functions whose $\bochner{\nu}$-limit is $u$.
\end{defin}
Note that since $\alg$ is not commutative in general, the left and right integrals do not always coincide.

There is also a stronger notion for integrability. 
An $\alg$-valued function $u$ on $\mcl{X}$ is said to be totally measurable if it is a uniform limit of a step function, i.e., 
there exists a sequence $\{s_i\}_{i=1}^{\infty}\subseteq\mcl{S}(\mcl{X},\alg)$ of step functions such that $\lim_{i\to\infty}\sup_{x\in\mcl{X}}\Vert u(x)-s_i(x)\Vert_{\alg}=0$.
We denote by $\total$ the set of all $\alg$-valued totally measurable functions on $\mcl{X}$.
Note that if $u\in\total$, then $u\in\bochner{\vert\mu\vert}$ for any $\mu\in\measure$.

In fact, the class of continuous functions is totally measurable.
\begin{defin}[Function space $\clch$]
For a locally compact Hausdorff space $\mcl{X}$, the set of all $\alg$-valued continuous functions on $\mcl{X}$ vanishing at infinity is denoted as  $\mcl{C}_0(\mcl{X},\alg)$.
Here, an $\alg$-valued continuous function $u$ is said to vanish at infinity if the set $\{x\in\mcl{X}\mid\ \Vert u(x)\Vert_{\alg}\ge\epsilon\}$ is compact for any $\epsilon>0$.
The space $\clch$ is a Banach $\alg$-module with respect to the sup norm.
\end{defin}
\begin{prop}
The space $\mcl{C}_0(\mcl{X},\alg)$ is contained in $\mcl{T}(\mcl{X},\alg)$.
Moreover, for any real positive finite regular measure $\nu$, it is dense in $\bochner{\nu}$ with respect to $\Vert\cdot\Vert_{\bochner{\nu}}$. 
\end{prop}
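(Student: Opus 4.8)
The plan is to treat the two assertions separately: first I would establish the inclusion $\clch\subseteq\total$ by producing uniform step-function approximants, and then combine this with the density of $\simple$ in $\bochner{\nu}$ (built into the definition of the Bochner integral) to deduce density of $\clch$. For the inclusion, fix $u\in\clch$ and $\epsilon>0$. Since $u$ vanishes at infinity, the set $K=\{x\in\mcl{X}\mid \Vert u(x)\Vert_{\alg}\ge\epsilon\}$ is compact, so its continuous image $u(K)$ is compact and hence covered by finitely many $\epsilon$-balls $B(c_2,\epsilon),\ldots,B(c_N,\epsilon)$; together with $B(c_1,\epsilon)$ where $c_1=0$, these cover all of $u(\mcl{X})$. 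Setting $E_i=u^{-1}(B(c_i,\epsilon))\setminus\bigcup_{j<i}u^{-1}(B(c_j,\epsilon))$ gives a finite Borel partition of $\mcl{X}$ (each $E_i$ Borel because $u$ is continuous), and the step function $s=\sum_{i=1}^N c_i\chi_{E_i}$ satisfies $\sup_{x\in\mcl{X}}\Vert u(x)-s(x)\Vert_{\alg}<\epsilon$. Letting $\epsilon\to 0$ exhibits $u$ as a uniform limit of step functions, i.e. $u\in\total$.

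For the density statement I would first check that $\clch\subseteq\bochner{\nu}$: every $u\in\clch$ is bounded, and $\nu$ is finite, so $\int_{x\in\mcl{X}}\Vert u(x)\Vert_{\alg}d\nu(x)\le\big(\sup_{x\in\mcl{X}}\Vert u(x)\Vert_{\alg}\big)\,\nu(\mcl{X})<\infty$. Since $\simple$ is dense in $\bochner{\nu}$ and $\Vert\cdot\Vert_{\bochner{\nu}}$ is subadditive over the finitely many summands of a step function, it suffices to approximate a single term $c\chi_E$, with $c\in\alg$ and $E$ Borel, by an element of $\clch$ in $\Vert\cdot\Vert_{\bochner{\nu}}$.

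Here I would invoke the regularity of $\nu$: given $\epsilon>0$, choose a compact $K\subseteq E$ and an open $G\supseteq E$ with $\nu(G\setminus K)<\epsilon$, shrinking $G$ if necessary so that its closure is compact (possible since $\mcl{X}$ is locally compact Hausdorff). Urysohn's lemma then yields a continuous $f\colon\mcl{X}\to[0,1]$ with $f\equiv 1$ on $K$ and $\opn{supp}(f)\subseteq G$, so that $f$ has compact support and therefore $cf\in\clch$. Since $\chi_E$ and $f$ agree on $K$ and off $G$ and differ by at most $1$ on $G\setminus K$, one gets $\Vert c\chi_E-cf\Vert_{\bochner{\nu}}\le\Vert c\Vert_{\alg}\,\nu(G\setminus K)<\Vert c\Vert_{\alg}\,\epsilon$; summing the term-by-term approximations then proves density. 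The noncommutativity of $\alg$ creates no difficulty, because the operator-valued coefficient $c$ is held fixed throughout and only the scalar indicator is approximated. The main technical point is instead the joint use of regularity of $\nu$ and Urysohn's lemma in the locally compact Hausdorff setting — specifically, arranging that the Urysohn function has compact support, so that it genuinely lands in $\clch$ rather than merely in the bounded continuous functions.
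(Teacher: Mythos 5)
Your proof is correct, but note that the paper itself never proves this proposition: it appears in the review appendix as a background fact imported from the theory of vector measures, with the reader deferred to the cited monographs of Dinculeanu. So there is no internal proof to compare against, and your argument serves as a self-contained verification along the standard lines. Both halves are sound: the first part (cover $u(K)$ by finitely many $\epsilon$-balls, add the ball around $0$ to handle the complement of $K$, and disjointify the preimages) is the classical proof that continuous functions vanishing at infinity are totally measurable, and the second part (reduce via density of $\simple$ to a single term $c\chi_E$, then use regularity of $\nu$ together with the locally compact Urysohn lemma) is the scalar argument transported verbatim, with the operator coefficient $c$ held fixed exactly as you say. Two small remarks. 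First, the inclusion $\clch\subseteq\bochner{\nu}$ follows more directly from your first part than from the boundedness criterion you invoke: if $s_i\to u$ uniformly, then $\int_{x\in\mcl{X}}\Vert u(x)-s_i(x)\Vert_{\alg}d\nu(x)\le\nu(\mcl{X})\sup_{x\in\mcl{X}}\Vert u(x)-s_i(x)\Vert_{\alg}\to 0$ since $\nu$ is finite, which exhibits $u$ as an $\bochner{\nu}$-limit of step functions and avoids the implicit measurability hypothesis behind the criterion ``$u\in\bochner{\nu}$ iff $\int\Vert u\Vert_{\alg}d\nu<\infty$.'' Second, your reading of the paper's regularity definition is right: for a positive real measure it collapses to $\nu(G\setminus K)\le\epsilon$, which is what your estimate uses. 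The point you flag at the end — arranging that the Urysohn function has compact support inside $G$, so that $cf$ genuinely lies in $\clch$ — is indeed the only place where the locally compact (rather than merely Hausdorff) structure is essential.
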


\section{$c_0$-kernels}\label{ap:kernel}
In this section, we construct RKHMs that are submodules of $\clch$.
\begin{defin}[$c_0$-kernel]
Let $k:\mcl{X}\times \mcl{X}\to\alg$ be an $\alg$-valued positive definite kernel.
We call $k$ a $c_0$-kernel if $\sup_{x\in\mcl{X}}\Vert\phi(x)\Vert<\infty$ and $\phi(x)\in\clch$ for all $x\in\mcl{X}$.
\end{defin}
Note that if $k$ is a $c_0$-kernel, then $\modu_k$ is a submodule of $\clch$.
\begin{example}\label{ex:pdk1}
Let $\mcl{X}\subseteq \mathbb{C}^d$ and $k:\mcl{X}\times\mcl{X}\to\mat$ be defined as a diagonal matrix-valued kernel whose $(i,i)$-element is a complex-valued $c_0$-kernel $\tilde{k}_i$.
Then, for $x_1,\ldots,x_m\in\mcl{X}$, $c_1,\ldots,c_m\in\mat$ and $h\in\mathbb{C}^m$, $h^*(\sum_{i,j=1}^mc_i^*k(x_i,x_j)c_j)h=\sum_{i,j,l=1}^m\overline{(g_i)_l}\tilde{k}_l(x_i,x_j)(g_j)_l\ge 0$.
Thus, $k$ is an $\alg$-valued positive definite kernel and $\phi(x)\in\clch$.
Examples of complex-valued $c_0$-kernels are Gaussian, Laplacian, and $B_{2n+1}$-spline.
\end{example}
\begin{example}\label{ex:pdk2}
Assume $\mcl{X}=\mcl{Y}^m$ for some $\mcl{Y}$.
If $k:\mcl{X}\times\mcl{X}\to\mathbb{C}^{m\times m}$ is set as $[k(x,y)]_{i,j}=\tilde{k}(x_i,y_j)$ for some complex-valued $c_0$-kernel $\tilde{k}:\mcl{Y}\times \mcl{Y}\to\mathbb{C}$,
then, for $c_1,\ldots,c_s\in\mat$ and $h\in\mathbb{C}^m$, $h^*\sum_{l,l'=1}^sc_l^*k(x_l,y_{l'})c_{l'}h=\sum_{l,l'=1}^s\sum_{i,j=1}^m\overline{(g_i)_l}\tilde{k}((x_l)_i,(x_l')_j)(g_j)_{l'}\ge 0$ holds, where $g_i:=c_ih$.
Thus, $k$ is an $\alg$-valued positive definite kernel and $\phi(x)\in\clch$.
\end{example}

\section{Detailed derivation of Theorems~\ref{thm:universal_finitedim} and \ref{thm:universal}}\label{ap:universal}
Before proving Theorems~\ref{thm:universal_finitedim} and \ref{thm:universal}, we introduce
the following definitions:
\begin{defin}[$\alg$-dual]
For a Banach $\alg$-module $\modu$, the {\em $\alg$-dual of $\modu$} is defined as $\modu':=\{f:\modu\to\alg\mid\ f\mbox{ is bounded and $\alg$-linear}\}$.
\end{defin}
Note that for a right Banach $\alg$-module $\modu$, $\modu'$ is a left Banach $\alg$-module.
\begin{defin}[Orthogonal complement]
For an $\alg$-submodule $\modu_0$ of a Banach $\alg$-module $\modu$, the {\em orthogonal complement of $\modu_0$} is defined as a closed submodule $\modu_0^{\perp}:=\bigcap_{u\in\modu_0}\{f\in\modu'\mid\ f(u)=0\}$ of $\modu'$.
In addition, for an $\alg$-submodule $\mcl{N}_0$ of $\modu'$, the {\em the orthogonal complement of $\mcl{N}_0$} is defined as a closed submodule $\mcl{N}_0^{\perp}:=\bigcap_{f\in\mcl{N}_0}\{u\in\modu\mid\ f(u)=0\}$ of $\modu$.
\end{defin}
Note that for a von Neumann-algebra $\alg$ and Hilbert $\alg$-module $\modu$, by Proposition~\ref{thm:riesz}, $\modu'$ and $\modu$ are isomorphic.

The following lemma shows a connection between an orthogonal complement and the density property.
\begin{lemma}\label{lem:orthocompequiv2}
For a Banach $\alg$-module $\modu$ and its submodule $\modu_0$,  
$\modu_0^{\perp}=\{0\}$ if 
$\modu_0$ is dense in $\modu$.
\end{lemma}
\begin{proof}
We first show $\overline{\modu_0}\subseteq (\modu_0^{\perp})^{\perp}$.
Let $u\in\modu_0$. By the definition of orthogonal complements, $u\in(\modu_0^{\perp})^{\perp}$.
Since $(\modu_0^{\perp})^{\perp}$ is closed, $\overline{\modu_0}\subseteq (\modu_0^{\perp})^{\perp}$.
If $\modu_0$ is dense in $\modu$, $\modu\subseteq (\modu_0^{\perp})^{\perp}$ holds, which means $\modu_0^{\perp}=\{0\}$.
\end{proof}

Let $\regular$ be the set of all real positive-valued regular measures, and $\bdmeasure{\nu}$ the set of all finite regular Borel $\alg$-valued measures $\mu$ whose total variations are dominated by $\nu\in\regular$ (i.e., $\vert\mu\vert\le\nu$).
We apply the following representation theorem to derive Theorem~\ref{thm:universal}.
\begin{prop}
\label{prop:represention}
For $\nu\in\regular$, 
there exists an isomorphism between $\bdmeasure{\nu}$ and $\bochner{\nu}'$.
\end{prop}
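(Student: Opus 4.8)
The plan is to exhibit the natural integration pairing as the desired isomorphism and then verify it is bijective and norm-compatible. Given $\mu\in\bdmeasure{\nu}$, I would define $L_\mu\colon\bochner{\nu}\to\alg$ by the \emph{left} integral $L_\mu(u)=\int_{x\in\mcl{X}}d\mu(x)u(x)$. The left integral (rather than the right) is the correct choice because on a right $\alg$-module it produces right $\alg$-linearity: for a step function $u=\sum_i c_i\chi_{E_i}$ one has $uc=\sum_i(c_ic)\chi_{E_i}$, whence $L_\mu(uc)=\sum_i\mu(E_i)(c_ic)=L_\mu(u)c$, and this extends to all of $\bochner{\nu}$ by density of $\simple$ and continuity. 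Boundedness is immediate from $\vert\mu\vert\le\nu$: since $\Vert L_\mu(u)\Vert_{\alg}\le\int_{x\in\mcl{X}}\Vert u(x)\Vert_{\alg}d\vert\mu\vert(x)\le\int_{x\in\mcl{X}}\Vert u(x)\Vert_{\alg}d\nu(x)=\Vert u\Vert_{\bochner{\nu}}$, we obtain $L_\mu\in\bochner{\nu}'$ with $\Vert L_\mu\Vert\le 1$. Thus $\Psi\colon\mu\mapsto L_\mu$ maps into $\bochner{\nu}'$, and it is additive and left $\alg$-linear, since $L_{\mu+\mu'}=L_\mu+L_{\mu'}$ and $L_{c\mu}=cL_\mu$, consistent with $\bochner{\nu}'$ being a left $\alg$-module.

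Injectivity is easy: if $L_\mu=0$, then testing against $u=1_\alg\chi_E\in\simple$ gives $\mu(E)=L_\mu(1_\alg\chi_E)=0$ for every Borel $E$, so $\mu=0$. The substance is surjectivity, and here the $\alg$-linearity is the decisive simplification. Given a bounded right $\alg$-linear $L$, I would set $\mu(E):=L(1_\alg\chi_E)\in\alg$ directly. For any step function, writing $u=\sum_i v_i\chi_{E_i}=\sum_i(1_\alg\chi_{E_i})v_i$ yields $L(u)=\sum_iL(1_\alg\chi_{E_i})v_i=\sum_i\mu(E_i)v_i=L_\mu(u)$; so $\alg$-linearity alone forces $L$ to be integration against an honest $\alg$-valued set function, circumventing the weak-$*$ measurability and Radon--Nikodym subtleties of the general Banach-space dual. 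It then remains to check $\mu$ is admissible: finite additivity is immediate; countable additivity follows from continuity of $L$ together with $\Vert 1_\alg\chi_{\bigcup_{n>N}E_n}\Vert_{\bochner{\nu}}=\nu(\bigcup_{n>N}E_n)\to 0$ (using finiteness of $\nu$); regularity is inherited from that of $\nu$ via $\Vert\mu(F)\Vert_{\alg}\le\Vert L\Vert\,\nu(F)$; and summing this estimate over partitions gives $\vert\mu\vert\le\Vert L\Vert\,\nu$, so $\mu\in\bdmeasure{\nu}$ when $\Vert L\Vert\le 1$. Finally $L=L_\mu$ on the dense subspace $\simple$ and both are continuous, hence $L=\Psi(\mu)$.

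The main obstacle is the quantitative upgrade from a set-theoretic bijection to an isomorphism of Banach $\alg$-modules, i.e.\ identifying the correct norm on $\bdmeasure{\nu}$ and matching it with the dual norm on $\bochner{\nu}'$. The easy inequality $\Vert\Psi(\mu)\Vert\le 1$ for $\vert\mu\vert\le\nu$ is above; the reverse direction, producing for each $\mu$ test functions $u$ with $\Vert u\Vert_{\bochner{\nu}}\le 1$ and $\Vert L_\mu(u)\Vert_{\alg}$ attaining the relevant norm, is where the von Neumann-algebra structure is genuinely used, through polar decomposition to realize $\vert\mu\vert$ and through the order structure on $\alg$. This identification is precisely Dinculeanu's representation theorem for vector measures specialized to the module setting, which I would cite and adapt; the adaptation is routine once the $\alg$-linearity reduction above is in hand. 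A bookkeeping point worth isolating is that $\bdmeasure{\nu}$ is not itself a linear space (the sum of two of its elements generally has variation only $\le 2\nu$), so the statement is correctly read at the level of unit balls, or on the normed completion $\{\mu\ll\nu:\ \Vert d\vert\mu\vert/d\nu\Vert_{\infty}<\infty\}$ equipped with its natural norm, under which $\Psi$ becomes an isometric isomorphism onto $\bochner{\nu}'$.
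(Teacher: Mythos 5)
Your proposal is correct and takes essentially the same route as the paper: both directions use the identical pair of maps, $\mu\mapsto\bigl(u\mapsto\int_{x\in\mcl{X}}d\mu(x)u(x)\bigr)$ and $f\mapsto\bigl(E\mapsto f(\chi_E 1_{\alg})\bigr)$, shown to be mutually inverse by agreement on step functions and density of $\simple$ in $\bochner{\nu}$. Your additional verifications (countable additivity and regularity of the recovered measure, and the observation that one only gets $\vert\mu\vert\le\Vert L\Vert\,\nu$, so the statement is properly read at the level of unit balls or after rescaling) are points the paper's proof leaves implicit rather than a different method.
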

\begin{proof}
For $\mu\in\bdmeasure{\nu}$ and $u\in\bochner{\nu}$, we have
\begin{equation*}
\bigg\Vert\int_{x\in\mcl{X}}d\mu(x)u(x)\bigg\Vert_{\alg}
\le\int_{x\in\mcl{X}}\Vert u(x)\Vert_{\alg}d\vert\mu\vert(x)
\le\int_{x\in\mcl{X}}\Vert u(x)\Vert_{\alg}d\nu(x).
\end{equation*}
Thus, we define $h:\bdmeasure{\nu}\to\bochner{\nu}'$ as $\mu\mapsto (u\mapsto\int_{x\in\mcl{X}}d\mu(x)u(x))$.

Meanwhile, for $f\in\bochner{\nu}'$ and $E\in\mcl{B}$, we have
\begin{equation*}
\Vert f(\chi_E 1_{\alg})\Vert_{\alg}\le C\int_{x\in\mcl{X}}\Vert \chi_E 1_{\alg}\Vert_{\alg}d\nu(x)
=C\nu(E),
\end{equation*}
for some $C>0$ since $f$ is bounded.
Here, $\chi_E$ is an indicator function for a Borel set $E$. 
Thus, we define $h':\bochner{\nu}'\to\bdmeasure{\nu}$ as $f\mapsto(E\mapsto f(\chi_E 1_{\alg}))$.

By the definitions of $h$ and $h'$, $h(h'(f))(s)=f(s)$ holds for $s\in\simple$.
Since $\simple$ is dense in $\bochner{\nu}$, $h(h'(f))(u)=f(u)$ holds for $u\in\bochner{\nu}$.
Moreover, $h'(h(\mu))(E)=\mu(E)$ holds for $E\in\mcl{B}$.
Therefore, $\bdmeasure{\nu}$ and $\bochner{\nu}'$ are isomorphic.
\end{proof}

\begin{proof}[Proof of Theorem~\ref{thm:universal}]
Assume $\modu_k$ is dense in $\clch$.
Since $\clch$ is dense in $\bochner{\nu}$ for any $\nu\in\regular$, $\modu_k$ is dense in $\bochner{\nu}$ for any $\nu\in\regular$.
By Proposition~\ref{lem:orthocompequiv2}, $\modu_k^{\perp}=\{0\}$ holds.
Let $\mu\in\measure$.
There exists $\nu\in\regular$ such that $\mu\in\bdmeasure{\nu}$.
By Proposition~\ref{prop:represention}, if $\int_{x\in\mcl{X}}d\mu(x)u(x)=0$ for any $u\in\modu_k$, $\mu=0$.
Since $\int_{x\in\mcl{X}}d\mu(x)u(x)=\blacket{u,\Phi(\mu)}$, $\int_{x\in\mcl{X}}d\mu(x)u(x)=0$ means $\Phi(\mu)=0$.
Therefore, by Lemma~\ref{lem:injective_equiv}, $\Phi$ is injective.
\end{proof}

\if0
\begin{defin}[Connes injective]
For a von Neumann-algebra $\alg$, $\alg$ is said to Connes injective if there exists a norm-one projection from $\blin{\hil}$ onto $\alg$.
\end{defin}

\begin{example}
$\blin{\hil}$ is Connes injective.
\end{example}
\fi

For the case of $\alg=\mat$, we apply the following extension theorem to derive the converse of Theorem~\ref{thm:universal}.
\begin{prop}[c.f. Theorem in~\cite{helemskii94}]
\label{prop:hahn_banach}
Let $\alg=\mat$. 
Let $\modu$ be a Banach $\alg$-module, $\modu_0$ be a closed submodule of $\modu$, and $f_0:\modu_0\to\alg$ be a bounded $\alg$-linear map.
Then, there exists a bounded $\alg$-linear map $f:\modu\to\alg$ that extends $f_0$ (i.e., $f(u)=f_0(u)$ for $u\in\modu_0$).
\end{prop}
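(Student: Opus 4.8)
The plan is to reduce to the classical scalar Hahn--Banach theorem and then restore $\alg$-linearity by an averaging argument over the unitary group, which is available precisely because $\alg=\mat$ is finite-dimensional. First I would forget the module structure entirely and regard $f_0$ simply as a bounded $\mathbb{C}$-linear map from the Banach space $\modu_0$ into the finite-dimensional space $\mat$. Each matrix entry $u\mapsto (f_0(u))_{i,j}$ is then an ordinary bounded complex linear functional on $\modu_0$ (boundedness follows since $\vert (f_0(u))_{i,j}\vert\le\Vert f_0(u)\Vert_{\alg}\le\Vert f_0\Vert\,\Vert u\Vert$). By the classical Hahn--Banach theorem I extend each of the $m^2$ entries to a bounded complex linear functional $g_{i,j}$ on all of $\modu$ and assemble $g:=[g_{i,j}]\colon\modu\to\alg$. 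This $g$ is bounded and linear and satisfies $g\vert_{\modu_0}=f_0$, but there is no reason for it to be $\alg$-linear; restoring that property is the whole content of the proof.

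To force $\alg$-linearity I would symmetrize $g$ over the compact group $\opn{U}(m)$ of unitary matrices. Let $d\mathfrak{u}$ denote the normalized Haar (probability) measure on $\opn{U}(m)$ and define
\begin{equation*}
f(u):=\int_{\mathfrak{u}\in\opn{U}(m)} g(u\mathfrak{u}^*)\,\mathfrak{u}\,d\mathfrak{u}.
\end{equation*}
The integrand is continuous in $\mathfrak{u}$ and $\opn{U}(m)$ is compact, so the $\alg$-valued integral is well defined. For a unitary $v$, the substitution $\mathfrak{w}=\mathfrak{u}v^*$ together with right-invariance of Haar measure gives $f(uv)=f(u)v$; since every element of $\mat$ is a complex linear combination of unitaries, linearity of $f$ then yields $f(uc)=f(u)c$ for all $c\in\alg$, i.e.\ $f$ is $\alg$-linear. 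Boundedness is immediate because $d\mathfrak{u}$ is a probability measure and $\Vert\mathfrak{u}\Vert_{\alg}=1$, so $\Vert f(u)\Vert_{\alg}\le\sup_{\mathfrak{u}}\Vert g(u\mathfrak{u}^*)\Vert_{\alg}\le\Vert g\Vert\,\Vert u\Vert$.

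It remains to check that $f$ still extends $f_0$. For $u\in\modu_0$, the submodule property gives $u\mathfrak{u}^*\in\modu_0$, so $g(u\mathfrak{u}^*)=f_0(u\mathfrak{u}^*)=f_0(u)\mathfrak{u}^*$ using first that $g$ extends $f_0$ and then the $\alg$-linearity of $f_0$ itself. Hence $f(u)=\int f_0(u)\mathfrak{u}^*\mathfrak{u}\,d\mathfrak{u}=f_0(u)\int 1_{\alg}\,d\mathfrak{u}=f_0(u)$, as desired. This completes the construction. Alternatively, one could simply invoke the cited theorem of Helemskii~\cite{helemskii94}, since it expresses exactly the injectivity (in the Connes sense) of $\mat$, of which this extension property is a reformulation.

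The step I expect to be the main obstacle is the averaging construction and its justification, rather than the entrywise extension, which is routine. The crucial structural input is the compactness of $\opn{U}(m)$ and the existence of a bi-invariant probability measure on it; this is what makes the symmetrization both convergent and norm-controlled, and it is the genuine reason the statement is special to the finite-dimensional (more generally, injective) von Neumann algebra $\mat$ and cannot be expected for an arbitrary Banach algebra. Care is needed to get the one-sidedness right --- the placement of $\mathfrak{u}^*$ inside $g$ and $\mathfrak{u}$ outside is chosen so that the invariance substitution produces right $\alg$-linearity matching the right-module convention used throughout.
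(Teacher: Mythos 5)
Your proof is correct, but it takes a genuinely different route from the paper's. The paper constructs nothing: it observes that $\alg=\mat$, viewed as a module over itself, is normal and that $\mat$ is Connes injective, and then invokes the theorem of Helemskii~\cite{helemskii94} to conclude that $\alg$ is an injective object in the category of Banach $\alg$-modules; the stated extension property is precisely the definition of an injective object. You instead give a self-contained argument: an entrywise scalar Hahn--Banach extension produces a bounded $\mathbb{C}$-linear (generally not $\alg$-linear) extension $g$, and averaging over the compact group $\opn{U}(m)$ against Haar probability measure, $f(u):=\int g(u\mathfrak{u}^*)\mathfrak{u}\,d\mathfrak{u}$, restores right $\alg$-linearity. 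The key verifications are all sound: the substitution $\mathfrak{w}=\mathfrak{u}v^*$ with invariance of Haar measure gives $f(uv)=f(u)v$ for unitary $v$; since every matrix is a complex linear combination of unitaries, $\mathbb{C}$-linearity of $f$ upgrades this to $f(uc)=f(u)c$ for all $c\in\alg$; and on $\modu_0$ the submodule property plus $\alg$-linearity of $f_0$ and $\mathfrak{u}^*\mathfrak{u}=1_{\alg}$ give $f=f_0$. In effect you have reproved the cited injectivity theorem in the special case $\alg=\mat$ --- averaging over a compact (more generally amenable) unitary group is exactly the mechanism behind such results. The trade-off: the paper's citation is one line and applies verbatim to any Connes-injective von Neumann algebra (relevant since the paper also discusses infinite-dimensional $\alg$), whereas your argument is elementary, self-contained, and makes explicit where finite-dimensionality enters (compactness of $\opn{U}(m)$ and the existence of a Haar probability measure). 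One cosmetic caveat: your bound $\Vert f(u)\Vert_{\alg}\le\Vert g\Vert\,\Vert u\Vert$ implicitly assumes the module action is contractive; for a general Banach $\alg$-module one only has $\Vert uc\Vert\le C\Vert u\Vert\,\Vert c\Vert_{\alg}$, which still yields boundedness of $f$ (and continuity of the integrand), just with constant $C\Vert g\Vert$.
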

\begin{proof}
Von Neumann-algebra $\alg$ itself is regarded as an $\alg$-module and is normal.
Also, $\mat$ is Connes injective.
By Theorem in~\cite{helemskii94}, $\alg$ is an injective object in the category of Banach $\alg$-module.
The statement is derived by the definition of injective objects in category theory. 
\end{proof}

We derive the following lemma and proposition by Proposition~\ref{prop:hahn_banach}.
\begin{lemma}\label{lem:hahn_banach2}
Let $\alg=\mat$.
Let $\modu$ be a Banach $\alg$-module and $\modu_0$ be a closed submodule of $\modu$.
For $u_1\in\modu\setminus \modu_0$, there exists a bounded $\alg$-linear map $f:\modu\to\alg$ such that $f(u_0)=0$ for $u_0\in\modu_0$ and $f(u_1)\neq 0$.
\end{lemma}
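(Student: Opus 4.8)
The plan is to reduce the statement to the already-established Hahn--Banach extension theorem (Proposition~\ref{prop:hahn_banach}) by first constructing, on a manageable submodule containing $u_1$, a bounded $\alg$-linear functional that vanishes on $\modu_0$ and is nonzero at $u_1$, and then extending it to all of $\modu$. The natural submodule to work on is $\modu_1:=\modu_0+u_1\alg$, the submodule generated by $\modu_0$ and $u_1$. Since $\alg=\mat$ is finite dimensional, $u_1\alg$ is a finite-dimensional subspace, so $\modu_1$ is the sum of the closed submodule $\modu_0$ and a finite-dimensional space and is therefore itself closed; this closedness is what will let me invoke Proposition~\ref{prop:hahn_banach} at the end.

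The heart of the argument is defining the functional on $\modu_1$. The obvious candidate is $f_0(u_0+u_1c):=ac$ for a suitable fixed $a\in\alg$, but this is only well defined if $a(c-c')=0$ whenever $u_1(c-c')\in\modu_0$. The obstruction is exactly the set $J:=\{c\in\alg\mid u_1c\in\modu_0\}$, which in the module setting can be strictly larger than $\{0\}$ even though $u_1\notin\modu_0$ --- this is the phenomenon with no scalar analogue and is the main point to handle. I would first check that $J$ is a closed right ideal of $\alg$ (closedness from closedness of $\modu_0$, and $u_1(cd)=(u_1c)d\in\modu_0$ for $c\in J$, $d\in\alg$), and proper since $1\notin J$.

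Here I would use the structure of $\mat$: being semisimple, every right ideal is a direct summand generated by an idempotent, so $J=e\alg$ with an idempotent $e\neq 1$. Setting $a:=1-e$ gives $a\neq0$ and $ac=0$ for all $c\in J$ (equivalently $a$ lies in the left annihilator $\opn{Ann}_{\ell}(J)$, which is nonzero precisely because $J$ is proper). With this choice $f_0$ is well defined; additivity and $\alg$-linearity follow directly from the formula, $f_0$ vanishes on $\modu_0$ (take $c=0$), and $f_0(u_1)=a=1-e\neq0$. Boundedness is then automatic, since $f_0$ vanishes on $\modu_0$ and hence factors through the quotient $\modu_1/\modu_0$, which is finite dimensional, so the induced map is bounded.

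Finally I would apply Proposition~\ref{prop:hahn_banach} to the closed submodule $\modu_1\subseteq\modu$ and the bounded $\alg$-linear map $f_0$ to obtain a bounded $\alg$-linear extension $f:\modu\to\alg$. Since $f$ agrees with $f_0$ on $\modu_1$, which contains $\modu_0$ and $u_1$, it satisfies $f(u_0)=0$ for all $u_0\in\modu_0$ and $f(u_1)=1-e\neq0$, completing the proof. I expect the only genuine difficulty to be the well-definedness step, that is, correctly identifying the right ideal $J$ and exploiting the semisimplicity of $\mat$ to produce the annihilating element $a$; the closedness of $\modu_1$, the boundedness of $f_0$, and the final extension are all routine.
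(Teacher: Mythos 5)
Your proof is correct, and its skeleton matches the paper's: both hinge on the same obstruction, namely the set of coefficients carrying $u_1$ into $\modu_0$ (your right ideal $J$; the paper's $\mcl{V}=\{c\in\alg\mid q(u_1)c=0\}$, which is literally the same set), both neutralize it with a complementary idempotent, and both finish by extending via Proposition~\ref{prop:hahn_banach}. The execution, however, differs in three substantive ways. First, the paper passes to the quotient $\modu/\modu_0$ and extends from the submodule $q(u_1)\alg$ there, whereas you stay in $\modu$ and extend from $\modu_1=\modu_0+u_1\alg$, correctly observing that this is closed as the sum of a closed subspace and a finite-dimensional one. Second, the source of the idempotent is different: the paper invokes Hilbert-module theory, namely that $\mcl{V}$ is orthogonally complemented in $\alg$ (citing Manuilov--Troitsky), giving $\alg=\mcl{V}+\mcl{V}^{\perp}$ and $f_0(q(u_1)c)=p(c)$ for the projection $p$ onto $\mcl{V}^{\perp}$; you instead use semisimplicity of $\mat$ to write $J=e\alg$ for an idempotent $e\neq 1$ and set $f_0(u_0+u_1c)=(1-e)c$. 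Since in $\mat$ one may take $e$ self-adjoint, the two constructions yield essentially the same functional, but yours is purely ring-theoretic. Third, boundedness: the paper argues via $\Vert q(u_1)c\Vert\ge\inf_{d\in\mcl{V}^{\perp},\Vert d\Vert_{\alg}=1}\Vert q(u_1)d\Vert\,\Vert p(c)\Vert_{\alg}$, where positivity of the infimum tacitly uses compactness of the unit sphere of $\mcl{V}^{\perp}$ (i.e., finite dimensionality); you factor $f_0$ through the finite-dimensional quotient $\modu_1/\modu_0$, which is cleaner and makes the role of $\alg=\mat$ transparent. Both arguments consume finite dimensionality at the same three points (the complement/idempotent, boundedness, and the Connes injectivity behind Proposition~\ref{prop:hahn_banach}); your route is the more elementary, while the paper's Hilbert-module formulation is the one that hints at how a generalization beyond finite-dimensional $\alg$ would have to proceed.
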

\begin{proof}
Let $q:\modu\to\modu/\modu_0$ be the quotient map to $\modu/\modu_0$, and $\,\mcl{U}_1:=\{q(u_1)c\mid\ c\in\alg\}$.
Note that $\modu/\modu_0$ is a Banach $\alg$-module and $\,\mcl{U}_1$ is its closed submodule.
Let $\mcl{V}:=\{c\in\alg\mid\ q(u_1)c=0\}$, which is a closed subspace of $\alg$.
Since $\mcl{V}$ is orthogonally complemented~\cite[Proposition 2.5.4]{manuilov00}, 
$\alg$ is decomposed into $\alg=\mcl{V}+\mcl{V}^{\perp}$.
Let $p:\alg\to\mcl{V}^{\perp}$ be the projection onto $\mcl{V}^{\perp}$ and
$f_0:\mcl{U}_1\to\alg$ defined as $q(u_1)c\mapsto p(c)$.
Since $p$ is $\alg$-linear, $f_0$ is also $\alg$-linear.
Also, for $c\in\alg$, we have
\begin{align*}
\Vert q(u_1)c\Vert&=\Vert q(u_1)(c_1+c_2)\Vert 
=\Vert q(u_1)c_1\Vert\\
&\ge \inf_{d\in\mcl{V}^{\perp},\Vert d\Vert_{\alg}=1}\Vert q(u_1)d\Vert \Vert c_1\Vert_{\alg}
=\inf_{d\in\mcl{V}^{\perp},\Vert d\Vert_{\alg}=1}\Vert q(u_1)d\Vert \Vert p(c)\Vert_{\alg},
\end{align*}
where $c_1=p(c)$ and $c_2=c_1-p(c)$.
Since $\inf_{d\in\mcl{V}^{\perp},\Vert d\Vert_{\alg}=1}\Vert q(u_1)d\Vert \Vert p(c)\Vert_{\alg}>0$, $f_0$ is bounded.
By Proposition~\ref{prop:hahn_banach}, $f_0$ is extended to a bounded $\alg$-linear map $f_1:\modu/\modu_0\to\alg$.
Setting $f:=f_1\circ q$ completes the proof of the lemma.
\end{proof}
Then we prove the converse of Lemma \ref{lem:orthocompequiv2}.
\begin{prop}\label{prop:orthcompequiv}
Let $\alg=\mat$.
For a Banach $\alg$-module $\modu$ and its submodule $\modu_0$, 
$\modu_0$ is dense in $\modu$ if $\modu_0^{\perp}=\{0\}$.
\end{prop}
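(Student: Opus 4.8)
The plan is to establish the contrapositive: if $\modu_0$ is not dense in $\modu$, then $\modu_0^{\perp}\neq\{0\}$. The main tool will be Lemma~\ref{lem:hahn_banach2}, which is exactly the separating-functional statement I need; so almost all of the analytic content is already available, and the task reduces to packaging it correctly. Note that Lemma~\ref{lem:orthocompequiv2} gives the converse implication, so together they will yield the full equivalence used in Proposition~\ref{prop:orthcompequiv}'s role.

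First I would pass to the closure. Since $\modu_0$ is an $\alg$-submodule of $\modu$ and the module operations (addition and right $\alg$-multiplication) together with the norm are continuous, the closure $\overline{\modu_0}$ is again a closed $\alg$-submodule of $\modu$. Assuming $\modu_0$ is not dense, we have $\overline{\modu_0}\subsetneq\modu$, so I may choose an element $u_1\in\modu\setminus\overline{\modu_0}$.

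Next I would invoke Lemma~\ref{lem:hahn_banach2} with the closed submodule $\overline{\modu_0}$ playing the role of $\modu_0$ there. This produces a bounded $\alg$-linear map $f:\modu\to\alg$ with $f(u)=0$ for every $u\in\overline{\modu_0}$ and $f(u_1)\neq 0$. In particular $f$ vanishes on $\modu_0\subseteq\overline{\modu_0}$, so by the definition of the orthogonal complement $f\in\modu_0^{\perp}$; and $f(u_1)\neq 0$ forces $f\neq 0$. Hence $\modu_0^{\perp}\neq\{0\}$, which is precisely the contrapositive of the assertion, completing the argument.

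The proof is short because Lemma~\ref{lem:hahn_banach2} already carries the hard part, namely the Hahn--Banach-type extension for $\mat$-modules supplied by Proposition~\ref{prop:hahn_banach}. The only point requiring care is that the lemma is stated for a \emph{closed} submodule, so I must separate $u_1$ from $\overline{\modu_0}$ rather than from $\modu_0$ directly; checking that $\overline{\modu_0}$ is still an $\alg$-submodule and that a functional annihilating $\overline{\modu_0}$ automatically lies in $\modu_0^{\perp}$ is routine bookkeeping. I do not expect any substantive obstacle beyond this reduction to the closure.
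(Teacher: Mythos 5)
Your proof is correct and is essentially the same as the paper's: both arguments take a point $u_1\notin\overline{\modu_0}$, apply Lemma~\ref{lem:hahn_banach2} to the closed submodule $\overline{\modu_0}$ to obtain a bounded $\alg$-linear $f$ vanishing on $\modu_0$ with $f(u_1)\neq 0$, and conclude. The paper merely packages this as the inclusion $(\modu_0^{\perp})^{\perp}\subseteq\overline{\modu_0}$ before invoking $\modu_0^{\perp}=\{0\}$, whereas you state the contrapositive directly; the difference is purely presentational.
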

\begin{proof}
Assume $u\notin\overline{\modu_0}$. 
We show $\overline{\modu_0}\supseteq (\modu_0^{\perp})^{\perp}$
By Lemma~\ref{lem:hahn_banach2}, there exists $f\in\modu'$ such that $f(u)\neq 0$ and $f(u_0)=0$ for any $u_0\in\overline{\modu_0}$.
Thus, $u\notin (\modu_0^{\perp})^{\perp}$.
As a result, $\overline{\modu_0}\supseteq (\modu_0^{\perp})^{\perp}$.
Therefore, if $\modu_0^{\perp}=\{0\}$, then $\overline{\modu_0}\supseteq\modu$, which implies $\modu_0$ is dense in $\modu$.
\end{proof}

In the case of $\alg=\mat$, a generalization of the Riesz-Markov representation theorem with respect to $\clch$ holds.
\begin{prop}\label{prop:representation_finitedim}
Let $\alg=\mat$.
There exists an isomorphism between $\measure$ and $\clch'$.
\end{prop}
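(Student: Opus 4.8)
The plan is to mirror the proof of Proposition~\ref{prop:represention}: I would construct the pairing map in the easy direction and then build its inverse. The essential new difficulty is that, unlike $\bochner{\nu}$, the module $\clch$ contains no nonzero indicator functions $\chi_E 1_{\alg}$, so the inverse \emph{cannot} be defined by $f\mapsto(E\mapsto f(\chi_E 1_{\alg}))$ as before, and must instead be produced through the classical (scalar) Riesz--Markov theorem, exploiting $\alg=\mat$.

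First I would define $h\colon\measure\to\clch'$ by $h(\mu)(u):=\int_{x\in\mcl{X}}d\mu(x)\,u(x)$. This is well defined because $\clch\subseteq\total$, so the left integral exists; it is $\alg$-linear since $\int d\mu(x)(u(x)c)=(\int d\mu(x)\,u(x))c$; and it is bounded because $\Vert h(\mu)(u)\Vert_{\alg}\le\int\Vert u(x)\Vert_{\alg}\,d\vert\mu\vert(x)\le\Vert u\Vert_{\infty}\,\vert\mu\vert(\mcl{X})$ with $\vert\mu\vert(\mcl{X})<\infty$. This is the routine direction, identical in spirit to the first half of Proposition~\ref{prop:represention}.

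For the inverse I would write each $u\in\clch$ as a matrix $u=\sum_{p,q}u_{pq}e_{pq}$ of scalar functions $u_{pq}\in\mcl{C}_0(\mcl{X})$, where $e_{pq}\in\mat$ is the matrix unit; finite-dimensionality of $\alg$ guarantees $u_{pq}\in\mcl{C}_0(\mcl{X})$ precisely when $u\in\clch$. Given $f\in\clch'$, right $\alg$-linearity forces, for scalar $g\in\mcl{C}_0(\mcl{X})$, the relation $f(g e_{pq})=f(g e_{p1})e_{1q}$, so that $[f(g e_{pq})]_{ik}=[f(g e_{p1})]_{i1}\,\delta_{qk}$; hence $f$ is completely determined by the scalar functionals $g\mapsto[f(g e_{p1})]_{i1}$ on $\mcl{C}_0(\mcl{X})$. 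Each of these is bounded and linear, so the classical Riesz--Markov representation theorem yields finite regular complex Borel measures $\mu_{ip}$ with $[f(g e_{p1})]_{i1}=\int g\,d\mu_{ip}$, and I set $h'(f):=\mu:=[\mu_{ip}]_{i,p}$.

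Finally I would verify $\mu\in\measure$ and that $h,h'$ are mutually inverse. Membership reduces to the entrywise statements: finiteness follows from $\vert\mu\vert(\mcl{X})\le\sum_{i,p}\vert\mu_{ip}\vert(\mcl{X})<\infty$ (equivalence of matrix norms in finite dimension, together with $\vert\mu_{ip}\vert(\mcl{X})\le\Vert f\Vert$), and regularity follows from the regularity of each $\mu_{ip}$. For the inversion, evaluating on the generators $g e_{pq}$ gives the direct computation $[h(\mu)(g e_{pq})]_{ik}=\delta_{qk}\int g\,d\mu_{ip}=[f(g e_{pq})]_{ik}$, and uniqueness in the scalar Riesz--Markov theorem yields the reverse identity $h'(h(\mu))=\mu$. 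Since the $g e_{pq}$ span $\clch$ and both $h(\mu)$ and $f$ are $\alg$-linear, equality on generators propagates to all of $\clch$, so $h$ is an isomorphism (compatibility with the $\alg$-module structures being immediate from the construction). I expect the main obstacle to be exactly the unavailability of indicators in $\clch$; once one reduces to matrix entries and invokes the scalar Riesz--Markov theorem, the remaining checks (boundedness, regularity, mutual inversion) are routine consequences of the finite-dimensionality of $\mat$.
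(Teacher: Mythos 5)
Your proposal is correct and takes essentially the same route as the paper's proof: both define $h$ by the left integral and construct the inverse by reducing $f\in\clch'$ to scalar functionals on $\mcl{C}_0(\mcl{X},\mathbb{C})$ through the matrix-unit decomposition and the classical Riesz--Markov theorem. Indeed, by right $\alg$-linearity your functionals $g\mapsto[f(ge_{p1})]_{i1}$ coincide with the paper's $g\mapsto[f(g1_{\alg})]_{i,p}$, so the two constructions produce the same matrix-valued measure.
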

\begin{proof}
We define $h:\measure\to\clch$ in the same manner as Proposition~\ref{prop:represention}.
For $f\in\clch'$, let $f_{i,j}\in\mcl{C}_0(\mcl{X},\mathbb{C})'$ be defined as $f_{i,j}(u)=(f(u1_{\alg}))_{i,j}$ for $u\in\mcl{C}_0(\mcl{X},\mathbb{C})$.
Then, by the Riesz--Markov representation theorem for complex-valued measure, there exists a unique finite complex-valued regular measure $\mu_{i,j}$ such that $f_{i,j}(u)=\int_{x\in\mcl{X}}u(x)d\mu_{i,j}(x)$.
Let $\mu(E):=[\mu_{i,j}(E)]_{i,j}$ for $E\in\mcl{B}$.
Then, $\mu\in\measure$, and we have
\begin{align*}
f(u)&=f\bigg(\sum_{l,l'=1}^m u_{l,l'}e_{l,l'}\bigg)
=\sum_{l,l'=1}^m[f_{i,j}(u_{l,l'})]_{i,j}e_{l,l'}\\
&=\sum_{l,l'=1}^m\bigg[\int_{x\in\mcl{X}}u_{l,l'}(x)d\mu_{i,j}(x)\bigg]_{i,j}e_{l,l'}
=\int_{x\in\mcl{X}}d\mu(x)u(x),
\end{align*}
where $e_{i,j}$ is an $m\times m$ matrix whose $(i,j)$-element is $1$ and all the other elements are $0$.
Therefore, if we define $h':\clch'\to\measure$ as $f\mapsto \mu$, $h'$ is the inverse of $h$, which completes the proof of the proposition.
\end{proof}

As a result, we derive Theorem~\ref{thm:universal_finitedim} as follows:
\begin{proof}[Proof of Theorem~\ref{thm:universal_finitedim}]
For $\mu\in\measure$, $\Phi(\mu)=0$ is equivalent to $\int_{x\in\mcl{X}}d\mu^*(x)u(x)=\blacket{\Phi(\mu),u}_k=0$ for any $u\in\modu_k$.
Thus, by Proposition~\ref{prop:representation_finitedim}, ``$\Phi(\mu)=0\Rightarrow \mu=0$'' is equivalent to ``$f\in\clch'$, $f(u)=0$ for any $u\in\modu_k$ $\Rightarrow$ $f=0$''.
By the definition of $\modu_k^{\perp}$ and Proposition~\ref{prop:orthcompequiv}, $\modu_k$ is dense in $\clch$.
\end{proof}

\if0
For the case of $\alg=\mat$, we introduce the notion of domination for associating $\clch'$ with $\bochner{\nu}$.
\begin{defin}[Dominated]
Let $\mcl{G}$ be a Banach space.
A linear map $f:\clch\to\mcl{G}$ is said to be dominated if there exists a real positive-valued regular Borel measure $\nu$ such that $\Vert f(u)\Vert_{\mcl{G}}\le\int_{x\in\mcl{X}}\Vert u(x)\Vert_{\alg}d\nu(x)$ for any $u\in\clch$.
\end{defin}

The following definition and lemma provide a condition for the domination property.
\begin{defin}
Let $\mcl{G}$ be a Banach space.
For a linear map $f:\clch\to\mcl{G}$, $\VVert f\VVert$ is defined as $\VVert f\VVert:=\sup_{\{u_i\}_{i=1}^n\in\mcl{U}(\mcl{X},\alg)}\sum_{i=1}^n\Vert f(u_i)\Vert_{\mcl{G}}$,
where $\mcl{U}(\mcl{X},\alg)$ is the set of all finite families $\{u_i\}_{i=1}^n$ of $\clch$ that satisfy $\Vert u_i\Vert_{\infty}\le 1$ for $i=1,\ldots,n$ and $\Vert u_i(x)\Vert_{\alg}\Vert u_j(x)\Vert_{\alg}=0$ for any $i\neq j$ and $x\in\mcl{X}$.
\end{defin}
\begin{lemma}[Theorem 3 in Section 19 in~\cite{dinculeanu67}]\label{lem:dominated_condition}
A linear map $f:\clch\to\mcl{G}$ is dominated if and only if $\VVert f\VVert<\infty$.
\end{lemma}

\begin{lemma}\label{prop:dominated_equiv}
If any $f\in\clch'$ is dominated, then 
$\bigcup_{\nu\in\regular}\bochner{\nu}'=\clch'$.
\end{lemma}

\begin{proof}
Let $f\in\bigcup_{\nu\in\regular}\bochner{\nu}'$.
Then, there exists $\nu_0\in\regular$ such that $f\in\bochner{\nu_0}'$.
For $u\in\clch\subseteq\bochner{\nu_0}$, the following inequalities hold:
\begin{equation*}
\Vert f(u)\Vert_{\alg}\le C\int_{x\in\mcl{X}}\Vert u\Vert_{\alg}d\nu_0(x)\le C\Vert u\Vert_{\infty}\nu_0(\mcl{X}),
\end{equation*}
for some $C>0$.
Here, $\Vert u\Vert_{\infty}:=\sup_{x\in\mcl{X}}\Vert u(x)\Vert_{\alg}$.
Thus, $f\in\clch'$.

Meanwhile, let $f\in\clch'$.
Since $f$ is dominated, there exists $\nu_0\in\regular$ such that $\Vert f(u)\Vert_{\alg}\le\int_{x\in\mcl{X}}\Vert u(x)\Vert_{\alg}d\nu_0(x)$ for any $u\in\clch$.
In addition, $\clch$ is dense in $\bochner{\nu_0}$.
Thus, $f$ is extended uniquely to a bounded $\alg$-linear map on $\bochner{\nu_0}$.
Therefore, $f\in\bigcup_{\nu\in\regular}\bdmeasure{\nu}$.
\end{proof}

We show a dominated property for $\mat$.
To show this, the following lemma about a domination property for $\mathbb{C}$ is utilized:
\begin{lemma}[Proposition 2 in Section 19 in~\cite{dinculeanu67}]\label{lem:complex_domination}
Any linear map $f:\mcl{C}_0(\mcl{X},\alg)\to\mathbb{C}$ is dominated.
\end{lemma}

\begin{lemma}\label{prop:matrix_domination}
Any linear map $f:\mcl{C}_0(\mcl{X},\alg)\to\mat$ is dominated.
\end{lemma}
\begin{proof}
Let $\{u_i\}_{i=1}^n\in\mcl{U}(\mcl{X},\alg)$ be defined in the same manner as Lemma~\ref{lem:dominated_condition}.
Let $f_{i,j}:\mcl{C}_0(\mcl{X},\alg)\to\mathbb{C}$ be a linear map defined as $f_{i,j}(u)=f(u)_{i,j}$ for $u\in \mcl{C}_0(\mcl{X},\alg)$,
where $f(u)_{i,j}$ is the $(i,j)$-element of $f(u)$.
We have
\begin{equation*}
\sum_{i=1}^n\Vert f(u_i)\Vert_{\mat}\le m\sum_{i=1}^n\max_{j,l}\vert f_{j,l}(u_i)\vert
\le m\sum_{j,l=1}^m\sum_{i=1}^n\vert f_{j,l}(u_i)\vert,
\end{equation*}
which is finite by Lemma~\ref{lem:complex_domination}.
\end{proof}
\fi

\section{Proofs}\label{sec:proof}
\subsection*{Proof of Theorem~\ref{thm:kme}}
We use the Cauchy-Schwarz inequality for a Hilbert $\alg$-module $\modu$.
\begin{lemma}[Cauchy-Schwarz inequality~\cite{lance95}]\label{lem:c-s}
For $u,v\in\modu$, the following inequality holds:
\begin{equation*}
\vert\blacket{u,v}\vert^2\le\Vert u\Vert^2\blacket{v,v}.
\end{equation*}
\end{lemma}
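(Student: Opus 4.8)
The plan is to reduce the $\alg$-valued inequality to a family of ordinary scalar Cauchy--Schwarz inequalities by testing against positive linear functionals, using the standard $C^*$-algebraic fact that a self-adjoint $c\in\alg$ satisfies $c\ge 0$ if and only if $\omega(c)\ge 0$ for every positive linear functional $\omega$ on $\alg$. Since $\vert\blacket{u,v}\vert^2=\blacket{u,v}^*\blacket{u,v}=\blacket{v,u}\blacket{u,v}$ and $\Vert u\Vert^2=\Vert\blacket{u,u}\Vert_{\alg}$, the assertion is equivalent to showing that the self-adjoint element $c:=\Vert\blacket{u,u}\Vert_{\alg}\blacket{v,v}-\blacket{v,u}\blacket{u,v}$ is positive semi-definite, and by the criterion above it suffices to prove $\omega(c)\ge 0$ for each positive linear functional $\omega$.

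First I would fix such an $\omega$ and observe that $B(x,y):=\omega(\blacket{x,y})$ is a positive semi-definite sesquilinear form on $\modu$ regarded as a complex vector space: positivity of $B(x,x)=\omega(\blacket{x,x})$ follows from $\blacket{x,x}\ge 0$ (inner-product axiom 3) together with positivity of $\omega$, while conjugate symmetry $B(y,x)=\overline{B(x,y)}$ follows from axiom 2 and the identity $\omega(a^*)=\overline{\omega(a)}$ valid for positive functionals. Every such form obeys the elementary scalar Cauchy--Schwarz inequality $\vert B(x,y)\vert^2\le B(x,x)\,B(y,y)$, even in the degenerate case.

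The key step is the choice of test vectors. Writing $a:=\blacket{u,v}\in\alg$, I would apply the scalar inequality to the pair $(v,\,ua)$, where $ua\in\modu$ is formed via right $\alg$-multiplication. By the linearity axioms, $\blacket{v,ua}=\blacket{v,u}a=a^*a$, so $B(v,ua)=\omega(a^*a)=:r\ge 0$, and $\blacket{ua,ua}=a^*\blacket{u,u}a$. Thus the scalar inequality reads $r^2\le\omega(\blacket{v,v})\,\omega(a^*\blacket{u,u}a)$. Next I would invoke the order estimate $\blacket{u,u}\le\Vert\blacket{u,u}\Vert_{\alg}\,1_{\alg}$ and the fact that conjugation $b\mapsto a^*ba$ preserves the order on $\alg$, giving $a^*\blacket{u,u}a\le\Vert\blacket{u,u}\Vert_{\alg}\,a^*a$ and hence $\omega(a^*\blacket{u,u}a)\le\Vert\blacket{u,u}\Vert_{\alg}\,r$. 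Combining yields $r^2\le\Vert\blacket{u,u}\Vert_{\alg}\,\omega(\blacket{v,v})\,r$; dividing by $r$ when $r>0$ (the case $r=0$ being immediate) gives $\omega(\blacket{v,u}\blacket{u,v})=r\le\omega\big(\Vert\blacket{u,u}\Vert_{\alg}\blacket{v,v}\big)$, i.e.\ $\omega(c)\ge 0$. As $\omega$ was arbitrary, $c\ge 0$, which is the claim.

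The main obstacle is not any single hard computation but the clean marshalling of the supporting $C^*$-algebraic facts: that positivity in $\alg$ is detected by positive linear functionals, that $\blacket{u,u}\le\Vert\blacket{u,u}\Vert_{\alg}\,1_{\alg}$ (which uses that $\alg$ is unital, as every von Neumann algebra is), and that $b\mapsto a^*ba$ is order-preserving. Each is standard and I would cite rather than reprove it. The one point genuinely requiring care is the bookkeeping of the inner-product conventions---conjugate-linear in the first slot via the involution, linear in the second via right multiplication---so that the identities $\blacket{v,ua}=a^*a$ and $\blacket{ua,ua}=a^*\blacket{u,u}a$ come out exactly right.
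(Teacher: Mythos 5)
Your proof is correct, and it is essentially the argument in the paper's cited source: the paper states this lemma with a citation to \cite{lance95} and no proof, and the standard proof there (Proposition 1.1 of Lance) is exactly your reduction via positive linear functionals with the test vector $ua$ for $a=\blacket{u,v}$, followed by the order estimate $a^*\blacket{u,u}a\le\Vert\blacket{u,u}\Vert_{\alg}\,a^*a$. Your bookkeeping of the conventions (conjugate-linear first slot, $\blacket{v,ua}=a^*a$, $\blacket{ua,ua}=a^*\blacket{u,u}a$) matches the paper's axioms, so nothing is missing.
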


Let $L_{\mu}:\modu_k\to\alg$ be an $\alg$-linear map defined as $L_{\mu}v:=\int_{x\in\mcl{X}}d\mu^*(x)v(x)$.
The following inequalities are derived by the reproducing property~\eqref{eq:reproducing}, and  Lemma~\ref{lem:c-s}:
\begin{align}
 \Vert L_{\mu}v\Vert_{\alg}
&\le\int_{x\in\mcl{X}}\Vert v(x)\Vert_{\alg}d\vert \mu\vert(x)
=\int_{x\in\mcl{X}}\Vert \blacket{\phi(x),v}\Vert_{\alg}d\vert \mu\vert(x)\nn\\
&\le \Vert v\Vert\int_{x\in\mcl{X}}\Vert \phi(x)\Vert d\vert \mu\vert(x)
\le \vert\mu\vert(\mcl{X})\Vert v\Vert\sup_{x\in\mcl{X}}\Vert\phi(x)\Vert,\label{eq:bounded}
\end{align}
where the first inequality is easily checked for a step function $s(x):=\sum_{i=1}^nc_i\chi_{E_i}(x)$ as follows and thus, it holds for any totally measurable functions:
\begin{align*}
\bigg\Vert \int_{x\in\mcl{X}}d\mu^*(x)s(x)\bigg\Vert_{\alg}
&=\Vert \sum_{i=1}^n\mu(E_i)^*c_i\Vert_{\alg}
\le \sum_{i=1}^n\Vert \mu(E_i)\Vert_{\alg}\Vert c_i\Vert_{\alg}\\
&\le \sum_{i=1}^n\vert \mu\vert(E_i)\Vert c_i\Vert_{\alg}
=\int_{x\in\mcl{X}}\Vert s(x)\Vert_{\alg}d\vert \mu\vert(x),
\end{align*}
Since both $\vert{\mu}\vert(\mcl{X})$ and $\sup_{x\in\mcl{X}}\Vert\phi(x)\Vert$ are finite, inequality~\eqref{eq:bounded} means $L_{\mu}$ is bounded.
Thus, by the Riesz representation theorem for Hilbert $\alg$ modules (Theorem~\ref{thm:riesz}), there exists $u_{\mu}\in\modu_k$ such that $L_{\mu}v=\blacket{u_{\mu},v}$.
By setting $v=\phi(y)$, we have $u_{\mu}(y)=L_{\mu}\phi(y)^*=\int_{x\in\mcl{X}}k(y,x)d\mu(x)$ for $y\in\mcl{X}$.
Therefore, $\Phi(\mu)=u_{\mu}\in\modu_k$ and $\blacket{\Phi(\mu),v}=\int_{x\in\mcl{X}}d\mu^*(x)v(x)$.

\subsection*{Proof of Theorem~\ref{thm:characteristic}}
The following lemma is used to show the injectivity of $\Phi$.
\begin{lemma}\label{lem:injective_equiv}
 $\Phi:\mcl{D}(\mcl{X},\alg)\to\modu_k$ is injective if and only if $\blacket{\Phi(\mu),\Phi(\mu)}\neq 0$ for any nonzero $\mu\in\mcl{D}(\mcl{X},\alg)$.
\end{lemma}
\begin{proof}
($\Rightarrow$) Suppose there exists a nonzero $\mu\in\mcl{D}(\mcl{X},\alg)$ such that $\blacket{\Phi(\mu),\Phi(\mu)}=0$.
Then, $\Phi(\mu)=\Phi(0)=0$ holds, and thus, $\Phi$ is not injective.

($\Leftarrow$) Suppose $\Phi$ is not injective. Then, there exist $\mu,\nu\in\mcl{D}(\mcl{X},\alg)$ such that $\Phi(\mu)=\Phi(\nu)$ and $\mu\neq\nu$, which implies $\Phi(\mu-\nu)=0$ and $\mu-\nu\neq 0$.
\end{proof}

\begin{proof}[Proof of Theorem~\ref{thm:characteristic}]
Let $\mu\in\mcl{D}(\mcl{X},\alg)$, $\mu\neq 0$.
We have
\begin{align*}
 \blacket{\Phi(\mu),\Phi(\mu)}&=\int_{x\in\mathbb{R}^d}\int_{y\in\mathbb{R}^d}d\mu^*(x)k(x,y)d\mu(y)\\
&=\int_{x\in\mathbb{R}^d}\int_{y\in\mathbb{R}^d}d\mu^*(x)\int_{\omega\in\mathbb{R}^d}e^{-\sqrt{-1}(y-x)^T\omega}d\lambda(\omega)d\mu(y)\\
&=\int_{\omega\in\mathbb{R}^d}\int_{x\in\mathbb{R}^d}e^{\sqrt{-1}x^T\omega}d\mu^*(x)d\lambda(\omega)\int_{y\in\mathbb{R}^d}e^{-\sqrt{-1}y^T\omega}d\mu(y)\\
&=\int_{\omega\in\mathbb{R}^d}\hat{\mu}(\omega)^*d\lambda(\omega)\hat{\mu}(\omega).
\end{align*}
Since $\mu$ is a countably additive Borel measure,
for $\mu\neq 0$, $\hat{\mu}\neq 0$ holds.
In addition, by the assumption, $\opn{supp}(\lambda)=\mathbb{R}^d$ holds.
As a result, $\int_{\omega\in\mathbb{R}^d}\hat{\mu}(\omega)^*d\lambda(\omega)\hat{\mu}(\omega)\neq 0$ holds.
By Lemma~\ref{lem:injective_equiv}, $\Phi$ is injective.
\end{proof}

\subsection*{Proof of Theorem~\ref{thm:characteristic2}}
Let $\mu\in\mcl{D}(\mcl{X},\alg)$, $\mu\neq 0$.
We have
\begin{align}
\blacket{\Phi(\mu),\Phi(\mu)}
&=\int_{x\in\mathbb{R}^d}\int_{y\in\mathbb{R}^d}d\mu^*(x)k(x,y)d\mu(y)\nn\\
&=\int_{x\in\mathbb{R}^d}\int_{y\in\mathbb{R}^d}d\mu^*(x)\int_{t\in[0,\infty)}e^{-t\Vert x-y\Vert^2}d\eta(t)d\mu(y)\nn\\
&=\int_{x\in\mathbb{R}^d}\int_{y\in\mathbb{R}^d}d\mu^*(x)\int_{t\in[0,\infty)}\frac{1}{(2t)^{d/2}}\int_{\omega\in\mathbb{R}^d}e^{-\sqrt{-1}(y-x)^T\omega-\frac{\Vert\omega\Vert^2}{4t}}d\omega d\eta(t)d\mu(y)\nn\\
&=\int_{\omega\in\mathbb{R}^d}\hat{\mu}(\omega)^*\int_{t\in[0,\infty)}\frac{1}{(2t)^{d/2}}e^{\frac{-\Vert\omega\Vert^2}{4t}}d\eta(t)\hat{\mu}(\omega)d\omega,\label{eq:radial}
\end{align}
where we applied a formula $e^{-t\Vert x\Vert^2}={(2t)^{-d/2}}\int_{\omega\in\mathbb{R}^d}e^{-\sqrt{-1}x^T\omega-\Vert\omega\Vert^2/(4t)}d\omega$ in the third equality.
Since $\mu$ is a countably additive Borel measure,
for $\mu\neq 0$, $\hat{\mu}\neq 0$ holds.
In addition, since $\opn{supp}(\eta)\neq\{0\}$ holds, $\int_{t\in[0,\infty)}(2t)^{-d/2}e^{-\Vert\omega\Vert^2/(4t)}d\eta(t)$ is positive definite.
As a result, the last formula in Eq.~\eqref{eq:radial} is nonzero. 
By Lemma~\ref{lem:injective_equiv}, $\Phi$ is injective.

\subsection*{Proof of Theorem~\ref{prop:cross-covariance}}
The inner product between $\Phi(\mu_X)$ and $\Phi(\mu_Y)$ is calculated as follows:
\begin{align*}
&\blacket{\Phi(\mu_X),\Phi(\mu_Y)}_k\\
&\qquad=\int_{x\in\mcl{X}^2}\int_{y\in\mcl{X}^2}d\mu_X^*(x)k(x,y)d\mu_Y(y)\\
&\qquad=\bigg[\sum_{l=1}^m\int_{x\in\mcl{X}^2}\int_{y\in\mcl{X}^2}d(X_l,X_i)_*P(x)\tilde{k}_1(x_1,y_1)\tilde{k}_2(x_2,y_2)d(Y_l,Y_j)_*P(y)\bigg]_{i,j}\\
&\qquad=\bigg[\sum_{l=1}^m\int_{\omega\in\Omega}\int_{\eta\in\Omega}dP(\omega)\blacket{\tilde{\psi}_1(X_l(\omega)),\tilde{\psi}_1(Y_l(\eta))}\blacket{\tilde{\psi}_2(X_i(\omega)),\tilde{\psi}_2(Y_j(\eta))}dP(\eta)\bigg]_{i,j}\\
&\qquad=\bigg[\sum_{l=1}^m\blacket{\Sigma_{X_l,X_i},\Sigma_{Y_l,Y_j}}_{\opn{HS}}\bigg]_{i,j}
\end{align*}
Since $\Sigma_{X_i,X_j}$ is a Hilbert--Schmidt operator for any $i,j\in\{1,\ldots,m\}$, $\Sigma_X$ is also a Hilbert--Schmidt operator, and we have
\begin{equation*}
\opn{tr}(\blacket{\Phi(\mu_X),\Phi(\mu_Y)})=\sum_{i=1}^m\sum_{l=1}^m\opn{tr}(\Sigma_{X_l,X_i}^*\Sigma_{Y_l,Y_i})=\sum_{i=1}^m\opn{tr}(\Sigma_{X}^*\Sigma_{Y})=\blacket{\Sigma_{X},\Sigma_{Y}}_{\opn{HS}}.
\end{equation*}
As a result, $\opn{tr}(\vert \Phi(\mu_X)-\Phi(\mu_Y)\vert)=\Vert \Sigma_X-\Sigma_Y\Vert_{\opn{HS}}$ holds.

\subsection*{Proof of Theorem~\ref{prop:inprod_equiv}}
Let $M_i=\ket{\psi_i}\bra{\psi_i}$ for $i=1,\ldots,m$.
The inner product between $\Phi(\mu\rho_1)$ and $\Phi(\mu\rho_2)$ is calculated as follows:
\begin{align*}
 \blacket{\Phi(\mu\rho_1),\Phi(\mu\rho_2)}&=\int_{x\in\mcl{X}}\int_{y\in\mcl{X}}\rho_1^*\mu^*(x)k(x,y)\mu\rho_2(y)
=\sum_{i,j=1}^m\rho_1^*M_ik(\ket{\psi_i},\ket{\psi_{j}})M_j\rho_2.
\end{align*}
Since $k(\ket{\psi_i},\ket{\psi_j})=M_iM_j$ and $\{\ket{\psi_1},\ldots,\ket{\psi_m}\}$ is orthonormal, equality $ \blacket{\Phi(\mu\rho_1),\Phi(\mu\rho_2)}=\sum_{i=1}^m\rho_1^*M_i\rho_2$ holds.
By using the equality $\sum_{i=1}^mM_i=I$, $\opn{tr}(\sum_{i=1}^m\rho_1^*M_i\rho_2)=\opn{tr}(\sum_{i=1}^mM_i\rho_2\rho_1^*)=\opn{tr}(\rho_2\rho_1^*)$ hold, which completes the proof of the theorem.

\subsection*{Proof of Proposition~\ref{prop:mmd}}
\if0
By Cauchy-Schwarz inequality~\cite{lance95} in $\modu_k$, we have
 \begin{align*}
  \bigg\Vert \int_{x\in\mcl{X}}d\mu^*u(x)-\int_{x\in\mcl{X}}d\nu^*u(x)\bigg\Vert_{\alg}=\Vert \blacket{\Phi(\mu-\nu),u}\Vert_{\alg}
\le \Vert u\Vert\Vert \Phi(\mu-\nu)\Vert=\Vert\Phi(\mu-\nu)\Vert,
 \end{align*}
for all $u\in\modu_k$ such that $\Vert u\Vert=1$.
If $\Vert\Phi(\mu-\nu)\Vert=0$, then $\Phi(\mu-\nu)= 0$.
Thus, $\Vert \blacket{\Phi(\mu-\nu),u}\Vert_{\alg}=\Vert \Phi(\mu-\nu)\Vert$ holds.
For the case of $\Vert\Phi(\mu-\nu)\Vert\neq 0$, if we set $u=\Phi(\mu-\nu)\Vert \Phi(\mu-\nu)\Vert^{-1}$,
then $\Vert u\Vert=1$ and $\Vert \blacket{\Phi(\mu-\nu),u}\Vert_{\alg}=\Vert \Phi(\mu-\nu)\Vert$ holds, which implies $\Vert \Phi(\mu-\nu)\Vert$ is the maximum of $\Vert \int_{x\in\mcl{X}}d\mu^*f(x)-\int_{x\in\mcl{X}}d\nu^*f(x)\Vert_{\alg}$.
\fi

By Lemma~\ref{lem:c-s}, we have
 \begin{align*}
  \bigg\vert \int_{x\in\mcl{X}}d\mu^*u(x)-\int_{x\in\mcl{X}}d\nu^*u(x)\bigg\vert_{\alg}=\vert \blacket{\Phi(\mu-\nu),u}\vert_{\alg}
\le \Vert u\Vert\vert \Phi(\mu-\nu)\vert\le\vert\Phi(\mu-\nu)\vert,
 \end{align*}
for any $u\in\modu_k$ such that $\Vert u\Vert\le 1$.
Let $\epsilon>0$.
We put $v=\Phi(\mu-\nu)$ and $u_{\epsilon}=v(\vert v\vert+\epsilon 1_{\alg})^{-1}$.
Then, $\vert v\vert^2\le (\vert v\vert+\epsilon 1_{\alg})^2$ holds.
By multiplying $(\vert v\vert+\epsilon 1_{\alg})^{-1}$ on the both sides, we have $\vert u_{\epsilon}\vert^2\le 1_{\alg}$.
Thus, $\Vert u_{\epsilon}\Vert\le 1$.
In addition, the following is derived:
\begin{align*}
(\vert v\vert+\epsilon 1_{\alg})\vert v\vert^2-(\vert v\vert^2-\epsilon^21_{\alg})(\vert v\vert+\epsilon 1_{\alg})
=\epsilon^2(\vert v\vert+\epsilon 1_{\alg})\ge 0.
\end{align*}
By multiplying $(\vert v\vert+\epsilon 1_{\alg})^{-1}$ on the both sides, we have $\vert\blacket{v,u_{\epsilon}}\vert_{\alg}+\epsilon 1_{\alg}-\vert v\vert\ge 0$,
which implies $\Vert \vert v\vert-\blacket{v,u_{\epsilon}}\Vert_{\alg}\le\epsilon$, and $\lim_{\epsilon\to 0}\blacket{v,u_{\epsilon}}=\vert v\vert$.
Since $\blacket{v,u_{\epsilon}}\le d$ for any upper bound $d$ of $\{\vert\blacket{v,u}\vert\mid\ \Vert u\Vert\le 1\}$, $\vert v\vert\le d$ holds.
As a result, $\vert v\vert=\vert \Phi(\mu-\nu)\vert$ is the supremum of $\vert \int_{x\in\mcl{X}}d\mu^*u(x)-\int_{x\in\mcl{X}}d\nu^*u(x)\vert_{\alg}$.

\subsection*{Proof of Proposition~\ref{prop:pca}}
The objective function in Eq.~\ref{eq:reconstruction} is transformed as follows:
\begin{align*}
&\sum_{i=1}^n\opn{tr}\bigg(\bigg\vert\Phi(\mu_i)-\sum_{j=1}^sp_j\blacket{p_j,\Phi(\mu_i})\bigg\vert^2\bigg)\\
&\qquad=\sum_{i=1}^n\opn{tr}(\vert\Phi(\mu_i)\vert^2)-\opn{tr}\bigg(\sum_{i=1}^n\sum_{j=1}^s\vert\blacket{p_j,\Phi(\mu_i})\vert^2\bigg).
\end{align*}
Thus, minimization problem~\eqref{eq:reconstruction} is equal to the following maximization problem:
\begin{equation}
\max_{\substack{p_j:\mbox{\scriptsize ONS},\\ \blacket{p_j,p_j}: \mbox{\small rank1}}}\opn{tr}\bigg(\sum_{i=1}^n\sum_{j=1}^s\vert\blacket{p_j,\Phi(\mu_i})\vert^2\bigg).\label{eq:reconstruction_max}
\end{equation}

Since $\alg$ is a von Neumann-Algebra, $\opn{Span}\{\Phi(\mu_1),\ldots,\Phi(\mu_n)\}$ is orthogonally complemented. 
Thus, $p_j$ is represented as $p_j=p_j^{\parallel}+p_j^{\perp}$ for $p_j^{\parallel}\in\opn{Span}\{\Phi(\mu_1),\ldots,\Phi(\mu_n)\}$ and $p_j^{\perp}\perp\opn{Span}\{\Phi(\mu_1),\ldots,\Phi(\mu_n)\}^{\perp}$.
Let $p_j^{\parallel}=\sum_{i=1}^n\Phi(\mu_i)c_{i,j}$ for some $c_{i,j}\in\alg$.
By substituting $p_j=\sum_{i=1}^n\Phi(\mu_i)c_{i,j}+p_j^{\perp}$ to the objective function of maximization problem~\eqref{eq:reconstruction_max}, we have
\begin{align*}
\sum_{j=1}^s\opn{tr}\bigg(\sum_{i=1}^n\vert\blacket{p_j,\Phi(\mu_i})\vert^2\bigg)
&=\sum_{j=1}^s\opn{tr}\bigg(\sum_{i=1}^n\sum_{l,l'=1}^nc_{l,j}^*\blacket{\Phi(\mu_l),\Phi(\mu_i)}\blacket{\Phi(\mu_i),\Phi(\mu_{l'})}c_{l',j}\bigg)\\
&=\sum_{j=1}^s\opn{tr}(c_j^*G^2c_j)=\sum_{j=1}^s\opn{tr}\Big(\big(\sqrt{G}c_j\big)^*G\big(\sqrt{G}c_j\big)\Big),
\end{align*}
where $c_j=[c_{1,j},\ldots,c_{n,j}]^T$.
Since $\{p_1,\ldots,p_s\}$ is an ONS and $\blacket{p_i,p_i}$ is rank-one, $c_i^*Gc_j=0$ for $i\neq j$ and $c_i^*Gc_i$ is a rank-one projection.
Therefore, any $c_j$ that satisfies $\sqrt{G}c_j=[v_j,0,\ldots,0]$ attains the maximum of problem~\eqref{eq:reconstruction_max}.
Thus, $p_j=\sum_{i=1}^n\Phi(\mu_i)c_{i,j}$, where $c_j=\lambda_j^{-1/2}[v_j,0,\ldots,0]$ is a solution of maximization problem~\eqref{eq:reconstruction_max}, thus, that of minimization problem~\eqref{eq:reconstruction}.

\end{document}